%% file: main.tex
\PassOptionsToPackage{dvipsnames,table,xcdraw}{xcolor}
\documentclass{article} 

\usepackage[final]{colm2026_conference}

\input{utils/comment}
\input{utils/general_utils}

\input{utils/includes}
\input{utils/math_utils}
\input{utils/custom_style}

\usepackage{algorithm}
\usepackage{amsmath}
\usepackage{amssymb}
\usepackage{url}
\usepackage{array}
\usepackage{microtype}
\usepackage{graphicx}
\usepackage{lineno}
\usepackage{subfigure}
\usepackage{bm}
\usepackage{dsfont}
\usepackage{mathtools}
\usepackage{nccmath}
\usepackage{amssymb}
\usepackage{multirow}
\usepackage{booktabs}
\usepackage{varwidth}
\usepackage{pifont}
\usepackage{makecell}
\usepackage{wrapfig}
\usepackage{varwidth}
\usepackage{makecell}
\usepackage{enumitem}
\usepackage{adjustbox}
\usepackage{graphicx,calc}
\usepackage{amsfonts,amsthm,bm}
\usepackage[flushleft]{threeparttable}
\usepackage{pifont}
\usepackage{ulem}
\usepackage{tabularx}
\usepackage{arydshln}
\usepackage{float}
\tcbuselibrary{breakable,theorems,skins}
\usepackage{cleveref}

\usepackage{pgfmath}
\usepackage{siunitx}
\usepackage{listings}
\title{Capacity-Dependent Effects of Data Selection for Reasoning}

\author{Cuong Dang \& Hoang Anh Just \& Ruoxi Jia \\
Department of Electrical and Computer Engineering\\
Virginia Tech\\
Blacksburg, VA, USA \\
\texttt{\{cuongdc,just,ruoxijia\}@vt.edu} \\
}

\begin{document}

\definecolor{SMALLCOLOR}{HTML}{38768D}
\definecolor{LARGECOLOR}{HTML}{AC6055}

\ifcolmsubmission
\linenumbers
\fi

\maketitle

\begin{figure}[ht]
    \centering
    \includegraphics[width=\linewidth]{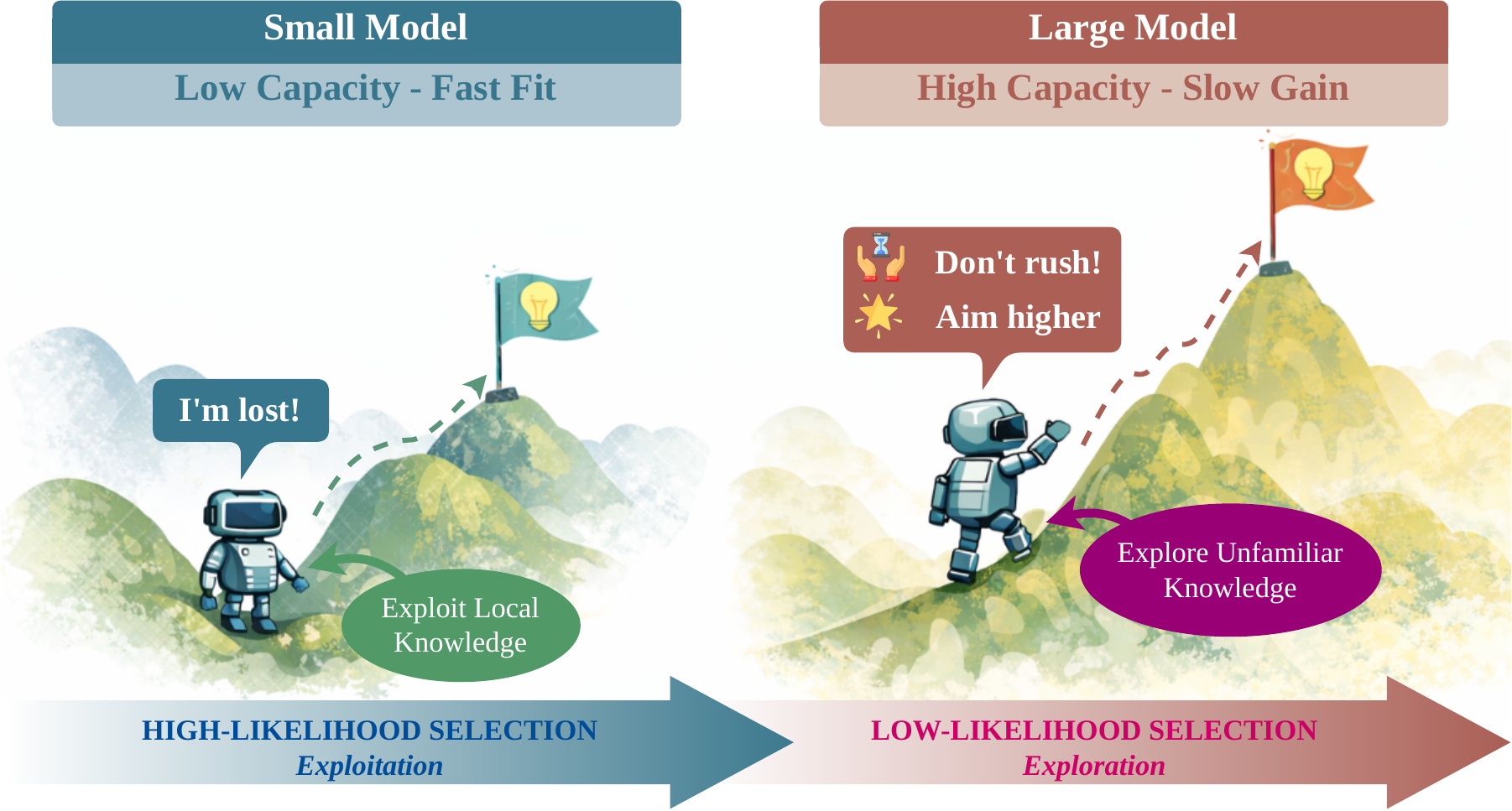}
    \caption{\textbf{Capacity-dependent learning dynamics under likelihood-based data selection.}
The illustration conceptualizes the optimization landscape faced by models with different capacities when learning reasoning tasks. \textbf{\textcolor{SMALLCOLOR}{A small model}} (\textbf{\textcolor{SMALLCOLOR}{left}}) can only climb to a nearby local peak, representing limited capability to learn from distant supervision signals. When trained on low-likelihood examples---responses far from its current policy---it struggles to escape local minima and therefore benefits from high-likelihood data, which allows it to exploit existing knowledge and achieve incremental improvements. In contrast, \textbf{\textcolor{LARGECOLOR}{a large model}} (\textbf{\textcolor{LARGECOLOR}{right}}) possesses sufficient representational capacity and optimization flexibility to traverse a longer path across the landscape. Although learning from low-likelihood data may require longer training, the model can ultimately reach a higher global peak, approaching the teacher distribution and acquiring stronger reasoning ability.}
    \label{fig:thumbnail}
\end{figure}

\input{sections/Abstract}

\input{sections/Introduction}

\input{sections/Related_Works}

\input{sections/Problem_Setup}

\input{sections/Empirical_Results}

\input{sections/Deep_Analysis}

\input{sections/Theory}

\input{sections/Conclusion}

\input{sections/Acknowledgement}

\bibliography{colm2026_conference}
\bibliographystyle{colm2026_conference}

\input{sections/Appendix}

\end{document}

%% file: utils/general_utils.tex
\usepackage{pifont}

\usepackage{color, colortbl}
\definecolor{Gray}{gray}{0.93}
\definecolor{Orange}{rgb}{1,0.5,0}
\definecolor{DGray}{gray}{0.83}
\definecolor{LightCyan}{rgb}{0.88,1,1}
\definecolor{pastelgreen}{HTML}{6fc276}
\definecolor{pastelred}{HTML}{ff746c}
\definecolor{pastelpurple}{HTML}{c9a0dc}
\definecolor{pastelorange}{HTML}{ff964f}

\usepackage[T1]{fontenc}

%% file: utils/includes.tex
\usepackage{microtype}
\usepackage{graphicx}
\usepackage{subfigure}
\usepackage{booktabs}

\usepackage{wrapfig}
\usepackage{pifont}

\usepackage{blindtext}
\usepackage{lipsum}

\usepackage{multirow}
\usepackage{graphicx}
\usepackage{listings}

\usepackage{bbm}

\usepackage [english]{babel}
\usepackage [autostyle, english = american]{csquotes}
\usepackage{amsmath}
\usepackage{amssymb}
\usepackage{mathtools}
\usepackage{amsthm}

\definecolor{mycitecolor}{HTML}{3498DC}
\definecolor{mylinkcolor}{HTML}{E74D3B}
\definecolor{myurlcolor}{HTML}{980000}
\definecolor{mydarkgreen}{HTML}{6a994e}
\definecolor{myorange}{HTML}{E7730D}
\definecolor{myblue}{HTML}{4594c1}
\usepackage[colorlinks=true,linkcolor=mylinkcolor,citecolor=mycitecolor,urlcolor=myurlcolor]{hyperref}

\usepackage[capitalize,noabbrev]{cleveref}
\usepackage{fontawesome}

\theoremstyle{plain}
\newtheorem{theorem}{Theorem}[section]
\newtheorem{proposition}[theorem]{Proposition}
\newtheorem{lemma}[theorem]{Lemma}

\theoremstyle{definition}

\theoremstyle{remark}

\usepackage[textsize=tiny]{todonotes}
\usepackage{thmtools, thm-restate}

\usepackage{pifont}
\usepackage{url}
\usepackage[most]{tcolorbox}
\usepackage{lipsum}
\usepackage{wrapfig}
\usepackage{booktabs}
\usepackage{multirow,mathtools } 

\usepackage{adjustbox}
\MakeOuterQuote{"}

%% file: utils/math_utils.tex
\usepackage{amsmath,amsfonts,bm}

\def\eqref#1{(\ref{#1})}

\def\1{\bm{1}}

\DeclareMathAlphabet{\mathsfit}{\encodingdefault}{\sfdefault}{m}{sl}
\SetMathAlphabet{\mathsfit}{bold}{\encodingdefault}{\sfdefault}{bx}{n}

\def\dd{{\textrm{d}}}

\DeclareMathOperator*{\argmax}{arg\,max}
\DeclareMathOperator*{\argmin}{arg\,min}

\newcommand{\bx}{\mathbf{x}}

\newcommand{\p}[1]{\left(#1\right)}      

%% file: utils/custom_style.tex
\usepackage{color, colortbl}
\usepackage{xcolor}
\usepackage{colortbl}
\definecolor{mycitecolor}{HTML}{3498DC}
\definecolor{mylinkcolor}{HTML}{E74D3B}
\definecolor{myurlcolor}{HTML}{980000}
\definecolor{mydarkgreen}{HTML}{6a994e}
\definecolor{myorange}{HTML}{E7730D}
\definecolor{myblue}{HTML}{4594c1}
\definecolor{myyellow}{HTML}{FFDF00}
\definecolor{mypurple}{HTML}{C025CB}
\definecolor{RQframe}{HTML}{CFAB8D}
\definecolor{URAGColor}{HTML}{DE1A58}

\definecolor{sparse1}{HTML}{019E4F}
\definecolor{sparse2}{HTML}{E94127}
\definecolor{sparse3}{HTML}{017DC7}
\definecolor{dense1}{HTML}{3B6C73}
\definecolor{dense2}{HTML}{C1565E}
\definecolor{dense3}{HTML}{4682B4}

\definecolor{Appendixcolor}{HTML}{E74D3B}
\definecolor{Fgcolor}{HTML}{DD0303}
\definecolor{Sectioncolor}{HTML}{E74D3B}
\definecolor{Subsectioncolor}{HTML}{E74D3B}
\definecolor{Equationcolor}{HTML}{E74D3B}
\definecolor{Tablecolor}{HTML}{E74D3B}
\definecolor{Observationcolor}{HTML}{6a994e}
\definecolor{Takeawaycolor}{HTML}{9381ff}
\definecolor{Algcolor}{HTML}{9305f2}
\definecolor{Promptcolor}{HTML}{FF6C0C}
\definecolor{RQcolor}{HTML}{001BB7}
\definecolor{Theocolor}{HTML}{0744fa}
\definecolor{Lemmacolor}{HTML}{ea00ff}
\definecolor{Propositioncolor}{HTML}{de07fa}
\definecolor{Inequalitycolor}{HTML}{ff9400}
\definecolor{Assumptioncolor}{HTML}{E76F2E}

\newcommand{\Appendix}{\textcolor{Appendixcolor}{Appendix}}
\newcommand{\Figure}{\textcolor{Fgcolor}{Figure}}
\newcommand{\Section}{\textcolor{Sectioncolor}{Section}}

\newcommand{\Equation}{\textcolor{Equationcolor}{Equation}}
\newcommand{\Table}{\textcolor{Tablecolor}{Table}}

\newcommand{\Theorem}{\textcolor{Theocolor}{Theorem}}
\newcommand{\Lemma}{\textcolor{Lemmacolor}{Lemma}}
\newcommand{\Proposition}{\textcolor{Propositioncolor}{Proposition}}
\newcommand{\Inequality}{\textcolor{Inequalitycolor}{Inequality}}
\newcommand{\Assumption}{\textcolor{Assumptioncolor}{Assumption}}

\usepackage[normalem]{ulem}
\usepackage{enumitem}
\usepackage{titletoc}

\newcommand{\Appendixref}[1]{\Appendix~{\hypersetup{linkcolor=Appendixcolor}\ref{#1}}}
\newcommand{\Figureref}[1]{\Figure~{\hypersetup{linkcolor=Fgcolor}\ref{#1}}}
\newcommand{\Sectionref}[1]{\Section~{\hypersetup{linkcolor=Sectioncolor}\ref{#1}}}

\newcommand{\Equationref}[1]{\Equation~{\hypersetup{linkcolor=Equationcolor}\ref{#1}}}
\newcommand{\Theoremref}[1]{\Theorem~{\hypersetup{linkcolor=Theocolor}\ref{#1}}}
\newcommand{\Lemmaref}[1]{\Lemma~{\hypersetup{linkcolor=Lemmacolor}\ref{#1}}}
\newcommand{\Propositionref}[1]{\Proposition~{\hypersetup{linkcolor=Propositioncolor}\ref{#1}}}
\newcommand{\Tableref}[1]{\Table~{\hypersetup{linkcolor=Tablecolor}\ref{#1}}}

\newcommand{\Assumptionref}[1]{\Assumption~{\hypersetup{linkcolor=Assumptioncolor}\ref{#1}}}
\newcommand{\Inequalityref}[1]{\Inequality~{\hypersetup{linkcolor=Inequalitycolor}\ref{#1}}}

\usepackage[most]{tcolorbox}

\newcounter{rq}

\newcounter{takeaway}
\renewcommand{\thetakeaway}{\arabic{takeaway}}

\newenvironment{takeaway}[1][]%
{%
    \refstepcounter{takeaway}
    \tcolorbox[
        enhanced,
        colback=white,
        colframe=white,
        leftrule=0.4mm,
        rightrule=0.4mm,
        toprule=0.4mm,
        bottomrule=0.4mm,
        arc=0mm,
        left=0pt,
        right=0pt,
        top=2pt,
        bottom=2pt,
        breakable,
        borderline north={0.4mm}{0pt}{Takeawaycolor},
        borderline south={0.4mm}{0pt}{Takeawaycolor}
    ]
    \textbf{\textcolor{Takeawaycolor}{\textit{Takeaway~\thetakeaway}}}
    \ifx\relax#1\relax\else~(\textit{#1}).\fi%
}
{%
    \endtcolorbox
}

\newcounter{observation}
\renewcommand{\theobservation}{\arabic{observation}}

\newenvironment{observation}[1][]%
{%
    \refstepcounter{observation}
    \tcolorbox[
        enhanced,
        colback=white,
        colframe=white,
        leftrule=0.4mm,
        rightrule=0.4mm,
        toprule=0.4mm,
        bottomrule=0.4mm,
        arc=0mm,
        left=0pt,
        right=0pt,
        top=2pt,
        bottom=2pt,
        breakable,
        borderline north={0.4mm}{0pt}{mydarkgreen!80!black},
        borderline south={0.4mm}{0pt}{mydarkgreen!80!black}
    ]
    \textbf{\textcolor{mydarkgreen!80!black}{\textit{Observation~\theobservation}}}
    \ifx\relax#1\relax\else~(\textit{#1}).\fi%
}
{%
    \endtcolorbox
}

\newcounter{promptbox}

\newcommand{\keyquestion}[1]{%
\begin{tcolorbox}[
    enhanced,
    colback=myblue!8!white,
    colframe=myblue,
    leftrule=2mm,
    rightrule=0mm,
    toprule=0mm,
    bottomrule=0mm,
    arc=0mm,
    left=5pt,
    right=5pt,
    top=5pt,
    bottom=5pt,
    breakable,
    leftlower=2mm,
    leftupper=2mm,
]
#1
\end{tcolorbox}
}

\newcommand{\contribution}[1]{%
\begin{tcolorbox}[
    enhanced,
    colback=mypurple!8!white,
    colframe=mypurple,
    leftrule=2mm,
    rightrule=0mm,
    toprule=0mm,
    bottomrule=0mm,
    arc=0mm,
    left=5pt,
    right=5pt,
    top=5pt,
    bottom=5pt,
    breakable,
    leftlower=2mm,
    leftupper=2mm
]
\normalsize 
\textit{#1}
\end{tcolorbox}
}

%% file: sections/Abstract.tex
\begin{abstract}
In reasoning supervised fine-tuning, candidate responses for the same instruction can differ substantially in how well they match the student’s current distribution. Recent likelihood-based response selection methods suggest that responses closer to the student distribution provide more effective supervision, motivating the hypothesis that high-likelihood responses may generally be preferable for fine-tuning. In this paper, we revisit this intuition and show that the value of likelihood-based data selection depends critically on model capacity and training duration. Through controlled experiments on mathematical reasoning, using students ranging from 1.5B to 8B parameters and supervision generated by stronger teacher models, we observe a clear \emph{capacity-dependent} ``{\color{SMALLCOLOR}\textbf{Fast-Fit}} / {\color{LARGECOLOR}\textbf{Slow-Gain}}'' pattern. High-likelihood data provides faster and more stable early improvements, especially for smaller models, but low-likelihood data becomes increasingly beneficial for larger models when training is allowed to continue longer. To explain this phenomenon, we analyze learning dynamics, showing that small models often fail to absorb low-likelihood supervision and instead fall into shallow or repetitive behaviors, while larger models are better able to move toward the teacher distribution under such data. We further provide a capacity-constrained theoretical view of distillation that clarifies how data difficulty, data span, and student capacity jointly govern transfer. Overall, our findings show that effective data selection for reasoning should be aware of model capacity and computing budget rather than based on a single universal preference for high-likelihood supervision.
\end{abstract}

%% file: sections/Introduction.tex
\section{Introduction}

Large language models (LLMs) are trained on massive text corpora to predict the next token, enabling them to generate fluent language and perform a wide range of tasks such as reasoning, coding, and question answering. After this broad pretraining stage, a common next step is supervised fine-tuning (SFT), where the model is trained on curated instruction-response pairs to sharpen useful behaviors and elicit latent capabilities.

In SFT for reasoning, supervision is highly heterogeneous: even for the same instruction, candidate responses can differ substantially in how well they match the student’s current distribution. Some responses are already close to what the model can produce, while others are much harder for it to reproduce. Choosing which response to train on for each instruction therefore directly affects optimization stability, sample efficiency, and the kind of reasoning behavior the student ultimately acquires. 

Recent work has begun to explore this problem of response selection explicitly. In particular, GRAPE~\citep{zhang2025the} proposes a likelihood-based response selection strategy: for each instruction, it selects the response with the highest probability under the target model from a pool of candidate answers. We focus on GRAPE in particular because it is one of the earliest works to study response selection in instruction tuning, and because its simple likelihood-based rule, which incurs substantially lower overhead than alternatives~\citep{xia2024less,yang2024smalltolarge}, was shown to be effective in the settings originally studied. The underlying intuition of GRAPE is straightforward: supervision should respect the learner's current distribution, and examples that are already more likely under the student may be easier to absorb and thus more effective for fine-tuning. At the same time, a long and influential line of work across active learning and data selection advocates for the \textit{opposite} philosophy---that models learn most from \textit{hard} examples, i.e., those with low log-likelihood under the current model~\citep{lewis1995sequential,shrivastava2016training,robinsoncontrastive,paul2021deep,lin2024not}. From this perspective, easy, high-likelihood samples are precisely the ones to \textit{avoid}, as they contribute little gradient signal and fail to push the model beyond its current capability boundary.

\keyquestion{
Hence, this raises \textbf{fundamental questions}: \textbf{\ding{182}} \emph{Does high-likelihood data always work best?} (\Sectionref{section:validate}), \textbf{\ding{183}} \emph{Why} does it help in some settings but fail in others? (\Sectionref{sec:learning_dynamics_analysis}), \textbf{\ding{184}} \emph{What} is the \emph{general mechanism} behind likelihood-based data selection? (\Sectionref{sec:theory})
}

In this work, we study these questions through the lens of model capacity and learning dynamics. Our central hypothesis is that the value of data selection depends critically on whether the student model has sufficient capacity to benefit from challenging supervision. Through controlled experiments across model scales, as shown in \Figureref{fig:thumbnail}, we observe a clear capacity-dependent pattern. For small models, selecting low-likelihood data is often ineffective: the model fails to move meaningfully toward the teacher distribution, exhibits unstable training dynamics, and tends to generate repetitive or shallow outputs rather than genuinely improved reasoning. In contrast, larger models can benefit substantially from low-likelihood data, especially when trained long enough, because such data pushes them beyond their current distribution and enables stronger adaptation toward the teacher's reasoning behavior. We also find that training duration matters: high-likelihood data often provides rapid early gains, whereas low-likelihood data may yield slower but ultimately greater improvements for sufficiently capable models. These observations suggest that data selection for reasoning should not be treated as a one-size-fits-all rule. Instead, the effectiveness of high- or low-likelihood supervision is governed by an interaction between data distribution, model capacity, and computing budget. 

\contribution{Overall, our \textbf{contributions are threefold}: \textbf{\ding{182}} We reveal a capacity-dependent effect of likelihood-based data selection for reasoning, showing that high-likelihood data is not universally optimal. \textbf{\ding{183}} We explain this phenomenon through optimization and learning dynamics. \textbf{\ding{184}} We develop a thereotical capacity-constrained view of distillation showing that knowledge transfer is jointly governed by initialization, knowledge gap, data space, and student capacity.}

%% file: sections/Related_Works.tex
\section{Related Works}
\textbf{SFT, Distillation \& Data Engineering for Reasoning.} SFT has emerged as a central mechanism for eliciting reasoning in LLMs, with prior work showing that training on rationales can substantially improve multi-step reasoning. For instance, \citet{zelikman2022star} demonstrate that reasoning can be bootstrapped by iteratively generating and fine-tuning on successful rationales. More recently, the success of DeepSeek~\citep{guo2025deepseek} highlighted the effectiveness of distilling reasoning ability from powerful teacher models into smaller students. This distillation paradigm has since been adopted by a series of open-source projects, including OpenR1~\citep{openr1}, OpenThoughts~\citep{guha2026openthoughts}, AceReason~\citep{liu2026acereasonnemotron}, and NVIDIA OpenMathReasoning~\citep{moshkov2025aimo2}. Yet, given a large pool of teacher-generated responses, an important question is which responses are most useful for student learning. \textit{On-Policy Distillation}~\citep{agarwal2024onpolicy} argues that supervision closer to the student’s own generation distribution is more learnable, and \textit{GRAPE}~\citep{zhang2025the} applies this intuition to instruction tuning by selecting responses that best fit the target model. These works collectively suggest that the effectiveness of reasoning supervision depends not only on teacher quality, but also on its compatibility with the learner’s current distribution. \textit{In this paper, we revisit this hypothesis and show that while high-likelihood data often yields faster and more stable optimization, low-likelihood data can become increasingly beneficial over longer training horizons because it carries knowledge beyond the student’s current capabilities}. 


\paragraph{Context-Dependent Data Selection.} More broadly, our work connects to a line of research showing that there is no universally optimal rule for difficulty-based data selection, and that its effectiveness depends on the learning context. Curriculum and self-paced learning frameworks suggest that the usefulness of harder examples varies with the learner’s current stage, motivating adaptive or easy-to-hard training schedules rather than fixed policies~\citep{bengio2009curriculum,kumar2010self,platanios2019competence,lalor2020dynamic}. This dependence also arises with respect to the data regime: \citet{sorscher2022beyond} show that when data is scarce, retaining easy examples is preferable as they capture coarse-grained structure, whereas when data is abundant, harder examples become more valuable as easy ones grow increasingly redundant. More generally, recent studies suggest that no single data selection strategy consistently dominates across settings, with performance depending on factors such as training time budget, data composition, and representation~\citep{wu2020curricula,du2025disentangling}. Closest to our setting, prior work has also argued that the usefulness of difficult supervision depends on model capacity, but in different settings and with different notions of difficulty. \citet{gao2025principled} study this question in preference learning, where difficulty is defined over preference optimization examples rather than reasoning traces. \citet{li2025small} study reasoning distillation, but operationalize difficulty through proxies such as the length of reasoning traces and teacher capability, showing that small models often benefit more from shorter chain-of-thoughts or weaker teachers. \textit{In contrast, our work focuses on reasoning distillation under likelihood-based data selection, where difficulty is measured directly by the student's likelihood of the teacher response. This provides a student-grounded notion of difficulty, rather than a student-independent proxy such as trace length. Beyond this difference in setup, we characterize when difficulty-based selection helps in this likelihood-based regime: the relative value of high- versus low-likelihood supervision depends jointly on model capacity and training duration. We further introduce a new axis, model capacity, into the theory of when difficulty-based selection helps, which prior work has mainly characterized along two axes: a data-size axis~\citep{sorscher2022beyond} and a learning-stage axis~\citep{weinshall2018curriculum}.}

%% file: sections/Problem_Setup.tex
\section{Background}
\label{sec:background}
Following a common practice in modern LLM development, smaller language models are often trained on supervision generated by stronger teacher models. For instance, AceReason-Nemotron 1.1~\citep{liu2026acereasonnemotron} is trained on data generated by DeepSeek-R1~\citep{guo2025deepseek}, while DASD-4B-Thinking~\citep{yan2026distribution} is supervised-fine-tuned from Qwen3-4B-Instruct-2507~\citep{yang2025qwen3technicalreport} using teacher responses produced by gpt-oss-120b~\citep{openai2025gptoss120bgptoss20bmodel}. Since the pool of teacher-generated responses can be very large, a central question is how to select the most useful supervision. GRAPE~\citep{zhang2025the} addresses this by favoring responses that are more aligned with the student’s distribution.

Let $\mathcal{Q}=\{x_i\}_{i=1}^N$ denote the instruction pool and let $\mathcal{T}$ denote the set of teacher models. For each teacher \(T \in \mathcal{T}\), let \(\pi_{\theta_T}(y \mid x)\) denote the conditional distribution over response \(y\) given instruction \(x\), where \(\theta_T\) is the parameter of teacher \(T\). For each instruction \(x_i \in \mathcal{Q}\), we collect one or more teacher-generated responses from every teacher \(T \in \mathcal{T}\). The resulting candidate response set is
\begin{equation}
    \mathcal{A}_i = \left\{ y_i^{(T,j)} \;:\; T \in \mathcal{T},\; j \in [J_T(i)] \right\},
    \qquad [J_T(i)] = \{1,\dots,J_T(i)\},
\end{equation}
where $J_T(i)$ is the number of responses generated by the teacher $T$ for instruction $x_i$.

The dataset $\mathcal{D}$ is obtained by selecting the answer $y_i^\star$ for each question $x_i$ with the highest log-likelihood under the base student model distribution $\pi_{\theta_0}(y \mid x_i)$, where $\theta_0$ is the parameters of the base student model,
\begin{equation}
    \mathcal{D} = \{(x_i, y_i^\star)\}_{i=1}^N,\qquad\qquad y_i^\star = \argmax_{y \in \mathcal{A}_i} \pi_{\theta_0}(y \mid x_i).
\end{equation}
The student is then trained with standard supervised fine-tuning:
\begin{equation}
    \min_{\theta} \; -\sum_{(x,y)\in \mathcal{D}} \log \pi_{\theta}(y \mid x).
\end{equation}

We examine this likelihood-based selection via two selection strategies: one formed from the highest-likelihood responses, $\mathcal{D}_{\textrm{high}}$, and one from the lowest-likelihood responses, $\mathcal{D}_{\textrm{low}}$,

\begin{align}
    \mathcal{D}_{\textrm{high}} = \{(x_i, y_i^\mathcal{H})\}_{i=1}^N,\qquad\qquad & y_i^\mathcal{H} = \argmax_{y \in R_i} \pi_{\theta_0}(y \mid x_i), \\
    \mathcal{D}_{\textrm{low}} = \{(x_i, y_i^\mathcal{L})\}_{i=1}^N,\qquad\qquad & y_i^\mathcal{L} = \argmin_{y \in R_i} \pi_{\theta_0}(y \mid x_i).
\end{align}

%% file: sections/Empirical_Results.tex
\section{Main Empirical Results}
\label{section:validate}
\subsection{Experimental Setup}
\textbf{Dataset \& Models.} We conduct our experiments on the MATH12K dataset~\citep{hendrycks2021measuring}. We use two teacher models, \texttt{Qwen/Qwen2.5-72B} \citep{yang2024qwen2} and \texttt{google/gemma-3-27b-it}~\citep{gemmateam2025gemma3technicalreport}, to generate answers for each question in MATH12K. We mainly report results of these student models, including \texttt{Qwen/Qwen2.5-1.5B}, \texttt{Qwen/Qwen2.5-3B-Instruct}, \texttt{Qwen/Qwen2.5-Math-7B}~\citep{qwen2025qwen25technicalreport}, \texttt{Qwen/Qwen3-4B}, \texttt{Qwen/Qwen3-8B}~\citep{yang2025qwen3technicalreport}. We show similar trends for other model families in \Appendixref{appendix:additional_results}, and provide training hyperparameters in \Appendixref{appendix:training}.

\textbf{Evaluation Metrics.} We evaluate correctness based on standard $\operatorname{pass@1}$~\citep{zhang2025the}. We set the temperature to 0.6, which is widely used in related works \citep{ye2025limo,wang2025reinforcement}. We evaluate on 10 popular math datasets: AIME24~\citep{aimo2024aime}, AMC~\citep{aimo2024amc}, CHMATH~\citep{wei2023cmathlanguagemodelpass}, Gaokao~\citep{zhang2023evaluating}, GPQA~\citep{rein2024gpqa}, GradeSchool~\citep{ye2025limo}, KAOYAN, MATH500~\citep{hendrycks2021measuring}, Minerva~\citep{NEURIPS2022_18abbeef}, Olympiad Bench~\citep{he2024olympiadbench}.

\subsection{Results}

\Tableref{tab:results_one_epoch} summarizes the best performance achieved within the first epoch for models of different sizes trained on high-likelihood and low-likelihood data. To capture early reasoning performance dynamics, we save four checkpoints during the first epoch and evaluate each of them; the reported result is the best score across these checkpoints. We observe a clear early advantage for high-likelihood selection: it wins on almost all datasets across model sizes, especially for the 1.5B and 3B students, on 9/10 for the 4B student, on 8/10 datasets for the 7B student with two ties, and on 9/10 datasets for the 8B student. This pattern suggests that high-likelihood data provides a stronger optimization signal in the early stage of training, especially for smaller-capacity students.

\definecolor{modelrow}{RGB}{221,235,247}
\definecolor{settingrow}{RGB}{242,242,242}
\definecolor{datasetcol}{RGB}{248,248,248}
\definecolor{highwin}{RGB}{255,235,156}
\newcommand{\legendbox}[1]{\textcolor{#1}{\rule{0.9em}{0.9em}}}

\newcommand{\highbetter}[1]{\cellcolor{highwin}\textbf{#1}}

\begin{table*}[t]
\centering
\scriptsize
\setlength{\tabcolsep}{5pt}
\renewcommand{\arraystretch}{1.12}
\caption{\textbf{Best performance across checkpoints after \textcolor{blue}{ONE} epoch for models of different sizes trained on high-likelihood and low-likelihood data}. Highlighted cells \legendbox{highwin} mark the better performance between training on high and low-likelihood data.}
\label{tab:results_one_epoch}
\resizebox{\textwidth}{!}{%
\begin{tabular}{>{\columncolor{datasetcol}}lcccccccccc}
\toprule
\rowcolor{modelrow}
& \multicolumn{2}{c}{\textbf{1.5B}} & \multicolumn{2}{c}{\textbf{3B}} & \multicolumn{2}{c}{\textbf{4B}} & \multicolumn{2}{c}{\textbf{7B}} & \multicolumn{2}{c}{\textbf{8B}} \\
\cmidrule(lr){2-3}\cmidrule(lr){4-5}\cmidrule(lr){6-7}\cmidrule(lr){8-9}\cmidrule(lr){10-11}
\rowcolor{settingrow}
\textbf{Dataset} & \textbf{High} & \textbf{Low} & \textbf{High} & \textbf{Low} & \textbf{High} & \textbf{Low} & \textbf{High} & \textbf{Low} & \textbf{High} & \textbf{Low} \\
\midrule
\textbf{AIME}             & \highbetter{3.33}  & 0     & \highbetter{10.00} & 3.33  & \highbetter{20.00} & 10.00 & \highbetter{13.33} & \highbetter{13.33} & \highbetter{36.67} & 26.67 \\
\textbf{AMC}              & \highbetter{27.50} & 17.50 & \highbetter{47.50} & 32.50 & \highbetter{62.00} & 50.00 & \highbetter{62.50} & 47.50 & \highbetter{87.50} & 80.00 \\
\textbf{CHMATH}           & \highbetter{6.67}  & 3.33  & \highbetter{10.00} & 3.33  & \highbetter{26.67} & 13.33 & \highbetter{30.00} & 20.00 & \highbetter{56.67} & 50.00 \\
\textbf{GAOKAO}           & \highbetter{10.13} & 6.33  & \highbetter{25.32} & 10.13 & \highbetter{54.43} & 53.16 & \highbetter{55.70} & \highbetter{55.70} & \highbetter{74.68} & 70.89 \\
\textbf{GPQA}             & \highbetter{28.79} & 16.67 & \highbetter{34.85} & 18.18 & \highbetter{36.36} & 26.26 & \highbetter{39.39} & 27.78 & \highbetter{78.79} & 75.25 \\
\textbf{GRADE SCHOOL}     & \highbetter{22.38} & 13.33 & \highbetter{19.52} & 14.76 & 48.57 & \highbetter{40.95} & \highbetter{48.10} & 44.29 & \highbetter{68.10} & 65.24 \\
\textbf{KAOYAN}           & \highbetter{19.10} & 8.04  & \highbetter{24.12} & {7.04} & \highbetter{40.20} & 25.13 & \highbetter{46.23} & 31.16 & \highbetter{71.86} & 65.33 \\
\textbf{MATH}             & \highbetter{54.60} & 41.80 & \highbetter{66.40} & 52.20 & \highbetter{75.20} & 66.00 & \highbetter{78.80} & 71.80 & 89.20 & \highbetter{91.60} \\
\textbf{MINERVA}          & \highbetter{24.06} & 13.24 & \highbetter{32.35} & 25.37 & \highbetter{40.44} & 33.82 & \highbetter{43.01} & 33.82 & \highbetter{58.46} & 57.72 \\
\textbf{OLYMPIAD BENCH}   & \highbetter{21.78} & 13.04 & \highbetter{26.52} & 20.59 & \highbetter{37.48} & 31.70 & \highbetter{42.37} & 37.93 & \highbetter{64.30} & 64.00 \\
\bottomrule
\end{tabular}%
}
\end{table*}

\begin{table*}[t]
\centering
\small
\setlength{\tabcolsep}{5pt}
\renewcommand{\arraystretch}{1.12}
\caption{\textbf{Best performance across checkpoints after \textcolor{blue}{FIVE} epochs for models of different sizes trained on high-likelihood and low-likelihood data}. Highlighted cells \legendbox{highwin} mark the better performance between training on high and low-likelihood data.}
\label{tab:results_five_epochs}
\resizebox{\textwidth}{!}{%
\begin{tabular}{>{\columncolor{datasetcol}}lcccccccccc}
\toprule
\rowcolor{modelrow}
& \multicolumn{2}{c}{\textbf{1.5B}} & \multicolumn{2}{c}{\textbf{3B}} & \multicolumn{2}{c}{\textbf{4B}} & \multicolumn{2}{c}{\textbf{7B}} & \multicolumn{2}{c}{\textbf{8B}} \\
\cmidrule(lr){2-3}\cmidrule(lr){4-5}\cmidrule(lr){6-7}\cmidrule(lr){8-9}\cmidrule(lr){10-11}
\rowcolor{settingrow}
\textbf{Dataset} & \textbf{High} & \textbf{Low} & \textbf{High} & \textbf{Low} & \textbf{High} & \textbf{Low} & \textbf{High} & \textbf{Low} & \textbf{High} & \textbf{Low} \\
\midrule
\textbf{AIME}             & \highbetter{6.67}  & 3.33  & \highbetter{10.00} & 6.67  & \highbetter{20.00} & \highbetter{20.00} & 16.67 & \highbetter{26.67} & 40.00 & \highbetter{43.33} \\
\textbf{AMC}              & \highbetter{35.00} & 22.50 & \highbetter{55.00} & 47.50 & \highbetter{70.00} & 57.50 & \highbetter{67.50} & 62.50 & 90.00 & \highbetter{92.5} \\
\textbf{CHMATH}           & \highbetter{6.67}  & 6.67  & \highbetter{16.67} & 10.00 & 33.33 & \highbetter{36.67} & \highbetter{33.33} & \highbetter{33.33} & 56.67 & \highbetter{63.33} \\
\textbf{GAOKAO}           & \highbetter{11.39} & 8.86  & \highbetter{25.32} & 24.05 & 56.96 & \highbetter{58.23} & 55.70 & \highbetter{62.03} & \highbetter{74.68} & 67.09 \\
\textbf{GPQA}             & \highbetter{29.29} & 16.67 & \highbetter{34.85} & 27.27 & \highbetter{39.39} & 38.38 & 39.39 & \highbetter{44.95} & 78.79 & \highbetter{79.8} \\
\textbf{GRADE SCHOOL}     & \highbetter{22.86} & 13.33 & \highbetter{22.38} & 19.52 & \highbetter{53.33} & 47.14 & 50.95 & \highbetter{55.71} & 70.48 & \highbetter{72.38} \\
\textbf{KAOYAN}           & \highbetter{19.60} & 10.55 & \highbetter{26.63} & {11.56} & \highbetter{44.22} & 33.67 & \highbetter{46.73} & 37.69 & 69.85 & \highbetter{71.86} \\
\textbf{MATH}             & \highbetter{57.60} & 44.40 & \highbetter{66.80} & 58.60 & \highbetter{78.8} & 74.4 & 79.00 & \highbetter{80.06} & 92.00 & \highbetter{93.00} \\
\textbf{MINERVA}          & \highbetter{22.79} & 14.71 & \highbetter{32.72} & 25.74 & 40.81 & \highbetter{45.96} & \highbetter{43.38} & 37.87 & 58.46 & \highbetter{62.50} \\
\textbf{OLYMPIAD BENCH}   & \highbetter{22.52} & 14.96 & \highbetter{30.52} & 24.30 & \highbetter{41.63} & 41.19 & 42.81 & \highbetter{44.3} & 65.19 & \highbetter{65.48} \\
\bottomrule
\end{tabular}%
}
\end{table*}

\begin{observation}[Early-stage advantage of high-likelihood data]
    High-likelihood data consistently yield faster early gains, especially for smaller models.
\end{observation}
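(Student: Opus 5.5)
The claim is empirical, so I will support it by a direct audit of \Tableref{tab:results_one_epoch} against the paired-comparison design of \Sectionref{sec:background}. It is not something that can be derived from earlier results. The plan has three parts. First, I fix what ``faster early gains'' means operationally: for each student size $s\in\{1.5\text{B},3\text{B},4\text{B},7\text{B},8\text{B}\}$ and benchmark $b$, I take the best-of-four-checkpoint $\operatorname{pass@1}$ within the first epoch under $\mathcal{D}_{\textrm{high}}$ and under $\mathcal{D}_{\textrm{low}}$. Second, I form the gap $\Delta_{s,b}=\mathrm{acc}^{\mathcal{H}}_{s,b}-\mathrm{acc}^{\mathcal{L}}_{s,b}$. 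Third, I show that $\Delta_{s,b}\ge 0$ for nearly all pairs, and that its typical magnitude shrinks as $s$ grows.

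The first half of the observation, that high-likelihood data is consistently better early, follows from counting signs of $\Delta_{s,b}$ in the table. The counts are 10/10 at 1.5B and 3B, 9/10 at 4B (GRADE SCHOOL is the exception), 8/10 wins plus 2 ties at 7B, and 9/10 at 8B (MATH is the exception). To make the word ``consistently'' more than descriptive, I would add a per-size sign test over the ten benchmarks. Under a null of equal win probability, 9 or more wins out of 10 has one-sided probability about $0.011$, so every model size rejects the null at conventional levels.

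The second half, ``especially for smaller models,'' requires a size-resolved summary. I will compute the mean gap $\bar\Delta_s=\frac{1}{10}\sum_b\Delta_{s,b}$ for each size, or the relative gap $\Delta_{s,b}/\mathrm{acc}^{\mathcal{L}}_{s,b}$ to remove baseline effects. For example, the MATH gap is $12.8$ points at 1.5B, $14.2$ at 3B, and $9.2$ at 4B, then becomes negative at 8B. Similarly, GPQA gaps fall from roughly $12$–$16$ points at small sizes to about $3.5$ at 8B. I would then check that $\bar\Delta_s$ is roughly monotonically decreasing in $s$, for instance with a Spearman correlation over the five sizes. That would establish the capacity-dependent part of the statement.

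I expect the main obstacle to be robustness rather than the counting itself. The best-of-checkpoints statistic is optimistic, and AIME, AMC and CHMATH have only 30–40 problems, so single-problem swings of about $3.3$ points are comparable to many of the gaps. To address this, I would report binomial standard errors, or better, repeat sampling at temperature $0.6$ over several seeds. I would then confirm that the aggregated gap $\bar\Delta_s$ stays positive and still decreases with size once noise is accounted for. As a cross-check, I would confirm that the same ordering appears in the additional model families of \Appendixref{appendix:additional_results}.
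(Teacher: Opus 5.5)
Your first half is the paper's own argument. The observation is supported by reading \Tableref{tab:results_one_epoch} and counting per-size wins of the best first-epoch checkpoint. The paper grounds ``especially for smaller models'' in those same counts: high-likelihood wins everywhere at 1.5B and 3B, versus 8 wins with 2 ties at 7B and 9/10 at 8B.

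The step that does not go through is your stronger claim that the typical magnitude of $\Delta_{s,b}$ shrinks as $s$ grows, certified by a rank correlation. From the table, $\bar\Delta_s\approx 8.5,\ 10.9,\ 9.1,\ 7.6,\ 4.0$ for 1.5B, 3B, 4B, 7B, 8B. The smallest student therefore does not have the largest early advantage; your own MATH example rises from $12.8$ to $14.2$ between 1.5B and 3B. Spearman's coefficient is $\rho=-0.7$, whose exact one-sided permutation $p$-value with five points is $14/120\approx 0.12$, so this check does not establish the trend.

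The relative gap does not rescue it. It is undefined at 1.5B, where low-likelihood AIME is $0$. After dropping that entry, its means are about $0.75,\ 1.05,\ 0.40,\ 0.23,\ 0.08$, which still ranks 3B above 1.5B. That ordering is driven by small-denominator entries such as $6.67/3.33$ for AIME and CHMATH at 3B, i.e.\ a two-problem difference over a one-problem baseline on a 30-problem set. Its nominal $\rho=-0.9$ ($p\approx 0.04$) thus rests on exactly the noise you worry about later.

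What the data support, and all the paper claims, is a small-versus-large contrast. At 1.5B--4B, high-likelihood wins uniformly with mean gaps of roughly $8.5$--$11$ points. At 7B there are two ties, and at 8B a loss plus a mean gap of about $4$ points. A rank test against parameter count is confounded in any case, since the five students mix Qwen2.5 base, Qwen2.5-Instruct, Qwen2.5-Math and Qwen3 checkpoints.

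Three smaller points. First, if you actually compute signs, 4B is 10/10, not 9/10. The printed GRADE SCHOOL entries are $48.57$ (High) and $40.95$ (Low), so $\Delta=+7.62$. The highlight on the Low cell, which the paper's ``9/10'' follows, contradicts the numbers; resolve that rather than copying the count.

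Second, at 7B there are 8 wins and 2 ties, so your ``9 or more of 10'' tail does not apply. Dropping ties gives $2^{-8}\approx 0.004$, but counting ties as non-wins gives $56/1024\approx 0.055$, which does not reject at level $0.05$. State your convention, and note that ten benchmarks scored on the same four checkpoints are not independent trials.

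Third, the appendix cross-check will not uniformly confirm ``consistently''. Llama-3.2-3B agrees, with 9/10 early wins. But Llama-3.1-8B has first-epoch best averages of $15.45$ versus $15.36$, with high-likelihood winning only 6 of 10 benchmarks. That again fits a size contrast rather than a uniform early advantage.
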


\Tableref{tab:results_five_epochs} reports the best performance over the full 5-epoch training process. During training, we save four checkpoints per epoch and evaluate all of them, so each entry corresponds to the best result among 20 checkpoints. Unlike the first-epoch results, where high-likelihood data consistently provides a stronger early learning signal, the longer training horizon reveals a clear capacity-dependent reversal. For the smaller 1.5B, 3B, and 4B students, high-likelihood training remains dominant. However, for the larger 7B and 8B students, low-likelihood training surpasses high-likelihood training on most tasks.

\begin{observation}[Low-likelihood data helps large models eventually]
    For high-capacity models, high-likelihood peaks early, while low-likelihood improves later and can surpass it.
\end{observation}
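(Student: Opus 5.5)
Observation~2 is an empirical statement, so I would establish it in two layers: first a direct quantitative check against the evidence in \Tableref{tab:results_one_epoch} and \Tableref{tab:results_five_epochs}, and then a mechanism that explains \emph{why} the reversal occurs only for high-capacity students. The observation has two parts, and each needs its own support. The first is that high-likelihood training peaks early. The second is that low-likelihood training keeps improving and can overtake it. For the first layer, I would count wins per model size. For the 7B and 8B students, high-likelihood data wins or ties on essentially all benchmarks after one epoch. After five epochs, low-likelihood data wins on the majority (7B: roughly 6/10 with one tie; 8B: 9/10). For 1.5B--4B the high-likelihood advantage persists. This sign flip in the win count, present only for the larger students, is the core content of the observation.

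To substantiate the ``peaks early / improves later'' part, rather than only comparing best-of-checkpoint numbers, I would look at the full trajectory of the 20 saved checkpoints. For each run I would record (i) the checkpoint index at which the maximum is attained and (ii) the gain of the five-epoch best over the one-epoch best. The observation predicts the following. For large models, high-likelihood runs attain their maximum in epoch one or two, with small additional gain afterwards. For example, 8B MATH rises only from 89.2 to 92.0 and AIME from 36.67 to 40.0. Low-likelihood runs should attain their maximum late, with larger gains; for example, 7B AIME rises from 13.33 to 26.67 and 8B CHMATH from 50.0 to 63.33. Averaging these per-benchmark gains and argmax positions across the ten datasets gives a compact statistic. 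I would also check it against the additional model families in \Appendixref{appendix:additional_results} to rule out a Qwen-specific artifact. The best-of-checkpoint protocol could inflate noisy late maxima, so I would add a control: report the mean over the last few checkpoints as well.

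For the mechanism, I would set up a stylized capacity-constrained distillation model in the spirit of the paper's theory section. The student is a parameter vector restricted to a subspace of dimension $k$, which models capacity. The teacher lies outside the initialization, and SFT is gradient descent on squared error toward target responses. High-likelihood data is modeled as targets concentrated near the student's initial distribution: small residual, small span. Low-likelihood data is modeled as targets far away but spanning more teacher directions. Under this model two quantities can be compared. The first is the early-time decrease of the loss, which favors small-residual, well-conditioned data. The second is the asymptotic distance of the student to the teacher, which favors data whose span covers more of the teacher's directions, but only when $k$ is large enough to represent them. When $k$ is small, the projection discards those directions, so low-likelihood data yields only residual error. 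That residual error is a proxy for the shallow or repetitive behaviour the paper describes. The crossing time of the two trajectories then scales with the conditioning of the low-likelihood data, which reproduces the capacity and training-time dependence.

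The main obstacle is this last step. ``Capacity'' in a 1.5B versus 8B transformer is not literally a subspace dimension, so the link between the stylized model and the experiments is heuristic. I would first try to make the link falsifiable rather than exact. The model predicts that the distance to the teacher distribution, measured as teacher-response log-likelihood under the trained student, should keep decreasing under low-likelihood training only for larger students. I would verify that prediction empirically alongside the accuracy tables.
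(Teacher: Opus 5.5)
Your first layer is how the paper itself supports Observation~2. It contrasts Table~\ref{tab:results_one_epoch} with Table~\ref{tab:results_five_epochs} and reads off the sign flip that occurs only for the 7B and 8B students. Your counts and quoted numbers are correct: for 7B, 8 high wins and 2 ties after one epoch, then 6 low wins and 1 tie after five; for 8B, 9/10 each way. Your trajectory analysis and best-of-checkpoint caveat mirror the appendix. There, Qwen3-8B high improves only from 66.16 to 66.64 after the first epoch while low rises from 62.62 to 70.69, and the paper separates Peak Accessibility from Persistent Superiority. The observation only claims that low-likelihood data \emph{can} surpass, so this layer already suffices. The paper establishes the observation from the tables alone; Section~\ref{sec:theory} is offered as explanation, not evidence.

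The refinements contain steps that would not go through.

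First, the absolute prediction that high-likelihood runs of large students peak in epoch one or two is contradicted by the appendix. Qwen2.5-7B (high) reaches its best average at epoch 3 and its best weighted average at epoch 5. Llama-3.1-8B (high) climbs from 15.45 at epoch 1 to 20.10 at epoch 3.75. So the cross-family check you propose would come back negative. The paper instead states the claim comparatively: an early lead for high, and larger delayed gains for low. It also checks this checkpoint by checkpoint, which shows that 7B low beats 7B high at only 5 of 20 checkpoints; a late-window mean would mask this.

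Second, your stylized mechanism does not produce the Fast-Fit half as specified. Theorems~\ref{theorem:high-cap_student_sol} and~\ref{theorem:low-cap_student_sol} give only limit points, not rates. In a realizable linear gradient flow with squared loss, $\mathbf{w}(t)-\mathbf{w}_\ast=e^{-t\mathbf{X}\mathbf{X}^\top/n}(\mathbf{w}_0-\mathbf{w}_\ast)$, so the distance to the teacher is non-increasing. Data whose span is nearly orthogonal to the gap, which is your high-likelihood model, barely reduces that distance. Low-likelihood data is then ahead at every time unless you add an explicit assumption. One example is a fixed data budget under which the low-likelihood design has small eigenvalues of $\mathbf{X}\mathbf{X}^\top$ along the directions the evaluation weights. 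Relatedly, early loss decrease does not favour small residuals. The absolute rate $\|\nabla L\|^2$ grows with the residual, and the relative rate depends on the spectrum rather than on the residual's size. Training losses on two different datasets are not comparable in any case.

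Third, your falsification signature conflicts with Figure~\ref{fig:loss_teacher}. Teacher loss already decreases under low-likelihood training for the 3B student, which shows no accuracy crossover. The quantity you propose, the student's likelihood of teacher responses, is essentially the SFT objective on the training set. The paper instead scores the student's own generations under the teacher.
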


\definecolor{headerblue}{RGB}{221,235,247}
\definecolor{lightgray}{RGB}{242,242,242}

\begin{table*}[t]
\centering
\small
\setlength{\tabcolsep}{10pt}
\renewcommand{\arraystretch}{1.15}
\caption{\textbf{An illustrative example showing that larger models are less prone to getting lost under low-likelihood training.} The 1.5B model repeats the question and then gets stuck repeating an incorrect final answer, the 3B model repeats intermediate reasoning, and the 7B model successfully completes the reasoning and returns the correct answer.}
\label{tab:capacity_get_lost_example}
\resizebox{\textwidth}{!}{%
\begin{tabular}{>{\centering\arraybackslash}p{0.31\textwidth} >{\centering\arraybackslash}p{0.31\textwidth} >{\centering\arraybackslash}p{0.31\textwidth}}
\toprule
\rowcolor{headerblue}
\multicolumn{3}{c}{\textbf{Question:} There is a sequence $a_1,a_2,\ldots,a_{2023}$ such that $\tan(2023x)=\cdots$. What is $a_{2023}$?} \\
\midrule
\rowcolor{lightgray}
\textbf{Qwen2.5-1.5B} & \textbf{Qwen2.5-3B-Instruct} & \textbf{Qwen2.5-Math-7B} \\
\midrule
There is a sequence $a_1,a_2,\ldots,a_{2023}$ such that $\tan(2023x)=\cdots$. What is $a_{2023}$?
\[
{{\color{red}\boxed{2023}\ \boxed{2023}\ \boxed{2023}\ \boxed{2023}}\ \cdots}
\]
&
Let $P(x)=\tan 2023x$. Then
\[
\tan(2023x)=\frac{\tan(2022x)+\tan x}{1-\tan(2022x)\tan x}
\]
\[
{\color{red}\tan(2023x)=\frac{\tan(2022x)+\tan x}{1-\tan(2022x)\tan x}}
\]
\[
\cdots
\]
&
Let $t = \tan x$. Then we have...\newline
We know that
\[
\tan (A+B) = \frac{\tan A + \tan B}{1 - \tan A \tan B}
\]
\[
\cdots
\]
{\color{pastelgreen}Final answer: \(\boxed{1}\).} \rule{0pt}{0.9em} \\[-1.7em]
\bottomrule
\end{tabular}%
}
\end{table*}

We present an illustrative example to highlight how model capacity affects learning from low-likelihood data in \Tableref{tab:capacity_get_lost_example}, showing that the 1.5B model mostly repeats the question and then produces an incorrect final answer, indicating shallow imitation rather than acquisition of the teacher’s reasoning process. The 3B model can generate some preliminary reasoning steps, but then gets lost in repetition and still fails to complete the solution. By contrast, the 7B model successfully carries out the reasoning and reaches the correct answer.

\begin{observation}[Small model learns by rote]
    Small model: repetition, shallow copying, incomplete reasoning. Larger model: delayed but meaningful reasoning improvement.
\end{observation}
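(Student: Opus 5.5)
Since the statement is an empirical observation rather than a formal claim, I will support it by turning each of its qualitative clauses into a measurable quantity and checking it across the capacity ladder, not by deriving it from axioms. I will evaluate every checkpoint of the runs behind \Tableref{tab:results_one_epoch} and \Tableref{tab:results_five_epochs}, for each student trained on $\mathcal{D}_{\textrm{low}}$ and, as a control, on $\mathcal{D}_{\textrm{high}}$. On every benchmark output I will record three quantities:
\begin{itemize}
\item \emph{repetition}: the fraction of generations whose distinct-$n$-gram ratio falls below a fixed threshold, together with the fraction that hit the maximum generation length;
\item \emph{shallow copying}: the longest-common-substring overlap between the generation and the question, normalized by question length;
\item \emph{incomplete reasoning}: the fraction of generations that end without a single well-formed \texttt{\textbackslash boxed\{\}} answer, or that emit many conflicting ones.
\end{itemize}
The observation predicts that all three rates are high and persistent for the 1.5B and 3B students under $\mathcal{D}_{\textrm{low}}$. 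For the 7B and 8B students it predicts rates that start elevated but decay over epochs. That decay must track the late accuracy gains in \Tableref{tab:results_five_epochs}, which is what ``delayed but meaningful'' means.

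Second, I will link the surface symptoms to learning dynamics, to show that the small model is \emph{not} approaching the teacher. On a held-out set of teacher responses I will track two things over training: the student's per-token log-likelihood $\log \pi_\theta(y\mid x)$, and an estimate of the KL divergence from teacher to student. I will split tokens into ``template'' tokens (restated question text, boilerplate, answer delimiters) and ``reasoning'' tokens (the remainder). The mechanism I expect is as follows. Under $\mathcal{D}_{\textrm{low}}$, a low-capacity student lowers the SFT loss mainly by fitting cheap, high-frequency template tokens, while the reasoning-token likelihood stays flat. This is exactly the regime in which repetitive, question-echoing generations are locally loss-optimal. For large students, the reasoning-token likelihood should rise, only more slowly. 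I will summarize this as a two-dimensional plot of template-token gain versus reasoning-token gain for each model and data split.

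The hardest step will be attributing the difference to \emph{capacity} rather than to confounds. The students differ in family, pretraining corpus (Qwen2.5-Math-7B is math-specialized), and instruction tuning (the 3B model is the Instruct variant). My first attempt will use within-family sweeps at a matched training-token budget: Qwen3-4B versus Qwen3-8B, plus the additional families in \Appendixref{appendix:additional_results}. I will check that the repetition and copying rates decrease monotonically with parameter count inside each family.

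A second control targets a different confound: the failure might simply be that $\mathcal{D}_{\textrm{low}}$ responses are longer or noisier. To rule this out, I will repeat the analysis on length-matched subsets of $\mathcal{D}_{\textrm{low}}$ and $\mathcal{D}_{\textrm{high}}$. If the capacity trend survives both controls, the observation is supported. If it does not, I will restate the observation as a family-level trend rather than a capacity law.
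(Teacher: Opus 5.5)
Your route is genuinely different from the paper's, and it is stronger. The paper does not measure these symptoms at all. Its support is a single illustrative question (Table~\ref{tab:capacity_get_lost_example}), with one sampled output per model under low-likelihood training: the 1.5B student echoes the question and then loops on a boxed answer, the 3B student writes a preliminary step and loops on it, and the 7B student finishes. The ``delayed but meaningful'' half is read off the contrast between the one-epoch and five-epoch accuracy tables, where 7B and 8B overtake high-likelihood training only late.

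Your rates (low distinct-$n$-gram ratio, length-cap hits, question overlap, missing or conflicting boxed answers) over all checkpoints and benchmarks turn that anecdote into a population-level statement. They also let you check that the symptoms decay for 7B/8B on the same schedule as the late accuracy gains. Your within-family and length-matched controls address a confound the paper leaves open: its three example models are a base model, an Instruct model and a math-specialized model. The paper's route buys vividness at essentially zero cost, but it is fragile. For instance, as printed, the 7B's final answer $1$ is not the correct value $a_{2023}=-1$ for that AMC problem.

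Two parts of your plan need adjusting. First, requiring all three rates to be high and persistent for both 1.5B and 3B is stricter than the paper's claim and conflicts with its own data. Question echoing appears only at 1.5B. The 3B low-likelihood run improves from an early best of 19.93 to 24.63 average accuracy. The teacher-view analysis shows 3B moving toward the teacher under $\mathcal{D}_{\textrm{low}}$. Predict a graded ordering across sizes rather than a binary small/large split.

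Second, your mechanism step measures a direction that cannot show the failure. Consider the student's log-likelihood of held-out teacher responses, and $\mathrm{KL}(\text{teacher}\,\|\,\text{student})$ estimated on teacher samples. Up to the teacher's entropy, both equal the SFT objective evaluated under teacher forcing. They will typically improve for every student, including 1.5B. They are also blind to sampling-time degeneration such as repetition loops and failure to terminate, which is exactly what the observation describes. The template/reasoning split rescues this only if your hypothesized mechanism happens to hold. To show that the small model's \emph{outputs} do not approach the teacher, work in the reverse direction: score the student's own generations under the teacher, as the paper does in its teacher-view loss dynamics (Figure~\ref{fig:loss_teacher}).
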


In brief, these results reveal a clear ``{\color{SMALLCOLOR}\textbf{Fast-Fit}} / {\color{LARGECOLOR}\textbf{Slow-Gain}}'' pattern. In the {\color{SMALLCOLOR}\textbf{Fast-Fit}} regime: the model quickly aligns to clean, learnable supervision and improves with relatively little training. The effect is especially strong for smaller students, whose limited capacity makes them more dependent on examples that match their current representation and optimization ability. By contrast, in the {\color{LARGECOLOR}\textbf{Slow-Gain}} regime, low-likelihood data often appears less useful at the beginning, but becomes increasingly valuable when the student has more capacity and the training budget allows longer optimization. This suggests that the best data-selection strategy is not universal: it should be chosen jointly according to how powerful the student is and how much training compute is available.

\begin{takeaway}[Computing budget and model capacity matter data selection]
    High-likelihood data is best for fast, compute-efficient early learning, whereas low-likelihood data becomes more valuable when model capacity and training budget are large enough.
\end{takeaway}

%% file: sections/Deep_Analysis.tex
\section{\raisebox{-0.2em}{\includegraphics[height=1.5em]{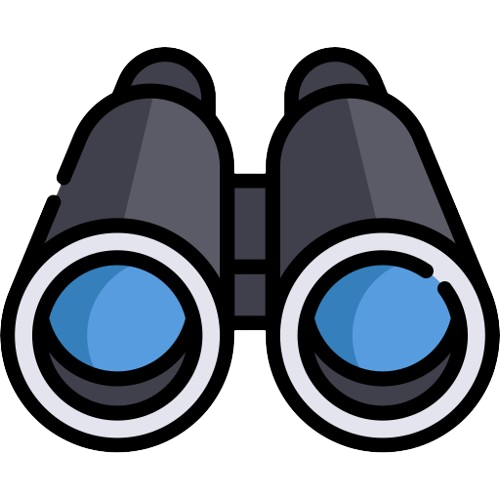}}\hspace{0.4em}Dual-view Learning Dynamics Analysis}
\label{sec:learning_dynamics_analysis}

In this section, we further examine why the {\color{LARGECOLOR}\textbf{Slow-Gain}} regime appears for larger models, yet disappears for small-capacity students from teacher and original model views.

\begin{figure}[ht]
    \centering
    \includegraphics[width=\linewidth]{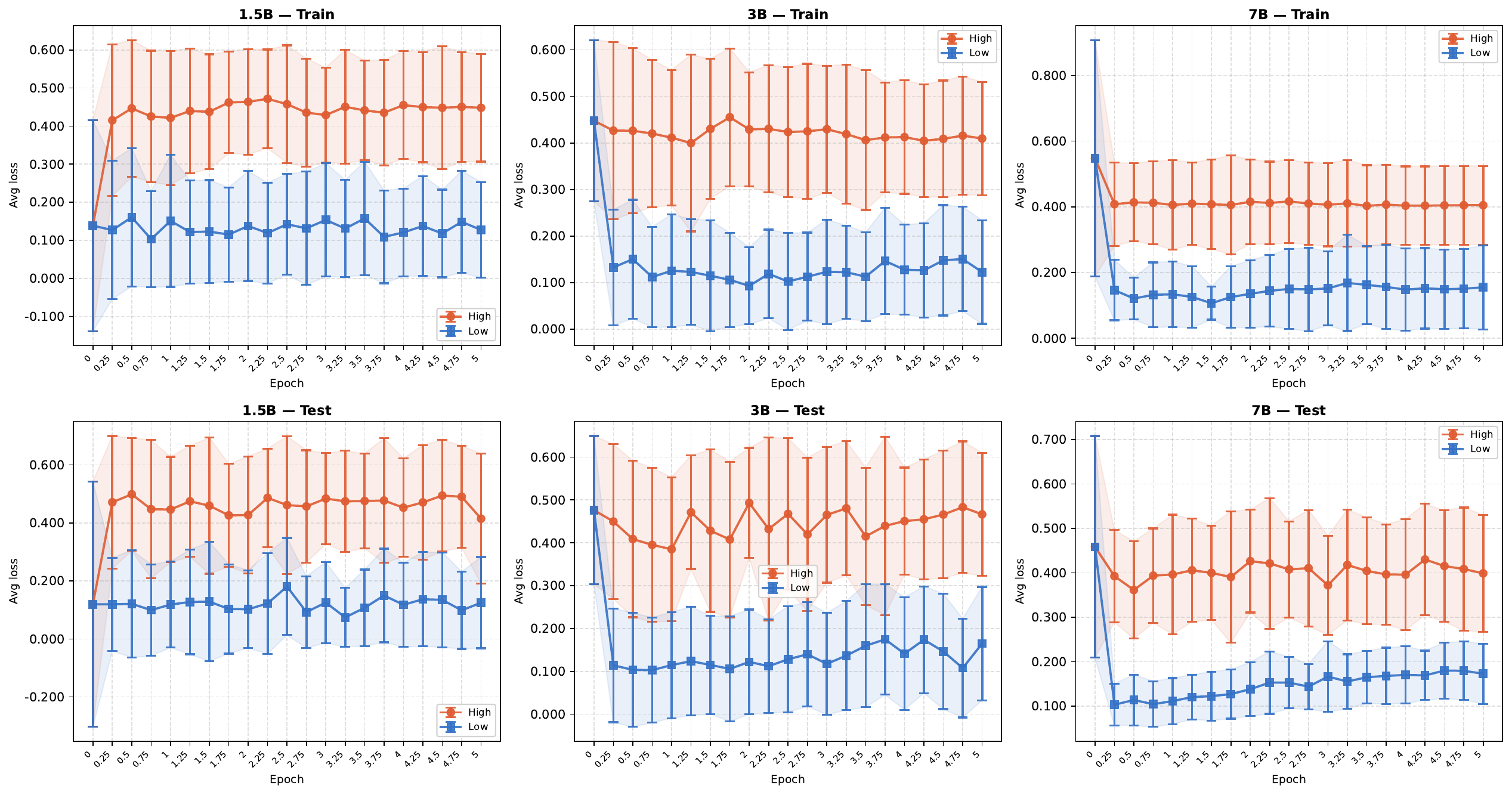}
    \caption{\textbf{Loss dynamics under the teacher model.} We measure the Gemma log-likelihood of answers on the test set (AMC) and training set (MATH12K) across checkpoints for the 1.5B, 3B, and 7B student models trained on low- and high-likelihood data.}
    \label{fig:loss_teacher}
\end{figure}

\begin{figure}[ht]
    \centering
    \includegraphics[width=\linewidth]{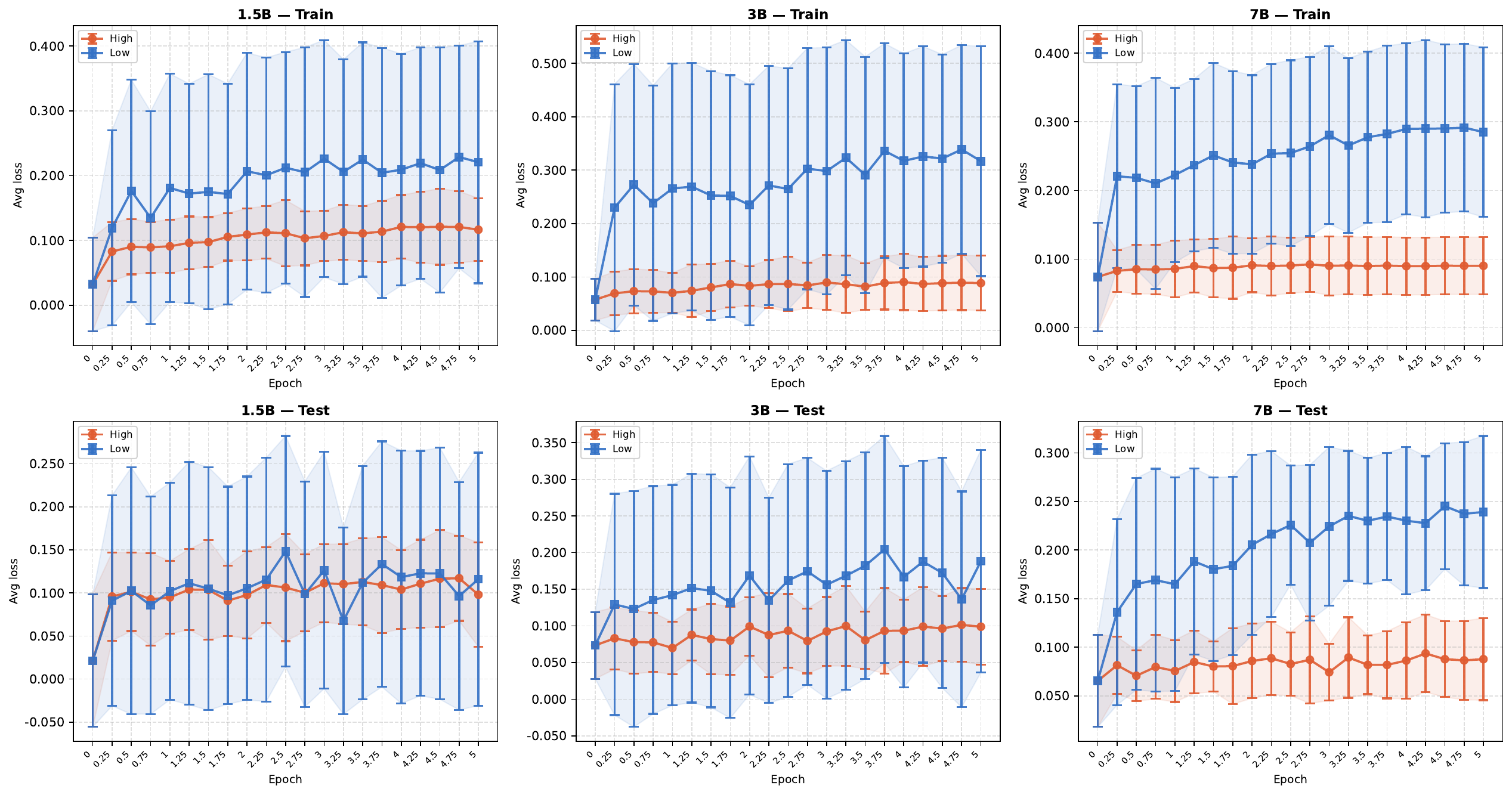}
    \caption{\textbf{Loss dynamics under the base model.} Log-likelihood on AMC (test) and MATH12K (train) across checkpoints for 1.5B, 3B, and 7B students, evaluated under their own base models after training on low- and high-likelihood data.}
    \label{fig:loss_base}
\end{figure}

\textbf{Learning Dynamics under Teacher View.} \Figureref{fig:loss_teacher} reveals a clear capacity-dependent teacher-loss dynamics pattern. For the 1.5B student, training on high-likelihood data causes the teacher-model loss of the student’s generated answers to increase, indicating that the student is drifting farther away from the teacher distribution. When trained on low-likelihood data, the teacher loss remains roughly unchanged. Taken together, these results suggest that the 1.5B model lacks sufficient capacity to move meaningfully closer to the teacher, regardless of whether the selected data are high- or low-likelihood. For the 3B student, the pattern becomes more differentiated: under high-likelihood training, the teacher loss stays largely stable, whereas under low-likelihood training it decreases, showing that low-likelihood data helps this model move closer to the teacher distribution. For the 7B student, the teacher loss decreases under both high- and low-likelihood training, indicating that, once model capacity is sufficiently large, the student can absorb teacher knowledge effectively regardless of the likelihood-based data selection. Overall, these results suggest that the ability of likelihood-based data selection to pull the student toward the teacher depends strongly on model capacity: smaller students struggle to approach the teacher at all, medium-sized students benefit particularly from low-likelihood data, and larger students can align with the teacher under either selection strategy.

\begin{observation}[Capacity-Dependent Teacher Alignment]
    Low-likelihood data moves larger models closer to the teacher but fails to do so for small models.
\end{observation}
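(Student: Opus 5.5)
Observation~5 (Capacity-Dependent Teacher Alignment) is an empirical claim, so the plan is to turn it into a quantitative, falsifiable statement about the teacher-loss curves in \Figureref{fig:loss_teacher}. I will define, for a student checkpoint $\theta_t$, the \emph{teacher distance}
\[
L_T(\theta_t) \;=\; -\frac{1}{|\mathcal{E}|}\sum_{x\in\mathcal{E}} \mathbb{E}_{y\sim\pi_{\theta_t}(\cdot\mid x)} \tfrac{1}{|y|}\log \pi_{\theta_T}(y\mid x),
\]
with $\mathcal{E}$ either AMC (test) or MATH12K (train) and $T$ the Gemma teacher. I will then read the observation as two claims about the change $\Delta_T = L_T(\theta_{\text{end}}) - L_T(\theta_0)$ under $\mathcal{D}_{\textrm{low}}$. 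The first claim is $\Delta_T<0$ for the large student (7B), and also for the medium student (3B). The second claim is $\Delta_T\approx 0$, or at least not significantly negative, for the 1.5B student.

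\textbf{Estimation.} I will compute $L_T$ at every saved checkpoint and sample several responses per prompt. I will fit a per-run slope of $L_T$ against training step, and attach bootstrap confidence intervals over prompts and samples. The observation is supported if the slope interval for 7B under $\mathcal{D}_{\textrm{low}}$ lies strictly below zero while the 1.5B interval contains zero. For contrast I will also report the $\mathcal{D}_{\textrm{high}}$ slopes. Those should show the 1.5B drifting upward, which argues that the small model's failure is not specific to the data split.

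\textbf{Confound checks.} I will run two checks to rule out artifacts.
\begin{itemize}
\item \emph{Length and degeneration.} Repetitive outputs, as in \Tableref{tab:capacity_get_lost_example}, can receive artificially high per-token teacher likelihood. I will therefore recompute $L_T$ after filtering degenerate (highly repetitive) generations, and also using sequence-level rather than per-token normalization.
\item \emph{Base-model view.} Using \Figureref{fig:loss_base}, I will check that small-model failure coincides with the student's own likelihood barely moving on $\mathcal{D}_{\textrm{low}}$ targets. That pattern indicates an optimization/capacity limit rather than an evaluation artifact.
\end{itemize}
Repeating the analysis with the Qwen2.5-72B teacher, and with the other model families in the appendix, tests robustness.

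\textbf{Theoretical support.} To connect the observation to the capacity-constrained view promised in \Sectionref{sec:theory}, I will build a toy model. The student is a log-linear model restricted to a $k$-dimensional feature subspace, and the teacher places its mass along directions partly outside that subspace. Low-likelihood samples carry gradient mostly along those outside directions. For small $k$, the projection of the descent direction onto the student's span is near zero, so the KL to the teacher stays flat. For large $k$, the projection is substantial and the KL decreases.

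\textbf{Main obstacle.} I expect the hardest step to be making the toy result match the finite-checkpoint empirics. Specifically, the sign of the projected gradient must be shown to predict the sign of $\Delta_T$. My first attempt will be to bound $\Delta_T$ by $-\eta\|P_k g\|^2 + O(\eta^2)$, where $P_k$ is the projection onto the student's span and $g$ is the gradient. This bound will serve as the bridge between the two.
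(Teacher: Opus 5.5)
Your empirical core matches the paper's argument, which consists only of reading \Figureref{fig:loss_teacher}. That figure plots the Gemma log-likelihood of the students' own generations on AMC and MATH12K across checkpoints. Under $\mathcal{D}_{\textrm{low}}$ the teacher loss stays flat for 1.5B and decreases for 3B and 7B; under $\mathcal{D}_{\textrm{high}}$ it rises for 1.5B and falls for 7B. Your slope fits, bootstrap intervals and cross-teacher/cross-family replications are a more careful version of the same reading, with two caveats. First, an interval that merely contains zero does not show that the 1.5B \emph{fails} to move, since on a benchmark as small as AMC it will be wide for every model. State the null claim with an equivalence margin, or test the 1.5B-versus-7B slope difference directly. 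Second, filtering degenerate generations deletes exactly the small model's failure mode, which changes the estimand. Report the filtered estimate next to the unfiltered curve, not instead of it.

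The step that would fail is the bridge $\Delta_T \le -\eta\|P_k g\|^2 + O(\eta^2)$. A descent inequality of this form holds for the function whose gradient is $g$, i.e.\ the SFT negative log-likelihood on the likelihood-selected targets. But $L_T$ is a different functional: the teacher cross-entropy of the student's \emph{own} samples, $L_T(\theta)=\mathrm{KL}(\pi_\theta\,\|\,\pi_T)+H(\pi_\theta)$ up to length normalization. Its gradient also contains a score-function term coming from $y\sim\pi_\theta$. To first order $\Delta_T=-\eta\langle\nabla L_T,P_k g\rangle$, and your bound is recovered only when $\nabla L_T=g$. Nothing makes this inner product nonnegative in general, especially since $\mathcal{D}_{\textrm{low}}$ is a deliberately biased subsample of teacher outputs. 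The paper's own 1.5B run on $\mathcal{D}_{\textrm{high}}$ is a counterexample: SFT is descending its training objective while the teacher loss in \Figureref{fig:loss_teacher} rises. Your toy model therefore needs an explicit alignment condition between $\nabla L_T$ and the projected training gradient, and it is that alignment, not $\|P_k g\|$, which should degrade for small $k$. The paper does not attempt such a bridge; Theorems~\ref{theorem:high-cap_student_sol} and~\ref{theorem:low-cap_student_sol} are a separate linear-classification account offered as explanation only.

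Your base-model check is also mis-targeted. \Figureref{fig:loss_base} reports the base model's loss on the fine-tuned student's generations, not the student's likelihood on $\mathcal{D}_{\textrm{low}}$ targets. For the 1.5B on $\mathcal{D}_{\textrm{low}}$, the paper reports a broader, more divergent loss pattern that only partially overlaps the initial range. So the small model does drift from its base distribution without drifting toward the teacher, which is not the \emph{barely moving} pattern your check expects.
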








\textbf{Learning Dynamics under Original Model View.} \Figureref{fig:loss_base} presents the loss of generated answers measured under the base model, which helps reveal how far the fine-tuned student departs from its original distribution. For the 1.5B model, training on low-likelihood data leads to a broader and more divergent loss pattern under the base model, while still overlapping substantially with the initial loss range. This suggests that although fine-tuning pushes many generated answers away from the base distribution, a considerable portion of them remain close to the model’s original behavior. For the 3B model, answers produced after training on low-likelihood data exhibit noticeably higher loss under the base model, indicating a stronger shift away from the original distribution than in the 1.5B case. Moreover, the separation between the answers generated by models trained on high-likelihood versus low-likelihood data becomes larger, showing that likelihood-based data selection has a clearer and more distinct effect at this scale. For the 7B model, the loss variance of answers generated after low-likelihood training becomes smaller than for the 3B model, suggesting that the larger model adapts in a more stable and focused way rather than drifting broadly. On the test set, the difference between the 1.5B models trained on high- and low-likelihood data is relatively small, but this gap widens as model size increases. This indicates that larger models not only generalize better, but also express the effect of data selection more clearly in their output distribution.

\begin{observation}[Capacity-Dependent Distribution Shift and Generalization]
    As model size increases, fine-tuned behavior becomes more stable, generalization improves, and the distinction between high-likelihood and low-likelihood training becomes more pronounced.
\end{observation}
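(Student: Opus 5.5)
Since this is an empirical observation, not a formal theorem, I would establish it by turning each of its three clauses into a measurable quantity and checking the monotone trend in model size. Fix the student family (1.5B, 3B, 7B) and the two training sets $\mathcal{D}_{\textrm{high}}$ and $\mathcal{D}_{\textrm{low}}$. For each checkpoint $\theta_t$ and each prompt $x$ in MATH12K (train) or AMC (test), sample responses $\hat y\sim\pi_{\theta_t}(\cdot\mid x)$. Then record the per-token base-model loss $\ell_0(\hat y)=-\tfrac{1}{|\hat y|}\log\pi_{\theta_0}(\hat y\mid x)$, which is exactly the quantity plotted in \Figureref{fig:loss_base}. From these samples I would define three statistics.
\begin{itemize}
\item \emph{Stability}: the across-sample variance $\operatorname{Var}[\ell_0]$ at a fixed checkpoint, and also its spread across checkpoints.
\item \emph{Distinctness}: a separation score between the $\ell_0$ distributions under high- and low-likelihood training, e.g.\ the difference in means normalized by the pooled standard deviation, or the overlap coefficient.
\item \emph{Generalization}: the same separation score on the test set, together with the pass@1 numbers already in \Tableref{tab:results_one_epoch} and \Tableref{tab:results_five_epochs}.
\end{itemize}

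The observation then reduces to three claims, each to be checked as a function of size. First, the low-likelihood variance shrinks from 3B to 7B. Second, the train-set separation is small at 1.5B, where the distributions overlap with the initial loss range, and larger at 3B and 7B. Third, the test-set separation grows monotonically with size. For the generalization clause I would add the 5-epoch pass@1 averages from \Tableref{tab:results_five_epochs}. These increase with size for both data choices, and the high-versus-low gap changes sign from 1.5B to 7B/8B, so the effect of selection becomes more visible at larger scale.

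To make the claims more than visual, I would repeat training with several seeds and compute bootstrap confidence intervals for each statistic. I would then test monotonicity with a simple trend test across the three sizes. I would also confirm robustness by replacing per-token loss with sequence loss and AMC with a second test set. As a mechanistic sanity check, I would tie the result to the teacher-view analysis (\Figureref{fig:loss_teacher}). When the student moves toward the teacher, its base-model loss should rise coherently rather than diffusely. So a decrease in teacher loss should coincide with reduced $\ell_0$ variance, which explains why the 7B student is more ``focused'' than the 3B one.

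The main obstacle is the stability clause. It is not monotone: variance grows from 1.5B to 3B before shrinking at 7B, so the claim is genuinely non-linear in capacity. It is also confounded by response length and by repetition pathologies like those in \Tableref{tab:capacity_get_lost_example}, where degenerate repetitive outputs can have artificially low per-token loss. My first attempt would be to filter or separately report degenerate (repetitive or truncated) generations, using an $n$-gram repetition rate, and recompute the variance on the remaining outputs. If the 3B-to-7B contraction survives this filtering, the clause is supported. If it does not, I would weaken the claim to hold only conditional on non-degenerate outputs.
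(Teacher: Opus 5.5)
Your core plan matches the paper's own argument, made quantitative. The paper supports this observation only by reading \Figureref{fig:loss_base}, the base-model likelihood of generated answers on MATH12K and AMC for the 1.5B, 3B and 7B students:
\begin{itemize}
\item stability comes from the smaller low-likelihood spread at 7B than at 3B;
\item distinctness comes from the wider high/low separation at 3B than at 1.5B;
\item generalization comes from the test-set high/low gap widening with size.
\end{itemize}
Your variance and separation statistics, with seeds and bootstrap intervals, formalize that reading faithfully.

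The problems are in two steps you add on top. First, the pass@1 step points the wrong way for the distinctness clause, because a sign flip is not a more pronounced distinction.
\begin{itemize}
\item Averaging the per-benchmark bests in \Tableref{tab:results_five_epochs}, the high-minus-low gap is roughly $+7.8$ (1.5B), $+6.6$ (3B), $+2.5$ (4B), $-1.0$ (7B) and $-1.5$ (8B).
\item The checkpoint-aligned best averages in the appendix give $24.16$ vs.\ $16.50$ (1.5B) and $33.38$ vs.\ $24.63$ (3B), against $45.41$ vs.\ $46.19$ (7B) and $66.64$ vs.\ $70.69$ (Qwen3-8B).
\end{itemize}
So in accuracy, the choice of data matters \emph{less} in magnitude as size grows. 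The paper's claim concerns how separated the output distributions are under $\pi_{\theta_0}$, not accuracy.

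Your statement that pass@1 increases with size for both data choices also fails. For high-likelihood data, Qwen3-4B (about $47.8$) beats Qwen2.5-Math-7B (about $47.5$). Moreover, raw pass@1 levels across Qwen2.5 base, Instruct, Math and Qwen3 checkpoints mostly reflect the starting model, not how well the fine-tuning effect generalizes. Keep pass@1 for Observations 1--2, and support this clause only with the test-set separation in $\ell_0$.

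Second, your mechanistic sanity check is contradicted by the paper's own 3B results. The teacher-view analysis reports that teacher loss \emph{decreases} for the 3B student under low-likelihood data; the paper singles out medium-sized students as benefiting particularly. Yet the 3B outputs show the larger base-model spread that the paper contrasts with the focused 7B. Both 3B and 7B move toward the teacher under low-likelihood data, so ``teacher loss down implies $\ell_0$ variance down'' cannot explain why 7B is more focused than 3B. As you state it, the check would fail.

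Relatedly, the premise behind your ``main obstacle'' is not what the paper reports. The paper describes the 1.5B low-likelihood distribution as already ``broader and more divergent,'' and it compares 1.5B and 3B only in the size of the shift away from the base distribution. The stability clause therefore rests solely on the 3B-to-7B contraction. State it that way rather than asserting a 1.5B-to-3B variance increase that neither you nor the paper has measured.
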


%% file: sections/Theory.tex
\section{Theoretical Mechanism of Capacity-Constrained Distillation}
\label{sec:theory}

\definecolor{STUDENTCOLOR}{HTML}{CC00CC}

\begin{figure}[ht]
    \centering
    \includegraphics[width=.6\linewidth]{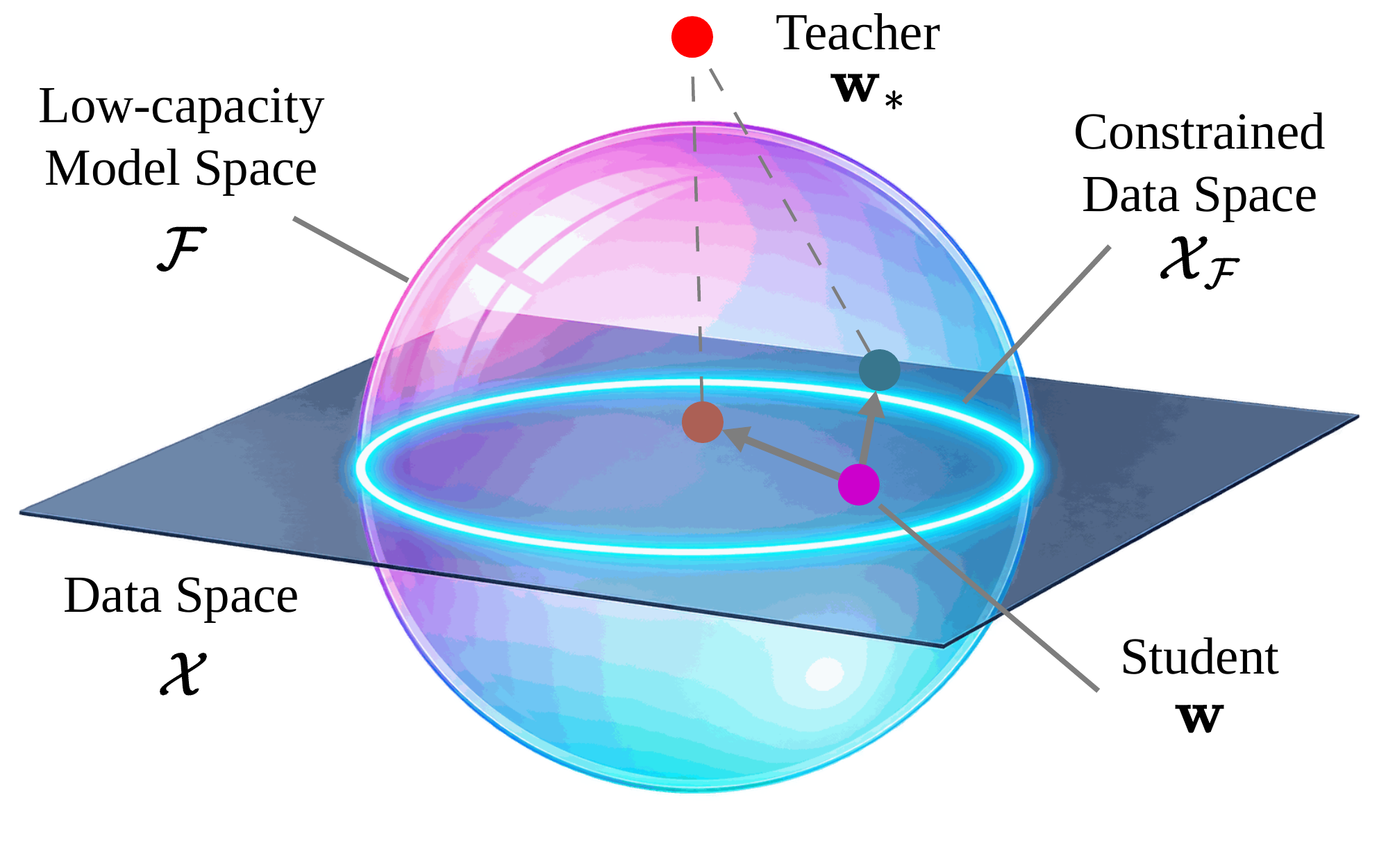}
    \caption{\textbf{Geometric intuition for distillation with a low-capacity student.} The data space acts as a mirror that reveals teacher knowledge to the student, but limited capacity restricts learning to the portion that lies in the student’s feasible space, thereby shrinking the effective supervision. As a result, high-capacity students {\color{STUDENTCOLOR}\raisebox{-0.4ex}{\scalebox{2}{$\bullet$}}} can absorb more of the exposed knowledge and move toward {\color{LARGECOLOR}\raisebox{-0.4ex}{\scalebox{2}{$\bullet$}}}, whereas low-capacity students move only toward {\color{SMALLCOLOR}\raisebox{-0.4ex}{\scalebox{2}{$\bullet$}}}.}
    \label{fig:distill_sol}
\end{figure}

In this section, we provide a theoretical account of the capacity-dependent effects observed in our experiments. Our goal is to clarify why the same data selection can produce markedly different outcomes for students of different capacities, and in particular, why low-likelihood data may benefit larger models while failing to help smaller ones.

\textbf{Setting.} We study knowledge distillation for binary classification, where a student model learns from labels generated by a teacher model. Let the teacher and student be linear classifiers \(h^\ast, h : \mathcal{X} \to \mathcal{Y}\), defined by
\begin{equation}
    h^\ast(\mathbf{x}) = \mathbf{1}\{\mathbf{w}_\ast^\top \mathbf{x} \ge 0\},\qquad
    h(\mathbf{x}) = \mathbf{1}\{\mathbf{w}^\top \mathbf{x} \ge 0\},
\end{equation}
where \(\mathbf{w}_\ast, \mathbf{w} \in \mathbb{R}^d \setminus \{0\}\) are the teacher and student parameter vectors, respectively, and \(\mathbf{1}\{\cdot\}\) denotes the indicator function. The weight vectors can also be parameterized as the product of matrices, $\mathbf{w}^\top=\mathbf{W}_N\mathbf{W}_{N-1}\cdots\mathbf{W}_{1}$ for some $N\ge1$. When $n\ge2$, this parameterization is known as \textit{deep linear network}, whose analysis is deferred in \Appendixref{appendix:deep_linear_network}. For a low-capacity student, we assume that the feasible student parameter space is a subspace \(\mathcal{F} \subseteq \mathbb{R}^d\), and let $P_{\mathcal{F}} : \mathbb{R}^d \to \mathcal{F}$ denote the orthogonal projection onto \(\mathcal{F}\). The student is trained on a dataset $\mathcal{D} = \{(\mathbf{x}_i, y_i)\}_{i=1}^n$, where each soft label $y_i$ is provided by the teacher,
\begin{equation}
    y_i = \sigma(\mathbf{w}_\ast^\top \mathbf{x}_i),
\end{equation}
with \(\sigma(\cdot)\) denoting the sigmoid function. Let \(\mathbf{w}(\tau)\) denote the student parameter at training time \(\tau\). The student is updated by gradient flow on the normalized cross-entropy loss \(L^1\).

\begin{equation}
    L^1(\mathbf{w}) =-\frac{1}{n}\sum_{i=1}^n\left[ y_i\log\sigma\left(\mathbf{w}^\top\mathbf{x}_i\right)+\left(1-y_i\right)\log\left(1-\sigma\left(\mathbf{w}^\top\mathbf{x}_i\right)\right) \right] - L^\ast,
\end{equation}

where $L^\ast$ is a normalization constant. Detailed setting description is in \Appendixref{appendix:setting}.

\begin{theorem}[High-capacity Student Solution]
\label{theorem:high-cap_student_sol}
Let \(P_{\mathbf X}\) be the orthogonal projector onto \(\operatorname{span}\{\mathbf{x}_1,\dots,\mathbf{x}_n\}\). Then, as \(\tau \to \infty\), the student parameter, $\mathbf{w}(\tau)$, converges to $\hat{\mathbf{w}}$, where

\begin{equation}
    \hat{\mathbf{w}} = {\color{pastelpurple}\mathbf{w}_0} + {\color{pastelred}P_{\mathbf X}}({\color{pastelgreen}\mathbf{w}_\ast-\mathbf{w}_0}).
\end{equation}
\end{theorem}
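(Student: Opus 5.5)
The plan is to decompose $\mathbb{R}^d = \operatorname{span}\{\mathbf{x}_1,\dots,\mathbf{x}_n\} \oplus \operatorname{span}\{\mathbf{x}_i\}^\perp$ and analyze the gradient flow separately on the two components. The gradient of $L^1$ is
\begin{equation*}
\nabla L^1(\mathbf{w}) = \frac{1}{n}\sum_{i=1}^n \bigl(\sigma(\mathbf{w}^\top\mathbf{x}_i) - y_i\bigr)\mathbf{x}_i ,
\end{equation*}
and it always lies in $\operatorname{span}\{\mathbf{x}_i\}$. Hence $(I-P_{\mathbf X})\dot{\mathbf{w}}(\tau) = 0$, so $(I-P_{\mathbf X})\mathbf{w}(\tau) = (I-P_{\mathbf X})\mathbf{w}_0$ for all $\tau$. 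This gives the ${\color{pastelpurple}\mathbf{w}_0}$-part of the limit on the orthogonal complement.

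Next I will treat the in-span component $\mathbf{u}(\tau) := P_{\mathbf X}\mathbf{w}(\tau)$. Because $\mathbf{w}^\top\mathbf{x}_i = \mathbf{u}^\top\mathbf{x}_i$, the loss restricted to the affine slice $\mathbf{w}_0 + \operatorname{span}\{\mathbf{x}_i\}$ is a function of $\mathbf{u}$ alone. It is convex, as a sum of logistic cross-entropies composed with linear maps. Its first-order condition is $\sum_i(\sigma(\mathbf{u}^\top\mathbf{x}_i)-y_i)\mathbf{x}_i=0$. The point $\mathbf{u}^\ast = P_{\mathbf X}\mathbf{w}_\ast$ satisfies $\mathbf{u}^{\ast\top}\mathbf{x}_i = \mathbf{w}_\ast^\top\mathbf{x}_i$, so every residual $\sigma(\mathbf{u}^{\ast\top}\mathbf{x}_i)-y_i$ vanishes. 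Therefore $\mathbf{u}^\ast$ is a global minimizer, with loss exactly $L^\ast$; this is where the normalization constant $L^\ast$ enters.

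For uniqueness and convergence, the Hessian on the span is $\frac1n\sum_i \sigma'(\mathbf{u}^\top\mathbf{x}_i)\mathbf{x}_i\mathbf{x}_i^\top$. Since $\sigma'>0$, this matrix is positive definite when restricted to $\operatorname{span}\{\mathbf{x}_i\}$. The loss is therefore strictly convex there, and $\mathbf{u}^\ast$ is its unique minimizer in the span.

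The convergence argument will use the Lyapunov function $V(\tau)=\tfrac12\|\mathbf{u}(\tau)-\mathbf{u}^\ast\|^2$. Its derivative is $\dot V = -\langle \nabla L^1(\mathbf{u}),\mathbf{u}-\mathbf{u}^\ast\rangle \le -(L^1(\mathbf{u})-L^1(\mathbf{u}^\ast))\le 0$ by convexity. This shows the trajectory is bounded, and hence lies in a compact set. On that compact set $\sigma'$ is bounded below by some $c>0$, which gives strong convexity with modulus $c\lambda_{\min}^+(\frac1n\sum_i\mathbf{x}_i\mathbf{x}_i^\top)$. It then follows that $\dot V\le -2\mu V$, and $\mathbf{u}(\tau)\to\mathbf{u}^\ast$ exponentially.

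Combining the two parts gives
\begin{equation*}
\hat{\mathbf{w}} = (I-P_{\mathbf X})\mathbf{w}_0 + P_{\mathbf X}\mathbf{w}_\ast = \mathbf{w}_0 + P_{\mathbf X}(\mathbf{w}_\ast-\mathbf{w}_0).
\end{equation*}

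The main obstacle is the convergence step. Logistic loss is not globally strongly convex, because $\sigma'$ decays at infinity. Boundedness of the trajectory must therefore be established first, via the monotone decrease of $V$, before a local strong-convexity constant can be extracted. If this proves delicate, I will fall back on a LaSalle-type argument: $V$ is nonincreasing, so the trajectory is bounded, and its only possible limit points are stationary points in the span, which reduce to $\mathbf{u}^\ast$ by strict convexity.
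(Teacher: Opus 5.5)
Your argument is correct. Its first step coincides with the paper's: the gradient lies in $\operatorname{span}(\mathbf X)$, so $\mathbf w(\tau)\in\mathbf w_0+\operatorname{span}(\mathbf X)$. You then close the convergence step with a different Lyapunov argument.

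The paper uses the loss itself as the Lyapunov function. From $\frac{d}{d\tau}L^1(\mathbf w(\tau))=-\|\nabla L^1(\mathbf w(\tau))\|^2$ and a restricted Polyak--Lojasiewicz inequality on a loss sublevel set, it gets $L(t)\le L(0)e^{-ct}$. On that sublevel set, coercivity of each $\ell_i$ bounds $\mathbf w^\top\mathbf x_i$, and hence bounds $\sigma'$ from below. Restricted strong convexity at $\hat{\mathbf w}$, where $L^1$ and $\nabla L^1$ vanish and $\mathbf w(t)-\hat{\mathbf w}\in\operatorname{span}(\mathbf X)$, then gives $\|\mathbf w(t)-\hat{\mathbf w}\|^2\le\frac{2}{\mu}L(t)$.

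You instead take $\frac12\|\mathbf u-\mathbf u^\ast\|^2$ as the Lyapunov function. Plain convexity gives monotonicity, which confines the trajectory to a ball around $\mathbf u^\ast$. Strong monotonicity of the gradient on that ball, restricted to the span, then gives the exponential rate directly.

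Your route is shorter and somewhat more general:
\begin{itemize}
\item Because it measures curvature by $\lambda^+_{\min}$ on the span, it needs no full-rank assumption on $\mathbf X$. The paper's proof invokes full rank through its PL and strong-convexity propositions, although the theorem statement does not assume it.
\item The region your argument works in automatically contains $\mathbf w_0$. The paper's choice $\{L^1\le L^1(\mathbf 0)\}$ contains the trajectory only if $L^1(\mathbf w_0)\le L^1(\mathbf 0)$.
\end{itemize}

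The paper's loss-based template, in turn, has its own advantages:
\begin{itemize}
\item It yields an explicit decay rate for the objective.
\item It is what carries over to the deep linear student in the appendix. There the induced flow $-\|\mathbf w\|^{2(N-1)/N}\bigl(\nabla L^1+(N-1)P_{\mathbf w}\nabla L^1\bigr)$ still decreases the loss, but it no longer freezes the component orthogonal to the data span, so your distance function is not obviously monotone in that setting.
\end{itemize}

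Your LaSalle fallback is not needed, since the strong-monotonicity step already goes through.
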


\begin{proof}
    The proof is deferred in \Appendixref{appendix:one_layer_network}.
\end{proof}

\textbf{Interpretation.} \Theoremref{theorem:high-cap_student_sol} shows that training can only transfer teacher knowledge that is exposed by the data. The vector {\color{pastelgreen}\(\mathbf{w}_\ast-\mathbf{w}_0\)} represents the knowledge gap between teacher and student. The projection term \({\color{pastelred}P_{\mathbf X}}({\color{pastelgreen}\mathbf{w}_\ast-\mathbf{w}_0})\) indicates that the knowledge gap between the teacher and the student is transferred only through its component in the data span. Thus, the data act as a mirror of the teacher: if the data span aligns well with the knowledge gap, the student learns effectively; if it is orthogonal to that gap, then no transfer occurs. Moreover, directions orthogonal to the data span are never updated, so the final student remains partially constrained by its initialization {\color{pastelpurple}\(\mathbf{w}_0\)}.

\begin{theorem}[Low-capacity student solution]
\label{theorem:low-cap_student_sol}
Let \(\mathbf{X}_{\mathcal{F}}\) be the projection of the data span onto \(\mathcal{F}\), and let \(P_{\mathbf{X}_{\mathcal{F}}}\) denote the orthogonal projection onto \(\mathbf{X}_{\mathcal{F}}\). Then, as \(\tau \to \infty\), the student parameter converges to $\hat{\mathbf{w}}$, where
\begin{equation}
    \hat{\mathbf{w}} = \mathbf{w}_0 + {\color{pastelorange}P_{\mathbf{X}_{\mathcal{F}}}}(\mathbf{w}_\ast-\mathbf{w}_0).
\end{equation}
\end{theorem}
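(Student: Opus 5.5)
The plan is to reduce the statement to \Theoremref{theorem:high-cap_student_sol}. The reduction is an equivalent unconstrained problem in which the inputs are replaced by their projections onto $\mathcal{F}$.

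\textbf{Step 1: reformulate the dynamics.} I model the low-capacity student as gradient flow restricted to $\mathcal{F}$, with $\mathbf{w}_0 \in \mathcal{F}$, so the update is
\begin{equation*}
\dot{\mathbf{w}} = -P_{\mathcal{F}}\nabla L^1(\mathbf{w}) = -\frac{1}{n}\sum_{i=1}^n \bigl(\sigma(\mathbf{w}^\top\mathbf{x}_i)-y_i\bigr)\,P_{\mathcal{F}}\mathbf{x}_i .
\end{equation*}
For every $\mathbf{w}\in\mathcal{F}$ we have $\mathbf{w}^\top\mathbf{x}_i=\mathbf{w}^\top P_{\mathcal{F}}\mathbf{x}_i$. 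Writing $\tilde{\mathbf{x}}_i := P_{\mathcal{F}}\mathbf{x}_i$, the trajectory is therefore exactly the unconstrained gradient flow of a cross-entropy loss with inputs $\tilde{\mathbf{x}}_i$ and the same soft labels $y_i$. The span of the $\tilde{\mathbf{x}}_i$ is $\mathbf{X}_{\mathcal{F}}$.

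\textbf{Step 2: invariance and convergence.} The velocity always lies in $\mathbf{X}_{\mathcal{F}}$, so $\mathbf{w}(\tau)\in\mathbf{w}_0+\mathbf{X}_{\mathcal{F}}$ for all $\tau$, and $P_{\mathbf{X}_{\mathcal{F}}}^{\perp}\mathbf{w}(\tau)=P_{\mathbf{X}_{\mathcal{F}}}^{\perp}\mathbf{w}_0$ is frozen. On this affine set the loss is convex. Its Hessian is
\begin{equation*}
\frac1n\sum_i\sigma'(\mathbf{w}^\top\tilde{\mathbf{x}}_i)\,\tilde{\mathbf{x}}_i\tilde{\mathbf{x}}_i^\top ,
\end{equation*}
which is positive definite on $\mathbf{X}_{\mathcal{F}}$ on bounded sets. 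Once a minimizer exists, this gives a unique minimizer in the affine set and convergence of the flow to it. This argument is the same as in \Theoremref{theorem:high-cap_student_sol}, with $P_{\mathbf{X}}$ replaced by $P_{\mathbf{X}_{\mathcal{F}}}$.

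\textbf{Step 3: identify the limit.} With the normalization constant $L^\ast$ chosen as the optimum, as in the appendix setting, the limit is the point of $\mathbf{w}_0+\mathbf{X}_{\mathcal{F}}$ whose logits match the teacher's. The candidate $\hat{\mathbf{w}}=\mathbf{w}_0+P_{\mathbf{X}_{\mathcal{F}}}(\mathbf{w}_\ast-\mathbf{w}_0)$ satisfies $\hat{\mathbf{w}}^\top\tilde{\mathbf{x}}_i=\mathbf{w}_\ast^\top\tilde{\mathbf{x}}_i$, because $\tilde{\mathbf{x}}_i\in\mathbf{X}_{\mathcal{F}}$. It is the unique such point, because two solutions differ by a vector in $\mathbf{X}_{\mathcal{F}}$ that is orthogonal to every $\tilde{\mathbf{x}}_i$, and such a vector is zero.

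\textbf{Main obstacle.} Step 3 needs $\mathbf{w}_\ast^\top\tilde{\mathbf{x}}_i=\mathbf{w}_\ast^\top\mathbf{x}_i$, so that the teacher's labels are realizable by the projected problem. In general the teacher has a component outside $\mathcal{F}$ that the student cannot represent.
\begin{itemize}
\item I would first check whether the appendix setting interprets the teacher's supervision as seen through $\mathcal{F}$, meaning the labels are effectively $\sigma(\mathbf{w}_\ast^\top P_{\mathcal{F}}\mathbf{x}_i)$, or equivalently whether it assumes $(I-P_{\mathcal{F}})\mathbf{w}_\ast\perp\operatorname{span}\{\mathbf{x}_i\}$. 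Under either assumption the identification above goes through verbatim.
\item Otherwise I would state the result under that realizability assumption. I would remark that without it the limit is still the unique minimizer in $\mathbf{w}_0+\mathbf{X}_{\mathcal{F}}$ given by Step 2, but no longer has the closed form.
\end{itemize}
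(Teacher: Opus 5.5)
Your proof takes the same route as the paper. The paper's argument is two lines: Theorem~\ref{thm:projected_data_equiv} replaces the constrained problem by an unconstrained one on the inputs $P_{\mathcal F}\mathbf{x}_i$, and Theorem~\ref{appendix:high_capacity_one_layer_sol} is then applied to it as a black box.

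The obstacle you flag is real, and it is a defect of the paper's argument rather than of yours. Invoking the high-capacity result on the projected problem tacitly treats the labels as $\sigma(\mathbf{w}_\ast^\top P_{\mathcal F}\mathbf{x}_i)$. The setting defines them as $\sigma(\mathbf{w}_\ast^\top\mathbf{x}_i)$, and the paper never reconciles the two.

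A minimal counterexample to the statement as written: take $d=2$, $\mathcal F=\operatorname{span}\{\mathbf{e}_1\}$, $\mathbf{w}_\ast=(1,1)^\top$, a single input $\mathbf{x}_1=(1,1)^\top$ and $\mathbf{w}_0=\mathbf{0}$. Writing $\mathbf{w}=(a,0)^\top$, the constrained flow is $\dot a=-(\sigma(a)-\sigma(2))$, so $\mathbf{w}(\tau)\to(2,0)^\top$. The stated formula instead gives $P_{\mathbf{X}_{\mathcal F}}\mathbf{w}_\ast=(1,0)^\top$.

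In this example the label is realizable by a student in $\mathcal F$. So the hypothesis actually needed is not realizability as such, despite your phrasing. It is precisely the identity you wrote: $\mathbf{w}_\ast^\top P_{\mathcal F}\mathbf{x}_i=\mathbf{w}_\ast^\top\mathbf{x}_i$ for all $i$, i.e.\ $(I-P_{\mathcal F})\mathbf{w}_\ast\perp\operatorname{span}\{\mathbf{x}_1,\dots,\mathbf{x}_n\}$. Under that hypothesis your Steps 1--3 form a complete proof, and adding it as an explicit assumption is the right fix.

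Your choice to prove convergence directly on $\mathbf{w}_0+\mathbf{X}_{\mathcal F}$, via positive definiteness of the Hessian on $\mathbf{X}_{\mathcal F}$, also has a smaller advantage. The paper's black-box step inherits the full-rank hypothesis of Proposition~\ref{theo:strong_convex}. The projected matrix $P_{\mathcal F}\mathbf{X}$ has rank at most $\dim\mathcal F$, so it violates that hypothesis in ambient coordinates whenever $\dim\mathcal F<\min(d,n)$. Your argument needs no rank condition at all.
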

\begin{proof}
    The proof is deferred in \Appendixref{appendix:one_layer_network}.
\end{proof}

\definecolor{SHARED_SPACE}{HTML}{BFFBFB}

\textbf{Interpretation.} The projection term ${\color{pastelorange}P_{\mathbf{X}_{\mathcal{F}}}}$ in \Theoremref{theorem:low-cap_student_sol} shows that, for a low-capacity student, only the portion of the data span that lies in the feasible student space \(\mathcal{F}\) is useful for learning, as shown in \Figureref{fig:distill_sol}. Hence, distillation is constrained not only by what the data reveal about the teacher, but also by what the student can represent. 

\begin{takeaway}[Capacity-aligned data is necessary for low-capacity distillation]
    For low-capacity students, successful distillation requires not only data that exposes teacher knowledge, but data whose exposed knowledge lies within the student’s representable subspace.
\end{takeaway}

\textbf{Connection to Failure of Low-likelihood Data for Small LLMs.} 
Low-likelihood data can be useful because it exposes teacher knowledge beyond the student’s current distribution. However, our theory shows that a low-capacity student can learn only the portion of this knowledge that lies in its feasible subspace. When the likelihood is too low, the exposed teacher knowledge may exceed what the student can represent or optimize, making the supervision ineffective.

%% file: sections/Conclusion.tex
\section{Conclusion}

This work challenges the prevailing assumption that high-likelihood data is universally optimal for fine-tuning LLMs on reasoning tasks. Through controlled experiments across model scales (1.5B–8B parameters) and analysis of learning dynamics, we demonstrate a clear capacity-dependent ``{\color{SMALLCOLOR}\textbf{Fast-Fit}} / {\color{LARGECOLOR}\textbf{Slow-Gain}}'' pattern: high-likelihood data yields faster and more stable early improvements, particularly for smaller models, while low-likelihood data becomes increasingly beneficial for larger models given sufficient training budget. 

%% file: sections/Acknowledgement.tex
\section*{Acknowledgements}
Ruoxi Jia and the ReDS lab acknowledge support from the National Science Foundation through grants IIS-2312794, IIS-2313130, OAC-2239622, CNS-2424127, OAC-2613761, and the Amazon-Virginia Tech Initiative for Efficient and Robust Machine Learning. We also acknowledge Advanced Research Computing at Virginia Tech for providing computational resources and technical support that have contributed to the results reported within this paper.

%% file: sections/Appendix.tex
\clearpage

\appendix

\section{Missing Proofs}

\subsection{Setting}
\label{appendix:setting}
\textbf{Teacher-Student Model Notations.} We formally introduce distillation in the context of binary
classification. Let $\mathcal{X}$ be the input space, $\mathcal{Y}=\left\{0,1\right\}$ be the label space, and $\mathcal{D}$ the probability distribution of inputs.

The teacher $h^\ast:\mathcal{X}\rightarrow\mathcal{Y}$ is a fixed linear classifier, i.e.

\begin{equation}
    h^\ast\left(\mathbf{x}\right) = \mathbf{1}\left\{\mathbf{w}_\ast^\top \mathbf{x} \ge 0\right\},
\end{equation}
for some $\mathbf{w}_\ast\in\mathbb{R}^d\backslash\left\{0\right\}$, where $\mathbf{1}\left\{.\right\}$ returns 1 if the argument is true and 0 otherwise. The student is also a linear classifier, i.e.
\begin{equation}
    h\left(\mathbf{x}\right)=\mathbf{1}\left\{\mathbf{w}^\top\mathbf{x}\geq0\right\}.
\end{equation}

\textbf{Low-capacity students lie in a subspace of teacher models.} We model the low-capacity student class as an $s$-dimensional subspace $\mathcal S \subset \mathbb{R}^d$, and let $P_{\mathcal S}$ denote the orthogonal projection from the teacher parameter space $\mathbb{R}^d$ onto $\mathcal S$. The orthogonal projection assumption is realistic for many teacher-student pairs used in practice. In model compression and distillation, the student often preserves the teacher's architectural form while reducing capacity, for example, by using fewer transformer layers, fewer attention heads, or smaller hidden dimensions \citep{jiao2020tinybert,muralidharan2024compact,yan2026distribution}. Under this view, the student parameterization can be interpreted as a lower-dimensional subspace of the teacher parameter space, and the representable component of teacher knowledge is naturally modeled as the orthogonal projection onto this subspace.

\textbf{Distillation Dataset Construction.} Distillation proceeds as follows. We collect a transfer
set $\left\{\left(\mathbf{x}_i,y_i\right)\right\}_{i=1}^n$ consisting of inputs $\mathbf{x}_i$ sampled i.i.d. from $\mathcal{D}$ and soft labels
\begin{equation}
    y_i=\sigma\left(\mathbf{w}_\ast^\top\mathbf{x}_i\right)
\end{equation}
provided by the teacher, where $\sigma$ is the sigmoid function,
\begin{equation}
    \sigma\left(x\right)=\frac{1}{1+\operatorname{exp}\left(-x\right)}\;.
\end{equation}
The soft real-valued labels can be thought of as a more informative version of the hard (0/1-valued) labels of the standard classification setting. We denote
\begin{equation}
    \mathbf{X}=\left[\mathbf{x}_1,\dots,\mathbf{x}_n\right]\in\mathbb{R}^{d\times n}
\end{equation}
the data matrix.

\textbf{Student Model Optimization.} The student is trained by
minimizing the (normalized) cross-entropy loss
\begin{align}
\label{eq:cross_entropy}
   L^1(\mathbf{w})&=-\frac{1}{n}\sum_{i=1}^n \ell_i(\mathbf{w}^\top \mathbf{x}_i) - L^\ast \\
   &=-\frac{1}{n}\sum_{i=1}^n\left[ y_i\log\sigma\left(\mathbf{w}^\top\mathbf{x}_i\right)+\left(1-y_i\right)\log\left(1-\sigma\left(\mathbf{w}^\top\mathbf{x}_i\right)\right) \right] - L^\ast,
\end{align}
where $L^\ast$ is a normalization constant such that the minimum of $L^1$ is 0.

\Equationref{eq:cross_entropy} is just the simplified notation. For better analysis on optimization, especially deep neural network optimization, involving many parameter components, we define

\begin{equation}
    L\left(\mathbf{W}_1,\dots,\mathbf{W}_N\right):= L^1\left(\left(\mathbf{W}_N\mathbf{W}_{N-1}\cdots\mathbf{W}_1\right)^\top\right),
\end{equation}
and optimize it via gradient descent. We write $\mathbf{W}_i\left(\tau\right)$ for the value of the matrix $\mathbf{W}_i$ at time $\tau\in[0,\infty)$, with $\mathbf{W}_i\left(0\right)$ denoting the initial value, and $\mathbf{w}\left(\tau\right)=\mathbf{W}_N\left(\tau\right)\cdots\mathbf{W}_1\left(\tau\right)$. Then, each $\mathbf{W}_i\left(\tau\right)$, for $i\in\{1,\dots,N\}$, evolves according to the forllowing differential equation,

\begin{equation}
    \frac{\partial\mathbf{W}_i\left(\tau\right)}{\partial\tau}=-\frac{\partial L}{\partial\mathbf{W}_i}\left(\mathbf{W}_1\left(\tau\right),\dots,\mathbf{W}_N\left(\tau\right)\right).
\end{equation}

\subsection{Properties of Cross-Entropy Loss}

In this section, we show below the properties of cross-entropy loss shown in \Equationref{eq:cross_entropy}.

\begin{proposition}[Space of Gradient Updates]
\label{theo:grad_constraint}
The gradient update is constrained in the data span, $\nabla L^1\left(\mathbf{w}\right)\in\operatorname{span}\left(\mathbf{X}\right)$.
\end{proposition}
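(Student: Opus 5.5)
The plan is to compute the gradient of $L^1$ directly by the chain rule and observe that it is a linear combination of the data points. The normalization constant $L^\ast$ does not depend on $\mathbf{w}$, so it contributes nothing to the gradient. Each summand $\ell_i$ depends on $\mathbf{w}$ only through the scalar $z_i = \mathbf{w}^\top\mathbf{x}_i$, and $\nabla_{\mathbf{w}} z_i = \mathbf{x}_i$.

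First we differentiate the per-sample cross-entropy with respect to $z_i$. Using $\sigma'(z)=\sigma(z)(1-\sigma(z))$, we get
\begin{equation*}
\frac{\partial}{\partial z}\Bigl[y_i\log\sigma(z)+(1-y_i)\log\bigl(1-\sigma(z)\bigr)\Bigr] = y_i\bigl(1-\sigma(z)\bigr)-(1-y_i)\sigma(z) = y_i-\sigma(z).
\end{equation*}
Combining this with the chain rule gives
\begin{equation*}
\nabla L^1(\mathbf{w}) = \frac{1}{n}\sum_{i=1}^n\bigl(\sigma(\mathbf{w}^\top\mathbf{x}_i)-y_i\bigr)\mathbf{x}_i = \mathbf{X}\,\mathbf{r}(\mathbf{w}),
\end{equation*}
where $\mathbf{r}(\mathbf{w})\in\mathbb{R}^n$ has entries $r_i(\mathbf{w})=\tfrac1n\bigl(\sigma(\mathbf{w}^\top\mathbf{x}_i)-y_i\bigr)$.

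Since $\nabla L^1(\mathbf{w})=\mathbf{X}\,\mathbf{r}(\mathbf{w})$, it lies in the column space of $\mathbf{X}$, which is $\operatorname{span}\{\mathbf{x}_1,\dots,\mathbf{x}_n\}$, and this holds for every $\mathbf{w}$. In particular, under gradient flow, $\mathbf{w}(\tau)-\mathbf{w}_0$ stays in $\operatorname{span}(\mathbf{X})$, which is the form in which the proposition will feed into \Theoremref{theorem:high-cap_student_sol}.

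There is no real obstacle; the only point needing care is the scalar derivative $y_i-\sigma(z)$, which is the standard softplus/logistic identity. For the deep linear parameterization, the same computation shows that $\partial L/\partial \mathbf{W}_1$ factors through $\nabla L^1(\mathbf{w})$. That case is stated separately and is not needed here.
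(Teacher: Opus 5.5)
Your proposal is correct and matches the paper's proof: both compute $\nabla L^1(\mathbf{w})=\frac{1}{n}\sum_{i=1}^n(\sigma(\mathbf{w}^\top\mathbf{x}_i)-y_i)\mathbf{x}_i$ and conclude that it is a linear combination of the training inputs. You also spell out the scalar derivative $y_i-\sigma(z)$ and the matrix form $\mathbf{X}\mathbf{r}(\mathbf{w})$, which the paper leaves implicit.
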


\begin{proof}
    The gradient with respect to the student weight vector $\mathbf{w}$ is
\begin{equation}
    \nabla L^1(\mathbf{w})=\frac{1}{n}\sum_{i=1}^n \bigl(\sigma(\mathbf{w}^\top \mathbf{x}_i)-y_i\bigr)\cdot\mathbf{x}_i\,.
\end{equation}
Therefore, the gradient is always a linear combination of the training inputs ($\mathbf{x}_1,\dots,\mathbf{x}_n$), and hence

\begin{equation}
    \nabla L^1(\mathbf{w})\in \operatorname{span}(\mathbf{X})\,.    
\end{equation}
\end{proof}

\begin{proposition}[Global Minima Condition]
    The global minimum of the cross-entropy loss is 0, and the set of global minimisers is
    \begin{equation}
    \label{eq:optimal_condition}
        \left\{\mathbf{w}\in\mathbb{R}^d:\mathbf{X}^\top\mathbf{w}=\mathbf{X}^\top\mathbf{w}_{\ast}\right\}\,.
    \end{equation}
\end{proposition}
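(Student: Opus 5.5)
The plan is to study the loss one sample at a time. The normalized cross-entropy in \Equationref{eq:cross_entropy} splits into per-sample terms, and each term depends on $\mathbf{w}$ only through the scalar $z_i=\mathbf{w}^\top\mathbf{x}_i$. For a fixed soft label $y_i\in(0,1)$, define
\[
f_i(z)=-\bigl[y_i\log\sigma(z)+(1-y_i)\log(1-\sigma(z))\bigr].
\]
Then $f_i'(z)=\sigma(z)-y_i$ and $f_i''(z)=\sigma(z)(1-\sigma(z))>0$. So each $f_i$ is strictly convex on $\mathbb{R}$, and its unique minimizer is $z_i^\ast=\sigma^{-1}(y_i)=\mathbf{w}_\ast^\top\mathbf{x}_i$.

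I will take $L^\ast$ to be the constant that makes the minimum value zero, namely
\[
L^\ast=-\frac1n\sum_{i=1}^n f_i(\mathbf{w}_\ast^\top\mathbf{x}_i).
\]
Under the sign convention of \Equationref{eq:cross_entropy}, with the leading minus absorbed into $\ell_i$, this gives
\[
L^1(\mathbf{w})=\frac1n\sum_{i=1}^n\bigl[f_i(\mathbf{w}^\top\mathbf{x}_i)-f_i(\mathbf{w}_\ast^\top\mathbf{x}_i)\bigr].
\]
Each bracket is nonnegative because $z_i^\ast$ minimizes $f_i$. Hence $L^1\ge0$, and $L^1(\mathbf{w}_\ast)=0$ shows that the infimum $0$ is attained, so the global minimum is $0$.

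For the characterization of the minimizers, first take any $\mathbf{w}$ with $\mathbf{X}^\top\mathbf{w}=\mathbf{X}^\top\mathbf{w}_\ast$. Then every bracket vanishes, so $L^1(\mathbf{w})=0$.

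Conversely, suppose $L^1(\mathbf{w})=0$. A sum of nonnegative terms is zero only if each term is zero, so $f_i(\mathbf{w}^\top\mathbf{x}_i)=f_i(z_i^\ast)$ for every $i$. Strict convexity makes the minimizer of $f_i$ unique, which forces $\mathbf{w}^\top\mathbf{x}_i=\mathbf{w}_\ast^\top\mathbf{x}_i$ for all $i$, that is, $\mathbf{X}^\top\mathbf{w}=\mathbf{X}^\top\mathbf{w}_\ast$.

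The only delicate point is the uniqueness step. It needs $y_i\in(0,1)$ strictly, which holds because the teacher produces labels through a sigmoid. It also needs $\sigma$ to be a bijection onto $(0,1)$, so that $z_i^\ast$ is well defined and equals $\mathbf{w}_\ast^\top\mathbf{x}_i$. I would state both facts explicitly. I would also add a remark that the minimizer set is the affine subspace $\mathbf{w}_\ast+\ker(\mathbf{X}^\top)=\mathbf{w}_\ast+\operatorname{span}(\mathbf{X})^\perp$, since that form is what the later convergence theorems use.
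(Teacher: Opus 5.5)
Your argument is correct and is essentially the paper's. Both proofs use the same steps: per-sample terms that are nonnegative after normalization, zero total loss forcing each term to vanish, and each term vanishing exactly when $\mathbf{w}^\top\mathbf{x}_i=\mathbf{w}_\ast^\top\mathbf{x}_i$. The paper asserts the last step via $\sigma(\mathbf{w}^\top\mathbf{x}_i)=\sigma(\mathbf{w}_\ast^\top\mathbf{x}_i)$ and monotonicity of $\sigma$. Your strict-convexity-in-the-logit argument, with $y_i\in(0,1)$ and the per-sample normalization made explicit, supplies the justification the paper leaves implicit.

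One cosmetic slip: since the paper's loss subtracts $L^\ast$, the constant should be $L^\ast=+\frac1n\sum_{i=1}^n f_i(\mathbf{w}_\ast^\top\mathbf{x}_i)$ rather than its negative. Your displayed expression for $L^1$ is the right one, so nothing downstream changes.
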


\begin{proof}
    We know that $L^1\geq 0$ and $L^1(\mathbf{w}_{\ast})=0$, so $0$ is the optimal objective value, and the set of global optima consists of all $\mathbf{w}$ such that $L^1\left(\mathbf{w}\right)=0$. The last condition is equivalent to $\forall_i:\ell_i\left(\mathbf{w}\right)=0$, which in turn is equivalent to $\forall_i:\sigma\left(\mathbf{w}^\top\mathbf{x}_i\right)=\sigma\left(\mathbf{w}_{\ast}^\top\mathbf{x}_i\right)$. By monotonicity of $\sigma$, this is further equivalent to $\forall_i:\mathbf{w}^\top\mathbf{x}_i=\mathbf{w}_{\ast}^\top\mathbf{x}_i$, which is a restatement of \Equationref{eq:optimal_condition}.
\end{proof}

\begin{lemma}[Bounds for Rayleigh Quotient]
\label{lem:rayleigh}
    Let $\mathbf{A}\in\mathbb{R}^{n\times n}$ be a real symmetric matrix, and let $\lambda_{\min}\left(\mathbf{A}\right)$ and $\lambda_{\max}\left(\mathbf{A}\right)$ denote its smallest and largest eigenvalues, respectively. Then
    \begin{equation}
        \lambda_{\min}\left(\mathbf{A}\right)\preceq\mathbf{A}\preceq\lambda_{\max}\left(\mathbf{A}\right),
    \end{equation}
    or equivalently, for every $\mathbf{z}\in\mathbb{R}^n$,
    \begin{equation}
        \lambda_{\min}\left(\mathbf{A}\right)\|\mathbf{z}\|^2\leq\mathbf{z}^\top\mathbf{A}\mathbf{z}\leq\lambda_{\max}\left(\mathbf{A}\right)\|\mathbf{z}\|^2,
    \end{equation}
\end{lemma}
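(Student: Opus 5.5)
Since $\mathbf{A}$ is real symmetric, I will use the spectral theorem. It gives an orthogonal matrix $\mathbf{Q}$ and a diagonal matrix $\boldsymbol{\Lambda}=\operatorname{diag}(\lambda_1,\dots,\lambda_n)$ such that $\mathbf{A}=\mathbf{Q}\boldsymbol{\Lambda}\mathbf{Q}^\top$. The eigenvalues $\lambda_i$ are real and ordered so that $\lambda_{\min}(\mathbf{A})=\lambda_1\le\cdots\le\lambda_n=\lambda_{\max}(\mathbf{A})$.

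Fix any $\mathbf{z}\in\mathbb{R}^n$ and change coordinates by setting $\mathbf{u}=\mathbf{Q}^\top\mathbf{z}$. Because $\mathbf{Q}$ is orthogonal, the norm is preserved: $\|\mathbf{u}\|^2=\mathbf{z}^\top\mathbf{Q}\mathbf{Q}^\top\mathbf{z}=\|\mathbf{z}\|^2$. In these coordinates the quadratic form becomes diagonal:
\begin{equation}
    \mathbf{z}^\top\mathbf{A}\mathbf{z}=\mathbf{u}^\top\boldsymbol{\Lambda}\mathbf{u}=\sum_{i=1}^n\lambda_i u_i^2 .
\end{equation}
Each weight $u_i^2$ is nonnegative, so I can bound every $\lambda_i$ termwise by $\lambda_{\min}(\mathbf{A})$ from below and $\lambda_{\max}(\mathbf{A})$ from above. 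This gives
\begin{equation}
    \lambda_{\min}(\mathbf{A})\sum_{i=1}^n u_i^2\le\sum_{i=1}^n\lambda_i u_i^2\le\lambda_{\max}(\mathbf{A})\sum_{i=1}^n u_i^2 .
\end{equation}
Replacing $\sum_i u_i^2$ by $\|\mathbf{z}\|^2$ yields the scalar inequality in the statement.

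For the Loewner form, I read $\lambda_{\min}(\mathbf{A})\preceq\mathbf{A}$ as $\lambda_{\min}(\mathbf{A})\mathbf{I}\preceq\mathbf{A}$, which by definition means $\mathbf{z}^\top(\mathbf{A}-\lambda_{\min}(\mathbf{A})\mathbf{I})\mathbf{z}\ge0$ for all $\mathbf{z}$. This is exactly the left inequality just proved, and the upper bound $\mathbf{A}\preceq\lambda_{\max}(\mathbf{A})\mathbf{I}$ follows from the right inequality in the same way, so the two formulations are equivalent.

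The only point that needs care is that the spectral theorem (real eigenvalues and an orthogonal diagonalization) depends on symmetry; once that is available, the rest is routine. If needed, I will also note that both bounds are attained at the corresponding eigenvectors, so the constants cannot be improved.
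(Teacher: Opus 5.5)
Your proof is correct and matches the paper's argument: you diagonalize $\mathbf{A}=\mathbf{Q}\Lambda\mathbf{Q}^\top$, change variables to $\mathbf{Q}^\top\mathbf{z}$ (which preserves the norm), and bound $\sum_i\lambda_i u_i^2$ term by term using $u_i^2\ge 0$. You also read the Loewner statement explicitly as $\lambda_{\min}(\mathbf{A})\mathbf{I}\preceq\mathbf{A}\preceq\lambda_{\max}(\mathbf{A})\mathbf{I}$, which is a small clarification of notation that the paper leaves implicit.
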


\begin{proof}
    Since $\mathbf{A}$ is real symmetric, the spectral theorem gives an orthogonal matrix $\mathbf{Q}\in\mathbb{R}^{n\times n}$ and a diagonal matrix
    \begin{equation}
        \Lambda=\operatorname{diag}\left(\lambda_1,\dots,\lambda_n\right)
    \end{equation}
    such that
    \begin{equation}
        \mathbf{A}=\mathbf{Q}\Lambda\mathbf{Q}^\top,
    \end{equation}
    where $\lambda_1,\dots,\lambda_n$ are the eigenvalues of $\mathbf{A}$. Let
    \begin{equation}
        \lambda_{\min}\left(\mathbf{A}\right)=\min_{1\leq i\leq n}\,\lambda_i,\qquad\lambda_{\max}\left(\mathbf{A}\right)=\max_{1\leq i\leq n}\,\lambda_i.
    \end{equation}
    Define
    \begin{equation}
        \mathbf{y}:=\mathbf{Q}^\top\mathbf{z}.
    \end{equation}
    Because $\mathbf{Q}$ is orthogonal, it preserves Euclidean norm, so
    \begin{equation}
        \|\mathbf{y}\|=\|\mathbf{z}\|.
    \end{equation}
    Then
    \begin{equation}
    \mathbf{z}^\top\mathbf{A}\mathbf{z}=\mathbf{z}^\top\mathbf{Q}\Lambda\mathbf{Q}^\top\mathbf{z}=\mathbf{y}^\top\Lambda\mathbf{y}=\sum_{i=1}^n\lambda_i\mathbf{y_i^2}.
    \end{equation}
    Since $\lambda_i\leq\lambda_{\max}\left(\mathbf{A}\right)$ for all $i$, we have
    \begin{equation}
        \sum_{i=1}^n\lambda_{\min}\left(\mathbf{A}\right)\mathbf{y}_i^2\leq\sum_{i=1}^n\lambda_i\mathbf{y}_i^2\leq\sum_{i=1}^n\lambda_{\max}\left(\mathbf{A}\right)\mathbf{y}_i^2
    \end{equation}
    or
    \begin{equation}
        \lambda_{\min}\left(\mathbf{A}\right)\sum_{i=1}^n\mathbf{y}_i^2\leq\sum_{i=1}^n\lambda_i\mathbf{y}_i^2\leq \lambda_{\max}\left(\mathbf{A}\right)\sum_{i=1}^n\mathbf{y}_i^2.
    \end{equation}
    Hence,
    \begin{equation}
        \lambda_{\min}\left(\mathbf{A}\right)\|\mathbf{y}\|^2\leq\mathbf{z}^\top\mathbf{A}\mathbf{z}\leq\lambda_{\max}\left(\mathbf{A}\right)\|\mathbf{y}\|^2.
    \end{equation}
    Because $\|\mathbf{y}\|=\|\mathbf{z}\|$, we obtain
    \begin{equation}
        \lambda_{\min}\left(\mathbf{A}\right)\|\mathbf{z}\|^2\leq\mathbf{z}^\top\mathbf{A}\mathbf{z}\leq\lambda_{\max}\left(\mathbf{A}\right)\|\mathbf{z}\|^2.
    \end{equation}
    This proves the claim.
\end{proof}

\begin{proposition}[Restricted Strong Convexity]
\label{theo:strong_convex}
    Assume $\mathbf{X}$ is full rank. For any sublevel set $\mathcal{W}=\left\{\mathbf{w}:L^1\left(\mathbf{w}\right)\leq l\right\}$, there exists $\mu>0$ such that
    \begin{equation}
        L^1\left(\mathbf{v}\right)\geq L^1\left(\mathbf{w}\right)+\nabla L^1\left(\mathbf{w}\right)^\top\left(\mathbf{v}-\mathbf{w}\right)+\frac{\mu}{2}\|\mathbf{v}-\mathbf{w}\|^2,
    \end{equation}
    for all $\mathbf{w,\,v\in\mathcal{W}}$ such that $\mathbf{v}-\mathbf{w}\in\operatorname{span}\left(\mathbf{X}\right)$.
\end{proposition}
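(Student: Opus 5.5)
We will derive the inequality from an exact second-order Taylor expansion of $L^1$ along the segment joining $\mathbf{w}$ and $\mathbf{v}$, and then bound the curvature term from below on the sublevel set. Writing $\mathbf{u}=\mathbf{v}-\mathbf{w}\in\operatorname{span}(\mathbf{X})$, the Hessian of $L^1$ is
\begin{equation}
\nabla^2 L^1(\mathbf{w})=\frac{1}{n}\sum_{i=1}^n \sigma'(\mathbf{w}^\top\mathbf{x}_i)\,\mathbf{x}_i\mathbf{x}_i^\top=\frac{1}{n}\mathbf{X}\mathbf{D}(\mathbf{w})\mathbf{X}^\top,
\end{equation}
where $\mathbf{D}(\mathbf{w})=\operatorname{diag}\bigl(\sigma'(\mathbf{w}^\top\mathbf{x}_i)\bigr)$ and $\sigma'=\sigma(1-\sigma)>0$. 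Taylor's theorem with integral remainder gives
\begin{equation}
L^1(\mathbf{v})=L^1(\mathbf{w})+\nabla L^1(\mathbf{w})^\top\mathbf{u}+\int_0^1(1-t)\,\mathbf{u}^\top\nabla^2L^1(\mathbf{w}+t\mathbf{u})\,\mathbf{u}\,dt .
\end{equation}
It therefore suffices to show that $\mathbf{u}^\top\nabla^2L^1(\mathbf{z})\mathbf{u}\ge\mu\|\mathbf{u}\|^2$ for every $\mathbf{z}$ on the segment $[\mathbf{w},\mathbf{v}]$, since $\int_0^1(1-t)\,dt=1/2$.

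\textbf{Step 1: the segment stays in $\mathcal{W}$.} Since $L^1$ is convex (its Hessian is PSD), each sublevel set is convex. Hence every $\mathbf{z}=\mathbf{w}+t\mathbf{u}$ satisfies $L^1(\mathbf{z})\le l$.

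\textbf{Step 2: the logits are bounded on $\mathcal{W}$.} This is the step we expect to be the main obstacle. The loss is a sum of per-sample KL divergences $\ell_i\ge0$, each of which is coercive in the logit $\mathbf{z}^\top\mathbf{x}_i$: the soft label $y_i=\sigma(\mathbf{w}_\ast^\top\mathbf{x}_i)$ lies strictly inside $(0,1)$, so $\ell_i\to\infty$ as the logit tends to $\pm\infty$. From $L^1(\mathbf{z})\le l$ we get $\ell_i(\mathbf{z})\le nl$ for each $i$. This confines $\mathbf{z}^\top\mathbf{x}_i$ to a compact interval $[-M_i,M_i]$ that depends only on $l$, $n$ and $y_i$. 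Consequently $\sigma'(\mathbf{z}^\top\mathbf{x}_i)\ge c:=\min_i\sigma'(M_i)>0$, using that $\sigma'$ is even and decreasing in $|\cdot|$. Note that $\mathcal{W}$ itself need not be bounded, because it is unbounded along directions orthogonal to the span. Only the logits are bounded, and that is exactly why the restriction $\mathbf{u}\in\operatorname{span}(\mathbf{X})$ is needed.

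\textbf{Step 3: the curvature bound.} For $\mathbf{u}\in\operatorname{span}(\mathbf{X})$ we have
\begin{equation}
\mathbf{u}^\top\nabla^2L^1(\mathbf{z})\mathbf{u}\ge\frac{c}{n}\|\mathbf{X}^\top\mathbf{u}\|^2=\frac{c}{n}\,\mathbf{u}^\top\mathbf{X}\mathbf{X}^\top\mathbf{u}.
\end{equation}
We now apply \Lemmaref{lem:rayleigh} to $\mathbf{X}\mathbf{X}^\top$ restricted to $\operatorname{span}(\mathbf{X})$. Equivalently, we write $\mathbf{u}=\mathbf{X}\boldsymbol{\alpha}$, or use the smallest \emph{nonzero} eigenvalue $\lambda_{+}$ of $\mathbf{X}\mathbf{X}^\top$, whose eigenvectors span $\operatorname{span}(\mathbf{X})$. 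This yields $\mathbf{u}^\top\mathbf{X}\mathbf{X}^\top\mathbf{u}\ge\lambda_{+}\|\mathbf{u}\|^2$. Under the full-rank assumption, $\lambda_{+}>0$ is simply $\lambda_{\min}$ of the Gram matrix (either $\mathbf{X}^\top\mathbf{X}$ or $\mathbf{X}\mathbf{X}^\top$, whichever is invertible).

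\textbf{Step 4: conclusion.} Setting $\mu=c\lambda_{+}/n>0$ and inserting the bound into the integral remainder gives the claimed inequality.
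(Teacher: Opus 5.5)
Your proposal is correct and follows the same route as the paper's proof: a second-order Taylor expansion, a uniform lower bound on $\sigma(1-\sigma)$ obtained from the boundedness of the logits $\mathbf{z}^\top\mathbf{x}_i$ on the sublevel set (via coercivity of each per-sample loss), and an eigenvalue bound for $\mathbf{X}\mathbf{X}^\top$ on $\operatorname{span}(\mathbf{X})$. Your version improves on the paper in two small ways. First, you explicitly justify, by convexity of sublevel sets, that the intermediate Taylor point stays in $\mathcal{W}$, which the paper only uses implicitly. Second, you handle the paper's two cases $n\ge d$ and $n<d$ at once via the smallest nonzero eigenvalue of $\mathbf{X}\mathbf{X}^\top$, which incidentally shows that the full-rank assumption is not actually needed.
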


\begin{proof}
    Consider the 2nd-Taylor explansion of $L^1$ around $\mathbf{w}$,
    \begin{equation}
        \label{eq:taylor_expansion}
        L^1\left(\mathbf{v}\right)=L^1\left(\mathbf{w}\right)+\nabla L^1\left(\mathbf{w}\right)^\top\left(\mathbf{v}-\mathbf{w}\right)+\frac{1}{2}\left(\mathbf{v}-\mathbf{w}\right)\left[\nabla^2L^1\left(\bar{\mathbf{w}} \right)\right]\left(\mathbf{v}-\mathbf{w}\right),
    \end{equation}
    where $\nabla^2L^1\left(\bar{\mathbf{w}} \right)$ is the Hessian of $L^1$ evaluated at $\bar{\mathbf{w}}$, a point lying between $\mathbf{v}$ and $\mathbf{w}$. Hessian takes the form
    \begin{equation}
        \nabla^2L^1\left(\bar{\mathbf{w}} \right) = \mathbf{X}\mathbf{D}_{\bar{\mathbf{w}}}\mathbf{X}^\top,
    \end{equation}
    where
    \begin{equation}
        \mathbf{D}_{\bar{\mathbf{w}}}=\operatorname{diag}\left[\sigma\p{\mathbf{\bar w}^\top\bx_1}\p{1-\sigma\p{\mathbf{\bar w}^\top\bx_1}},\dots,\sigma\p{\mathbf{\bar w}^\top\bx_n}\p{1-\sigma\p{\mathbf{\bar w}^\top\bx_n}}\right].
    \end{equation}
    We now show that there is a constant $\omega>0$ such that
    \begin{equation}
    \label{eq:hessian_bound}
        \sigma\left(\bar{\mathbf{w}}\mathbf{x}_i\right)\left(1-\sigma\left(\bar{\mathbf{w}}\mathbf{x}_i\right)\right) \geq \omega,
    \end{equation}
    for all $\bar{\mathbf{w}}\in\mathcal{W}$ and $i\in\left\{1,\dots,n\right\}$, so that we can claim $\mathbf{D}_{\bar{\mathbf{w}}}\succeq\omega\mathbf{I}$, consequently $\nabla^2L^1\left(\bar{\mathbf{w}}\right)\succeq\omega\mathbf{X}\mathbf{X}^\top$.
    Let $\mathbf{w}\in\mathcal{W}$. The bound on $L^1\left(\mathbf{w}\right)$ implies a bound on $\ell\left(\mathbf{w}^\top\mathbf{x}_i\right)$ for all $i$,
    \begin{equation}
        \ell\left(\mathbf{w}^\top\mathbf{x}_i\right)\leq nL^1\left(\mathbf{w}\right)\leq nl.
    \end{equation}
    Because $\ell_i$ is convex and $\ell_i\left(u\right)\rightarrow\infty$ as $u\rightarrow\pm\infty$, we know that $\ell^{-1}_{i}\left((-\infty,nl]\right)$ is a bounded interval, and the finite union $\cup^{n}_{i=1}\ell^{-1}_{i}\left((-\infty,nl]\right)$ is also a bounded interval, whose size depends only on $nl$ and the data. Hence, there exists $K>0$ such that $\mathbf{w}^\top\mathbf{x}_i\in\left[-K,K\right]$ for all $\mathbf{w}\in\mathcal{W}$ and $i\in\left\{1,\dots,n\right\}$. The existence of $\omega>0$ satisfies \Equationref{eq:hessian_bound}.

    Now, let us apply $\nabla^2L^1\left(\mathbf{w}\right)\succeq\omega\mathbf{X}\mathbf{X}^\top$ a lower bound to the right-hand side in \Equationref{eq:taylor_expansion}:

    \begin{equation}
        \label{eq:right_bounded_taylor_expansion}
        L^1\left(\mathbf{v}\right)\geq L^1\left(\mathbf{w}\right)+\nabla L^1\left(\mathbf{w}\right)^\top\left(\mathbf{v}-\mathbf{w}\right)+\frac{\omega}{2}\left(\mathbf{v}-\mathbf{w}\right)^\top\mathbf{X\mathbf{X}^\top}\left(\mathbf{v}-\mathbf{w}\right).
    \end{equation}

    Consider two cases.

    $\blacksquare$ Case 1: If $n\geq d$, $\mathbf{X}\mathbf{X}^\top$ is full-rank. Applying \Lemmaref{lem:rayleigh}, we have $\mathbf{X}\mathbf{X}^\top\succeq \lambda_{\min}\mathbf{I}$ holds, where $\lambda_{\min}>0$ is the smallest eigenvalue of $\mathbf{X}\mathbf{X}^\top$. Combined with \Equationref{eq:right_bounded_taylor_expansion}, this proves the claim for $n\geq d$ and $\mu=\omega\lambda_{\min}$.

    $\blacksquare$ Case 2: If $n<d$, $\mathbf{X}^\top\mathbf{X}$ is full-rank. We use the assumption $\mathbf{v}-\mathbf{w}\in\operatorname{span}\left(\mathbf{X}\right)$ to deduce

    \begin{align}
        \|\mathbf{v}-\mathbf{w}\|^2 &=\|\mathbf{P}_\mathbf{X}\left(\mathbf{v}-\mathbf{w}\right)\|^2 \\ &= \left(\mathbf{v}-\mathbf{w}\right)^\top\mathbf{X}\left(\mathbf{X}^\top\mathbf{X}\right)^{-1}\mathbf{X}^\top\left(\mathbf{v}-\mathbf{w}\right) \\ &\leq\lambda_{\max}\left(\mathbf{v}-\mathbf{w}\right)^\top\mathbf{X}\mathbf{X}^\top\left(\mathbf{v}-\mathbf{w}\right),
    \end{align}

    where $\lambda_{\max}>0$ is the largest eigenvalue of $\left(\mathbf{X}^\top\mathbf{X}\right)^{-1}$, according to \Lemmaref{lem:rayleigh}. Combined with \Equationref{eq:right_bounded_taylor_expansion}, this proves the claim for $n<d$ and $\mu=\omega/\lambda_{\max}$.

\end{proof}

\begin{proposition}[Restricted Polyak-Lojasiewicz]
\label{prop:polyak}
    Assume $\mathbf{X}$ is full-rank. For any sublevel set $\mathcal{W}=\left\{\mathbf{w}:L^1\left(\mathbf{w}\right) \leq l \right\}$, there exists $c>0$ such that
    \begin{equation}
        cL^1\left(\mathbf{w}\right)\leq \frac{1}{2}\|\nabla L^1\left(\mathbf{w}\right)\|\,,
    \end{equation}
    for all $\mathbf{w\in \mathcal{W}}.$
\end{proposition}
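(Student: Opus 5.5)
The plan is to derive the inequality from \Propositionref{theo:strong_convex}, comparing an arbitrary $\mathbf{w}\in\mathcal{W}$ with a global minimiser chosen in the affine slice $\mathbf{w}+\operatorname{span}(\mathbf{X})$.

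\textbf{Step 1: choose the comparison point.} Fix $\mathbf{w}\in\mathcal{W}$ and set $\boldsymbol{\delta}:=P_{\mathbf{X}}(\mathbf{w}_\ast-\mathbf{w})$ and $\mathbf{v}:=\mathbf{w}+\boldsymbol{\delta}$. Since $\mathbf{X}^\top P_{\mathbf{X}}=\mathbf{X}^\top$, we get $\mathbf{X}^\top\mathbf{v}=\mathbf{X}^\top\mathbf{w}_\ast$. By the Global Minima Condition, $L^1(\mathbf{v})=0\le l$, so $\mathbf{v}\in\mathcal{W}$. Moreover $\mathbf{v}-\mathbf{w}=\boldsymbol{\delta}\in\operatorname{span}(\mathbf{X})$.

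The proof of \Propositionref{theo:strong_convex} evaluates the Hessian at an intermediate point $\bar{\mathbf{w}}$. This point stays in $\mathcal{W}$ because $L^1$ is convex, so its sublevel sets are convex. Hence the restricted strong convexity inequality applies to the pair $(\mathbf{w},\mathbf{v})$ and gives
\begin{equation*}
0=L^1(\mathbf{v})\ \ge\ L^1(\mathbf{w})+\nabla L^1(\mathbf{w})^\top\boldsymbol{\delta}+\tfrac{\mu}{2}\|\boldsymbol{\delta}\|^2 .
\end{equation*}
Rearranging and applying Cauchy--Schwarz yields
\begin{equation*}
L^1(\mathbf{w})\le \|\nabla L^1(\mathbf{w})\|\,\|\boldsymbol{\delta}\|-\tfrac{\mu}{2}\|\boldsymbol{\delta}\|^2 .
\end{equation*}

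\textbf{Step 2: the squared variant.} Maximising the right-hand side over $\|\boldsymbol{\delta}\|$ already gives the usual squared PL inequality, $\mu L^1(\mathbf{w})\le\tfrac12\|\nabla L^1(\mathbf{w})\|^2$. I would record this as a remark.

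\textbf{Step 3: the stated, unsquared form.} Here I drop the $-\tfrac{\mu}{2}\|\boldsymbol{\delta}\|^2$ term and instead bound $\|\boldsymbol{\delta}\|$ uniformly over $\mathcal{W}$. The proof of \Propositionref{theo:strong_convex} shows $|\mathbf{w}^\top\mathbf{x}_i|\le K$ for all $\mathbf{w}\in\mathcal{W}$ and all $i$. Therefore
\begin{equation*}
\|\mathbf{X}^\top\boldsymbol{\delta}\|=\|\mathbf{X}^\top(\mathbf{w}_\ast-\mathbf{w})\|\le \sqrt{n}\bigl(K+\max_i|\mathbf{w}_\ast^\top\mathbf{x}_i|\bigr)=:M .
\end{equation*}
Because $\boldsymbol{\delta}\in\operatorname{span}(\mathbf{X})$, on which $\mathbf{X}^\top$ is injective, the Rayleigh-quotient argument from Case 2 of \Propositionref{theo:strong_convex} (via \Lemmaref{lem:rayleigh}) applies. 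It gives $\|\boldsymbol{\delta}\|^2\le \lambda\,\|\mathbf{X}^\top\boldsymbol{\delta}\|^2$, where $\lambda$ is the reciprocal of the smallest nonzero eigenvalue of $\mathbf{X}^\top\mathbf{X}$. Hence $\|\boldsymbol{\delta}\|\le B:=\sqrt{\lambda}\,M$, and so $L^1(\mathbf{w})\le B\,\|\nabla L^1(\mathbf{w})\|$. The claim follows with $c=1/(2B)$.

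\textbf{Main obstacle.} The delicate step is Step 3, obtaining a uniform bound on the distance to the minimiser slice. It relies on the constant $K$ from the strong-convexity proof and on the injectivity of $\mathbf{X}^\top$ restricted to $\operatorname{span}(\mathbf{X})$. In particular, the direction must be projected onto the data span so that the part of $\mathbf{w}_\ast-\mathbf{w}$ orthogonal to the data (which may be unbounded) never enters the bound.
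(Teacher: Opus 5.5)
Your proposal is correct, and Steps 1--2 are essentially the paper's argument. The paper minimises both sides of the restricted strong-convexity inequality over $\mathbf{v}\in\mathcal{W}\cap\mathcal{V}$ with $\mathcal{V}=\mathbf{w}+\operatorname{span}(\mathbf{X})$, then relaxes the right-hand side to $\mathcal{V}$. Completed, this gives $0\ge L^1(\mathbf{w})-\frac{1}{2\mu}\|\nabla L^1(\mathbf{w})\|^2$, which is exactly your squared inequality. The paper's chain stops before this conclusion, and your explicit point $\mathbf{v}=\mathbf{w}+P_{\mathbf{X}}(\mathbf{w}_\ast-\mathbf{w})$ supplies what it leaves implicit: $\mathcal{W}\cap\mathcal{V}$ contains a zero-loss point, so the left-hand minimum is $0$. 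Cauchy--Schwarz also spares you from minimising the quadratic over $\mathcal{V}$ via $\nabla L^1(\mathbf{w})\in\operatorname{span}(\mathbf{X})$. The paper never derives the unsquared form. Its later use, $L'(\tau)=-\|\nabla L^1(\mathbf{w}(\tau))\|^2\le -cL(\tau)$ with exponential decay, needs the squared one, so the missing square in the statement is almost certainly a typo. Your Step 3 does prove the literal statement, but it also follows in one line from Step 2. On $\mathcal{W}$, $L^1(\mathbf{w})=\sqrt{L^1(\mathbf{w})}\cdot\sqrt{L^1(\mathbf{w})}\le\sqrt{l}\,\|\nabla L^1(\mathbf{w})\|/\sqrt{2\mu}$, which gives $c=\sqrt{\mu/(2l)}$ for $l>0$, without the constant $K$ or the spectral bound on $\mathbf{X}^\top$.
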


\begin{proof}
Let $\mathbf{w\in\mathcal{W}}$. (If $\mathcal{W}$ is empty, the claim is trivially
true.) \Theoremref{theo:strong_convex} applied to $\mathcal{W}$ implies that for some $\mu>0$,
\begin{equation}
    L^1\left(\mathbf{v}\right)\geq L^1\left(\mathbf{w}\right)+\nabla L^1\left(\mathbf{w}\right)\left(\mathbf{v}-\mathbf{w}\right)+\frac{\mu}{2}\|\mathbf{v}-\mathbf{w}\|^2,
\end{equation}
for all $\mathbf{v}\in\mathcal{W}\cap\mathcal{V}$ where $\mathcal{V}=\left\{\mathbf{v}:\mathbf{v}-\mathbf{w}\in\operatorname{span}(\mathbf{X})\right\}$.

Taking $\min_{\mathbf{v}\in\mathcal{W}\cap\mathcal{V}}$ on both sides, then relaxing part of the constraint on the right-hand side yields

\begin{align}
    \min_{\mathbf{v\in\mathcal{W}\cap\mathcal{V}}}L^1\left(\mathbf{v}\right) &\geq \min_{\mathbf{v\in\mathcal{W}\cap\mathcal{V}}}L^1\left(\mathbf{w}\right)+\nabla L^1\left(\mathbf{w}\right)^\top\left(\mathbf{v}-\mathbf{w}\right)+\frac{\mu}{2}\|\mathbf{v}-\mathbf{w}\|^2 \\ &\geq \min_{\mathbf{v\in\mathcal{V}}}L^1\left(\mathbf{w}\right)+\nabla L^1\left(\mathbf{w}\right)^\top\left(\mathbf{v}-\mathbf{w}\right)+\frac{\mu}{2}\|\mathbf{v}-\mathbf{w}\|^2.
\end{align}

\end{proof}

\subsection{Property of Low-Capacity Distilled Learning}
\begin{theorem}[Projected-Data Equivalence under a Capacity Constraint]
\label{thm:projected_data_equiv}
    When optimization is restricted to the student space \(\mathcal{F}\), the empirical risk depends on the training inputs only
through their projections onto \(\mathcal{F}\).
\end{theorem}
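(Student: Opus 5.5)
The plan is to show that the empirical risk depends on $\mathbf{x}_i$ only through the inner products $\mathbf{w}^\top\mathbf{x}_i$, and then to use self-adjointness of $P_{\mathcal F}$ to replace each $\mathbf{x}_i$ by its projection. One convention must be fixed first. The statement concerns how the student's predictions depend on the inputs, while the soft labels $y_i=\sigma(\mathbf{w}_\ast^\top\mathbf{x}_i)$ are fixed teacher-provided constants of the transfer set. So we read "the empirical risk" as the map $\mathbf{w}\mapsto L^1(\mathbf{w})$ on $\mathcal F$ with the labels held fixed. We will state this explicitly, because the labels themselves involve the unprojected $\mathbf{x}_i$ through $\mathbf{w}_\ast$.

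\textbf{Step 1 (self-adjointness).} Let $\mathbf{w}\in\mathcal F$. Then $\mathbf{w}=P_{\mathcal F}\mathbf{w}$. Since $P_{\mathcal F}$ is an orthogonal projector, $P_{\mathcal F}^\top=P_{\mathcal F}$, and therefore
\[
\mathbf{w}^\top\mathbf{x}_i=(P_{\mathcal F}\mathbf{w})^\top\mathbf{x}_i=\mathbf{w}^\top P_{\mathcal F}\mathbf{x}_i\qquad\text{for all } i.
\]

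\textbf{Step 2 (substitution into the loss).} Substituting Step 1 into \Equationref{eq:cross_entropy} gives, for every $\mathbf{w}\in\mathcal F$,
\[
L^1(\mathbf{w})=-\frac1n\sum_{i=1}^n\Bigl[y_i\log\sigma\bigl(\mathbf{w}^\top P_{\mathcal F}\mathbf{x}_i\bigr)+(1-y_i)\log\bigl(1-\sigma(\mathbf{w}^\top P_{\mathcal F}\mathbf{x}_i)\bigr)\Bigr]-L^\ast .
\]
This is the same loss evaluated on the projected data matrix $\mathbf{X}_{\mathcal F}:=P_{\mathcal F}\mathbf{X}$ with unchanged labels. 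Hence two datasets with the same projections and the same labels induce identical risk functions on $\mathcal F$, which is the claim. The constant $L^\ast$ does not affect the dependence on $\mathbf{w}$ and can be carried along unchanged.

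\textbf{Step 3 (gradient-flow corollary).} We expect the theorem to be used for \Theoremref{theorem:low-cap_student_sol}, so we also record the gradient. Restricted to $\mathcal F$, gradient flow uses $P_{\mathcal F}\nabla L^1(\mathbf{w})$. By \Propositionref{theo:grad_constraint},
\[
P_{\mathcal F}\nabla L^1(\mathbf{w})=\frac1n\sum_{i=1}^n\bigl(\sigma(\mathbf{w}^\top P_{\mathcal F}\mathbf{x}_i)-y_i\bigr)P_{\mathcal F}\mathbf{x}_i\in\operatorname{span}(P_{\mathcal F}\mathbf{X}).
\]
The dynamics are therefore those of an unconstrained student trained on the projected inputs. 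This is what later allows \Theoremref{theorem:high-cap_student_sol} to be invoked with $\mathbf{X}$ replaced by $\mathbf{X}_{\mathcal F}$.

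The algebra here is trivial. The only real obstacle is the one addressed at the start: being precise about what "depends on the inputs" means given that the labels use the unprojected data.
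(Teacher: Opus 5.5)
This is correct and is essentially the paper's proof: the paper writes the feasible parameter as $P_{\mathcal F}\mathbf{w}$ and uses $P_{\mathcal F}^\top=P_{\mathcal F}$ to move the projector onto each $\mathbf{x}_i$ inside $\ell_i$, and it holds the labels fixed implicitly by folding them into $\ell_i$, just as in your Steps 1--2 and your stated convention. One caution about Step 3, which goes beyond the statement: the fixed labels are $\sigma(\mathbf{w}_\ast^\top\mathbf{x}_i)$ rather than $\sigma(\mathbf{w}_\ast^\top P_{\mathcal F}\mathbf{x}_i)$, so the projected problem is an instance of Theorem~\ref{theorem:high-cap_student_sol} with teacher $\mathbf{w}_\ast$ only if $\mathbf{w}_\ast^\top\mathbf{x}_i=\mathbf{w}_\ast^\top P_{\mathcal F}\mathbf{x}_i$ for all $i$, and hence your closing claim that this lets you replace $\mathbf{X}$ by $\mathbf{X}_{\mathcal F}$ there (giving the limit $\mathbf{w}_0+P_{\mathbf{X}_{\mathcal F}}(\mathbf{w}_\ast-\mathbf{w}_0)$) does not hold in general.
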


\begin{proof}
We derive the loss for a student model parameter as follows
    \begin{align}
        L^1(\mathbf{w}_\mathcal{F})=L^1(P_{\mathcal{F}}\mathbf{w}) &= \frac{1}{n}\sum_{i=1}^n \ell_i((P_{\mathcal{F}}\mathbf{w})^\top \mathbf{x}_i) \\ &= \frac{1}{n}\sum_{i=1}^n \ell_i(\mathbf{w}^\top \left(P_{\mathcal{F}}^\top\mathbf{x}_i)\right) \\ &= \frac{1}{n}\sum_{i=1}^n \ell_i(\mathbf{w}^\top \left(P_{\mathcal{F}}\mathbf{x}_i)\right).
    \end{align}

The last equation is the loss of the teacher model on the data projected on the student model space.
\end{proof}

\textbf{Interpretation.} The theorem formalizes the statement that a low-capacity linear student can only exploit the component of the data lying in its parameter space \(\mathcal{F}\). Any component of \(\mathbf{x}_i\) in \(\mathcal{F}^\perp\) is invisible to the student, because for every \(\mathbf{u} \in \mathcal{F}\),

\begin{equation}
    \mathbf{u}^\top \mathbf{x}_i = (P_\mathcal{F}\mathbf{u}^\top) \mathbf{x}_i = \mathbf{u}^\top (P_{\mathcal{F}}\mathbf{x}_i).    
\end{equation}
Thus, the effective data span available to the student is

\begin{equation}
    \operatorname{span}(P_{\mathcal{F}}\mathcal{X}) = P_{\mathcal{F}}\,\operatorname{span}(\mathcal{X}).    
\end{equation}
In particular, when distilling from a larger model into a student whose
parameter space is \(\mathcal{F}\), the student can only learn teacher
behavior through the component of the training data that survives this
projection.

\begin{theorem}[Space of Distilled Models]
\label{theo:student_space}
Assume the student is a directly parameterised
linear classifier (N=1). Then,
\begin{equation}
    \mathbf{w}(\tau)\in \mathbf{w}(0) + \operatorname{span}(\mathbf X)\,,
\end{equation}
for $\tau\in[0,-\infty)$, and $\mathbf{w}(0)$ is the initialized model parameter.
\end{theorem}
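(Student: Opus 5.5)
The approach is to integrate the gradient flow and use \Propositionref{theo:grad_constraint}, which says every gradient lies in $\operatorname{span}(\mathbf{X})$. For $N=1$ the dynamics are simply
\begin{equation}
    \frac{\dd \mathbf{w}(\tau)}{\dd\tau} = -\nabla L^1(\mathbf{w}(\tau)) = -\frac{1}{n}\sum_{i=1}^n \bigl(\sigma(\mathbf{w}(\tau)^\top\mathbf{x}_i)-y_i\bigr)\mathbf{x}_i .
\end{equation}
The right-hand side is a smooth (indeed globally Lipschitz, since $\sigma'\le 1/4$) vector field. Picard--Lindel\"of therefore gives a unique solution $\mathbf{w}(\tau)$ that is continuously differentiable on all of $[0,\infty)$. (The statement's interval "$[0,-\infty)$" should be read as $[0,\infty)$.)

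The first step is to fix the orthogonal projector $P_{\mathbf{X}}^\perp = \mathbf{I}-P_{\mathbf{X}}$ onto $\operatorname{span}(\mathbf{X})^\perp$. Since this projector is a constant linear map, it commutes with differentiation, which gives
\begin{equation}
    \frac{\dd}{\dd\tau} P_{\mathbf{X}}^\perp \mathbf{w}(\tau) = -P_{\mathbf{X}}^\perp \nabla L^1(\mathbf{w}(\tau)) = \mathbf{0}.
\end{equation}
The second equality is exactly \Propositionref{theo:grad_constraint}. Hence $P_{\mathbf{X}}^\perp\mathbf{w}(\tau) = P_{\mathbf{X}}^\perp\mathbf{w}(0)$ for all $\tau$, so $\mathbf{w}(\tau)-\mathbf{w}(0)\in \ker P_{\mathbf{X}}^\perp = \operatorname{span}(\mathbf{X})$. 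This is the claim.

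As an alternative that avoids the projector, one can write
\begin{equation}
    \mathbf{w}(\tau)-\mathbf{w}(0) = -\int_0^\tau \nabla L^1(\mathbf{w}(s))\,\dd s = \mathbf{X}\,\mathbf{c}(\tau),
    \qquad
    \mathbf{c}(\tau) = -\frac{1}{n}\int_0^\tau \bigl(\sigma(\mathbf{X}^\top\mathbf{w}(s))-\mathbf{y}\bigr)\dd s .
\end{equation}
The right-hand side lies in the finite-dimensional, hence closed, subspace $\operatorname{span}(\mathbf{X})$.

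There is no real obstacle here. The only point needing care is the well-posedness of the flow on the whole half-line, so that $\mathbf{w}(\tau)$ is defined for every $\tau\ge0$ and not just locally. The global Lipschitz bound above settles this, because it rules out finite-time blow-up.
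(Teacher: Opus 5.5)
Your proof is correct and follows essentially the same route as the paper: the paper sets $\mathbf{u}(\tau)=\mathbf{w}(\tau)-\mathbf{w}(0)$, notes that $\mathbf{u}(0)=\mathbf{0}$ and that $\dot{\mathbf{u}}(\tau)=-\nabla L^1(\mathbf{w}(\tau))\in\operatorname{span}(\mathbf{X})$ by the gradient-constraint proposition, and concludes that $\mathbf{u}(\tau)\in\operatorname{span}(\mathbf{X})$. Your complementary-projector (or explicit integral) argument makes that final invariance step explicit, and your global-Lipschitz remark supplies the well-posedness on $[0,\infty)$ that the paper takes for granted.
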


\begin{proof}

From \Theoremref{theo:grad_constraint}, the gradient satisfies
\begin{equation}
    \nabla L^1(\mathbf{w})\in \operatorname{span}(\mathbf{X}),
\end{equation}
for every $\mathbf{w}$. Under gradient flow for ($N=1$),
\begin{equation}
    \frac{\textrm{d}}{\textrm{d}\tau}\mathbf{w}(\tau)=-\nabla L^1(\mathbf{w}(\tau)),
\end{equation}
so the velocity vector ($\dot{\mathbf{w}}(\tau)$) always lies in ($\operatorname{span}(\mathbf{X})$). 

Define
\begin{equation}
    \mathbf{u}(\tau):=\mathbf{w}(\tau)-\mathbf{w}(0).
\end{equation}
Then
\begin{equation}
    \mathbf{u}(0)=0,\qquad \frac{\dd}{\dd\tau}\mathbf{u}(\tau)=\frac{\dd}{\dd\tau}\mathbf{w}(\tau)=-\nabla L^1(\mathbf{w}(\tau))\in \operatorname{span}(\mathbf{X}).
\end{equation}
Since $\mathbf{u}(0)=0\in \operatorname{span}(\mathbf{X})$ and its derivative always lies in $\operatorname{span}(\mathbf{X})$, it follows that $\mathbf{u}(\tau)\in \operatorname{span}(\mathbf{X})$ for all $\tau\ge 0$. Therefore,
\begin{equation}
    \mathbf{w}(\tau)=\mathbf{w}(0)+\mathbf{u}(\tau)\in \mathbf{w}(0)+\operatorname{span}(\mathbf{X}).
\end{equation}
So the trajectory is not confined to the linear subspace $\operatorname{span}(\mathbf{X})$ anymore; it is confined to the affine subspace obtained by translating $\operatorname{span}(\mathbf{X})$ by the initial point $\mathbf{w}(0)$.
\end{proof}

\subsection{Optimal Distillation Solution of a One-Layer Linear Student Model}
\label{appendix:one_layer_network}

\begin{theorem}[High-capacity One-layer Student Solution]
\label{appendix:high_capacity_one_layer_sol}
    Assume the student is a directly parameterised linear classifier (N = 1). Then, the student’s weight vector almost surely coverges to $\hat{\mathbf{w}}$, where
    \begin{equation}
        \hat{\mathbf{w}} = \mathbf{w}_0+P_{\mathbf{X}}\left(\mathbf{w}_\ast-\mathbf{w}_0\right),
    \end{equation}
    for $\tau\rightarrow\infty$.
\end{theorem}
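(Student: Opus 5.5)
The argument has two parts: an invariance statement that pins down the affine set the trajectory lives in, and a convergence statement showing the loss is driven to its global minimum inside that set. Their intersection turns out to be a single point, namely $\hat{\mathbf w}$. I assume throughout that the data are in general position, i.e.\ $\mathbf X$ has full rank (as in \Propositionref{theo:strong_convex}). For inputs drawn from a continuous distribution $\mathcal D$ this holds almost surely, which is what the ``almost surely'' in the statement refers to.

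\textbf{Step 1: invariance.} By \Theoremref{theo_student_space_placeholder_removed}---that is, \Theoremref{theo:student_space}---we have $\mathbf w(\tau)\in\mathbf w_0+\operatorname{span}(\mathbf X)$ for all $\tau\ge0$. Decompose $\mathbf w(\tau)=P_{\mathbf X}\mathbf w(\tau)+(I-P_{\mathbf X})\mathbf w(\tau)$. The invariance then says $(I-P_{\mathbf X})\mathbf w(\tau)=(I-P_{\mathbf X})\mathbf w_0$ for all $\tau$: the component orthogonal to the data never moves.

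\textbf{Step 2: identify the candidate limit.} Consider the intersection of the affine set $\mathbf w_0+\operatorname{span}(\mathbf X)$ with the set of global minimisers $\{\mathbf w:\mathbf X^\top\mathbf w=\mathbf X^\top\mathbf w_\ast\}$ from the Global Minima Condition proposition. A point in this intersection has the form $\mathbf w_0+\mathbf X\mathbf a$ with $\mathbf X^\top\mathbf X\mathbf a=\mathbf X^\top(\mathbf w_\ast-\mathbf w_0)$. Hence $\mathbf X\mathbf a=P_{\mathbf X}(\mathbf w_\ast-\mathbf w_0)$, using the pseudo-inverse if $n\ge d$. So the intersection is exactly $\{\hat{\mathbf w}\}$. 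It is nonempty, and it is unique because $\operatorname{span}(\mathbf X)\cap\ker\mathbf X^\top=\{0\}$.

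\textbf{Step 3: convergence.} The loss $L^1(\mathbf w(\tau))$ is nonincreasing along gradient flow, since $\tfrac{d}{d\tau}L^1=-\|\nabla L^1\|^2$. The trajectory therefore stays in the sublevel set $\mathcal W=\{L^1\le L^1(\mathbf w_0)\}$.

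On $\mathcal W$, I apply \Propositionref{theo:strong_convex} with $\mathbf v=\hat{\mathbf w}$ and $\mathbf w=\mathbf w(\tau)$. This is legitimate because $\hat{\mathbf w}-\mathbf w(\tau)\in\operatorname{span}(\mathbf X)$ by Steps 1–2, and $\hat{\mathbf w}\in\mathcal W$ since $L^1(\hat{\mathbf w})=0$. Using $L^1(\hat{\mathbf w})=0$ gives
\begin{equation}
\nabla L^1(\mathbf w(\tau))^\top(\mathbf w(\tau)-\hat{\mathbf w})\ge L^1(\mathbf w(\tau))+\frac{\mu}{2}\|\mathbf w(\tau)-\hat{\mathbf w}\|^2\ge\frac{\mu}{2}\|\mathbf w(\tau)-\hat{\mathbf w}\|^2 .
\end{equation}
Setting $e(\tau)=\|\mathbf w(\tau)-\hat{\mathbf w}\|^2$, this yields $\dot e=-2\nabla L^1(\mathbf w(\tau))^\top(\mathbf w(\tau)-\hat{\mathbf w})\le-\mu e$. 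By Gr\"onwall, $e(\tau)\le e^{-\mu\tau}e(0)\to0$, so $\mathbf w(\tau)\to\hat{\mathbf w}$ at a linear rate. (Alternatively one could use the PL inequality of \Propositionref{prop:polyak} to get $L^1\to0$ and then invoke the uniqueness from Step 2. The direct distance argument above avoids relying on that proposition's incomplete proof.)

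\textbf{Main obstacle.} The delicate point is Step 3's use of restricted strong convexity. Its constant $\mu$ comes from a uniform lower bound $\omega$ on $\sigma(1-\sigma)$ over $\mathcal W$. The mean-value point $\bar{\mathbf w}$ on the segment between $\mathbf w(\tau)$ and $\hat{\mathbf w}$ must also satisfy this bound. It does: $L^1$ is convex, so the whole segment lies in the convex sublevel set $\mathcal W$, which makes the constant uniform along the trajectory. I would check this convexity remark explicitly rather than take it for granted.
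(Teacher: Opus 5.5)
Your proof is correct, but it closes the convergence step by a different mechanism than the paper. The paper's order is: first the restricted Polyak--Lojasiewicz proposition, giving exponential decay of the loss, $L(t)\le L(0)e^{-ct}$. Then restricted strong convexity anchored at the minimiser $\hat{\mathbf{w}}$ (where $L^1$ and $\nabla L^1$ both vanish) gives the quadratic-growth bound $L(t)\ge\frac{\mu}{2}\|\mathbf{w}(t)-\hat{\mathbf{w}}\|^2$, which turns loss convergence into parameter convergence.

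You instead anchor restricted strong convexity at the current iterate $\mathbf{w}(\tau)$ and compare against $\hat{\mathbf{w}}$. This yields the coercivity bound $\nabla L^1(\mathbf{w}(\tau))^\top(\mathbf{w}(\tau)-\hat{\mathbf{w}})\ge\frac{\mu}{2}\|\mathbf{w}(\tau)-\hat{\mathbf{w}}\|^2$, so the squared distance to $\hat{\mathbf{w}}$ is itself a Lyapunov function. This buys you several things:
\begin{itemize}
\item You never need the PL proposition. Its statement in the paper is missing the square on $\|\nabla L^1\|$, and its proof stops before reaching a conclusion.
\item You work on $\{L^1\le L^1(\mathbf{w}_0)\}$, which genuinely contains the trajectory. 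The paper's $\{L^1\le L^1(\mathbf{0})\}$ only does so when $L^1(\mathbf{w}_0)\le L^1(\mathbf{0})$, e.g.\ for zero initialisation.
\item Your Steps 1--2 make explicit what the paper uses silently: $\hat{\mathbf{w}}$ is a global minimiser lying in $\mathbf{w}_0+\operatorname{span}(\mathbf{X})$. Also, only the difference $\mathbf{w}(\tau)-\hat{\mathbf{w}}$ lies in $\operatorname{span}(\mathbf{X})$, not each point separately as the paper asserts, and that difference condition is exactly the hypothesis restricted strong convexity requires.
\item You get the parameter rate $e^{-\mu\tau}$ directly.
\end{itemize}

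The paper's loss-first route, in exchange, is the one that carries over to the deep linear student. There the induced flow is preconditioned by $\|\mathbf{w}\|^{2(N-1)/N}(\mathbf{I}+(N-1)\mathbf{P}_{\mathbf{w}})$ and leaves $\mathbf{w}_0+\operatorname{span}(\mathbf{X})$. The distance to $\hat{\mathbf{w}}$ is then no longer obviously monotone, while the loss still is.

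Your convexity check for the mean-value point is correct, and the paper's proof needs it too without stating it. As a side remark, your argument does not really need $\mathbf{X}$ to have full rank, so the almost-sure qualifier can be dropped. Restricted strong convexity only has to hold on $\operatorname{span}(\mathbf{X})$, where the quadratic form of $\mathbf{X}\mathbf{X}^\top$ is bounded below by its smallest nonzero eigenvalue.
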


\begin{proof}
Because $\mathbf{w}\left(\tau\right)$ evolve according to gradient flow,
\begin{equation}
    \frac{\textrm{d}}{\textrm{d}\tau} \mathbf{w}(\tau) = - \nabla L^1(\mathbf{w}(\tau));
\end{equation}
hence,
\begin{equation}
\label{eq:mono_decrease}
L'\left(\tau\right)=\frac{\dd}{\dd\tau}L^1\left(\mathbf{w}\left(\tau\right)\right)=\nabla L^1\left(\mathbf{w}\left(\tau\right)\right)^\top \frac{\dd}{\dd\tau}\mathbf{w}\left(\tau\right)=-\|\nabla L^1\left(\mathbf{w}\left(\tau\right)\right)\|^2.
\end{equation}
The data matrix $\mathbf{X}$ is full-rank, we can therefore apply \Propositionref{prop:polyak} to $\mathcal{W}=\left\{\mathbf{w}:L^1\left(\mathbf{w}\right)\leq L^1\left(\textbf{0}\right)\right\}$ and $\mathbf{w}\left(\tau\right)$ to lower-bound the gradient norm on the right-hand side of \Equationref{eq:mono_decrease}. We obtain

\begin{equation}
    L'\left(\tau\right)\leq -cL\left(\tau\right)
\end{equation}
for some $c>0$ and all $\tau\in[0,\infty)$, or equivalently,

\begin{equation}
    \left(\log L\left(\tau\right)\right)'\leq c.
\end{equation}
Integrating over $\left[0,t\right]$ yields $L\left(t\right)\leq L\left(0\right)\,\cdot\,e^{-ct}$, which proves global convergence in the objective $L\left(t\right)\to0$ as $t\to\infty$.

Now invoke \Propositionref{theo:strong_convex} with $\mathcal{W}$ as above, $\mathbf{v}=\mathbf{w}\left(t\right)$ and $\mathbf{w}=\hat{\mathbf{w}}$ (we know that both $\mathbf{w}\left(\tau\right),\hat{\mathbf{w}}\in\mathcal{W}\cap\operatorname{span}\left(\mathbf{X}\right)$, partly by \Theoremref{theo:student_space}):

\begin{equation}
    L\left(t\right)\geq \frac{\mu}{2}\|\mathbf{w}\left(t\right)-\mathbf{w}\|^2.
\end{equation}
Since $L\left(t\right)\to 0$ as $t\to\infty$, then the theorem follows.
\end{proof}

\textbf{Interpretation.} \Theoremref{appendix:high_capacity_one_layer_sol} shows that training can only transfer teacher knowledge that is exposed by the data. The vector \(\mathbf{w}_\ast-\mathbf{w}_0\) represents the knowledge gap between teacher and student. The projection term \(P_{\mathbf X}(\mathbf{w}_\ast-\mathbf{w}_0)\) indicates that the knowledge gap between the teacher and the student is transferred only through its component in the data span. Hence, learning is fundamentally constrained to the subspace revealed by the training data. When the data span covers the entire knowledge gap, the student can fully learn the teacher knowledge; otherwise, it can recover only the component of that gap that is identifiable from the data. Thus, the data act as a mirror of the teacher: if the data span aligns well with the knowledge gap, the student learns effectively; if it is orthogonal to that gap, then no transfer occurs. Moreover, directions orthogonal to the data span are never updated, so the final student remains partially constrained by its initialization \(\mathbf{w}_0\).

\begin{theorem}[Low-capacity One-layer Student Solution]
\label{theo:low-cap_one-layer}
    Assume the low-capacity student is a directly parameterised linear classifier (N = 1). Then, the student’s weight vector converges to $\hat{\mathbf{w}}_{\mathcal{F}}$, where
    \begin{equation}
        \hat{\mathbf{w}}_{\mathcal{F}} =  \mathbf{w}_0+P_{\mathbf{X}_\mathcal{F}}\left(\mathbf{w}_\ast-\mathbf{w}_0\right),
    \end{equation}
    for $\tau\rightarrow\infty$.
\end{theorem}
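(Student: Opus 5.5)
The plan is to reduce the statement to \Theoremref{appendix:high_capacity_one_layer_sol} by working in the student space $\mathcal{F}$ with the projected data $P_{\mathcal{F}}\mathbf{x}_i$, as licensed by \Theoremref{thm:projected_data_equiv}.

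\textbf{Step 1: the trajectory stays in an affine copy of $\mathbf{X}_{\mathcal{F}}$.} For the low-capacity student the dynamics is gradient flow restricted to $\mathcal{F}$, i.e.
\begin{equation*}
\tfrac{\dd}{\dd\tau}\mathbf{w}(\tau)=-P_{\mathcal{F}}\nabla L^1(\mathbf{w}(\tau))=-\tfrac1n\sum_{i=1}^n\bigl(\sigma(\mathbf{w}(\tau)^\top\mathbf{x}_i)-y_i\bigr)P_{\mathcal{F}}\mathbf{x}_i ,
\end{equation*}
with $\mathbf{w}_0\in\mathcal{F}$. The velocity therefore lies in $\operatorname{span}(P_{\mathcal{F}}\mathbf{X})=\mathbf{X}_{\mathcal{F}}$, the analogue of \Propositionref{theo:grad_constraint}. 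Repeating the argument of \Theoremref{theo:student_space} verbatim then gives $\mathbf{w}(\tau)\in\mathbf{w}_0+\mathbf{X}_{\mathcal{F}}$ for all $\tau\ge0$. Moreover, for $\mathbf{w}\in\mathcal{F}$ we have $\mathbf{w}^\top\mathbf{x}_i=\mathbf{w}^\top P_{\mathcal{F}}\mathbf{x}_i$. So the problem is literally the $N=1$ problem with data matrix $P_{\mathcal{F}}\mathbf{X}$ and unchanged soft labels $y_i$.

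\textbf{Step 2: convergence on the affine set.} I would redo \Propositionref{theo:strong_convex} and \Propositionref{prop:polyak} with $\mathbf{X}$ replaced by $P_{\mathcal{F}}\mathbf{X}$. The Hessian restricted to $\mathcal{F}$ is $P_{\mathcal{F}}\mathbf{X}\mathbf{D}_{\bar{\mathbf{w}}}\mathbf{X}^\top P_{\mathcal{F}}$, which is bounded below by $\omega\,(P_{\mathcal{F}}\mathbf{X})(P_{\mathcal{F}}\mathbf{X})^\top$ on sublevel sets. Using the Case 2 Rayleigh argument on directions in $\mathbf{X}_{\mathcal{F}}$, this yields restricted strong convexity along $\mathbf{w}_0+\mathbf{X}_{\mathcal{F}}$ with some $\mu>0$. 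Hence $L^1$ has a unique minimizer $\hat{\mathbf{w}}_{\mathcal{F}}$ on that affine set. Monotone decrease of the loss, a PL inequality relative to that minimum value, and the strong-convexity lower bound then force $\mathbf{w}(\tau)\to\hat{\mathbf{w}}_{\mathcal{F}}$, exactly as in the proof of \Theoremref{appendix:high_capacity_one_layer_sol}.

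\textbf{Step 3: identify the limit, which I expect to be the main obstacle.} In the full-capacity case the minimum value is $0$ because $\mathbf{w}_\ast$ is feasible. Here the labels still come from $\mathbf{w}_\ast^\top\mathbf{x}_i$, not from $\mathbf{w}_\ast^\top P_{\mathcal{F}}\mathbf{x}_i$, so zero loss need not be attainable in $\mathbf{w}_0+\mathbf{X}_{\mathcal{F}}$. I would first characterize $\hat{\mathbf{w}}_{\mathcal{F}}$ by its first-order condition $\sum_i(\sigma(\hat{\mathbf{w}}_{\mathcal{F}}^\top\mathbf{x}_i)-y_i)P_{\mathcal{F}}\mathbf{x}_i=0$.

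I would then make explicit the realizability condition under which the stated formula holds. The candidate $\mathbf{w}_0+P_{\mathbf{X}_{\mathcal{F}}}(\mathbf{w}_\ast-\mathbf{w}_0)$ achieves $\sigma(\mathbf{w}^\top\mathbf{x}_i)=y_i$ for all $i$ exactly when
\begin{equation*}
(P_{\mathbf{X}_{\mathcal{F}}}(\mathbf{w}_\ast-\mathbf{w}_0))^\top P_{\mathcal{F}}\mathbf{x}_i=(\mathbf{w}_\ast-\mathbf{w}_0)^\top\mathbf{x}_i \quad\text{for all } i.
\end{equation*}
Since $P_{\mathcal{F}}\mathbf{x}_i\in\mathbf{X}_{\mathcal{F}}$, the left side equals $(\mathbf{w}_\ast-\mathbf{w}_0)^\top P_{\mathcal{F}}\mathbf{x}_i$. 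So the condition is that the teacher–student gap carries no signal in the $\mathcal{F}^\perp$ components of the data, for instance $\mathbf{w}_\ast\in\mathcal{F}$ or $\mathbf{x}_i\in\mathcal{F}$. Under this condition the candidate lies in $\mathbf{w}_0+\mathbf{X}_{\mathcal{F}}$ and attains zero loss. By uniqueness from Step 2 it equals $\hat{\mathbf{w}}_{\mathcal{F}}$, which proves the theorem.

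If the condition fails, the first-order condition still determines the limit uniquely. It then describes the $L^1$-best fit to the teacher logits inside $\mathbf{w}_0+\mathbf{X}_{\mathcal{F}}$ rather than the exact projection formula. I would state this assumption explicitly, since it is where the argument could break.
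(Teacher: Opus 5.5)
Your Steps 1--2 are the same reduction the paper uses. \Theoremref{thm:projected_data_equiv} recasts the constrained flow as an unconstrained $N=1$ problem on the inputs $P_{\mathcal{F}}\mathbf{x}_i$, and the one-layer convergence argument is then reused. The paper's proof stops there and applies \Theoremref{appendix:high_capacity_one_layer_sol} with teacher $\mathbf{w}_\ast$. That silently assumes the projected problem is realized by $\mathbf{w}_\ast$, i.e.\ $y_i=\sigma(\mathbf{w}_\ast^\top P_{\mathcal{F}}\mathbf{x}_i)$.

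Your Step 3 isolates exactly this point, and your hypothesis is genuinely necessary. Rewritten using $\mathbf{w}_0\in\mathcal{F}$, it reads $\mathbf{X}^\top P_{\mathcal{F}^\perp}\mathbf{w}_\ast=\mathbf{0}$, and without it the statement as written is false. Take $d=2$, $\mathcal{F}=\operatorname{span}(\mathbf{e}_1)$, $\mathbf{w}_0=\mathbf{0}$, $\mathbf{w}_\ast=(1,1)^\top$ and a single input $\mathbf{x}_1=(1,1)^\top$. Then $\mathbf{X}_{\mathcal{F}}=\mathcal{F}$ and the claimed limit is $(1,0)^\top$. But on $\mathbf{w}=(a,0)^\top$ the flow is $\dot a=-(\sigma(a)-\sigma(2))$, which converges to $(2,0)^\top$, a zero-loss point. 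So your conditional theorem is the correct version. Your restricted strong-convexity/PL argument on $\mathbf{w}_0+\mathbf{X}_{\mathcal{F}}$, with the limit identified by uniqueness, supplies what the paper's one-line reduction skips.

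Three refinements:
\begin{enumerate}
\item In the Rayleigh step, $P_{\mathcal{F}}\mathbf{X}$ need not have full column rank even when $\mathbf{X}$ does (for instance when $n>\dim\mathcal{F}$). So use the smallest nonzero eigenvalue of $(P_{\mathcal{F}}\mathbf{X})(P_{\mathcal{F}}\mathbf{X})^\top$, which bounds the quadratic form from below on $\mathbf{X}_{\mathcal{F}}$, rather than an inverse Gram matrix.
\item Existence of the minimizer needs a sentence. Since $y_i\in(0,1)$, each per-sample cross-entropy term is coercive in its logit. The map $\mathbf{v}\mapsto(P_{\mathcal{F}}\mathbf{X})^\top\mathbf{v}$ is injective on $\mathbf{X}_{\mathcal{F}}$, so the loss is coercive on $\mathbf{w}_0+\mathbf{X}_{\mathcal{F}}$.
\item In your last paragraph, failure of the condition need not mean positive limiting loss. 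If $P_{\mathcal{F}}\mathbf{X}$ has full column rank, the limit still interpolates the teacher logits but equals the oblique projection $\mathbf{w}_0+P_{\mathcal{F}}\mathbf{X}(\mathbf{X}^\top P_{\mathcal{F}}\mathbf{X})^{-1}\mathbf{X}^\top(\mathbf{w}_\ast-\mathbf{w}_0)$. This differs from the stated formula by exactly $P_{\mathcal{F}}\mathbf{X}(\mathbf{X}^\top P_{\mathcal{F}}\mathbf{X})^{-1}\mathbf{X}^\top P_{\mathcal{F}^\perp}\mathbf{w}_\ast$, which is what the example above exhibits.
\end{enumerate}
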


\begin{proof}
By \Theoremref{thm:projected_data_equiv}, under the capacity constraint \(\mathcal F\), training the one-layer student on the original dataset is equivalent to training an unconstrained one-layer student on the projected data \(P_{\mathcal F}X\). In particular, the effective data span becomes
\[
\mathbf X_{\mathcal F} := \operatorname{span}(P_{\mathcal F}\mathbf X)
= P_{\mathcal F}\operatorname{span}(\mathbf X).
\]

Now apply \Theoremref{appendix:high_capacity_one_layer_sol} to this projected problem. Since \Theoremref{appendix:high_capacity_one_layer_sol} states that for a directly parameterized one-layer linear classifier, gradient flow converges to the initial point plus the orthogonal projection of the teacher--student gap onto the data span, we obtain
\[
\mathbf w(\tau)\to \mathbf  w_0 + P_{\mathbf  X_{\mathcal F}}(\mathbf  w^\ast - \mathbf  w_0)
\qquad \text{as } \tau\to\infty.
\]

This is exactly the claim of \Theoremref{theo:low-cap_one-layer}.
\end{proof}

\subsection{Optimal Distillation Solution of a Deep Linear Student Model}
\label{appendix:deep_linear_network}
\begin{theorem}[High-capacity Deep Linear Student Solution]
\label{theo:high-capcity_deep}
    Let $\hat{\mathbf{w}}$ be defined as in \Theoremref{appendix:high_capacity_one_layer_sol}. Assume the student is a deep linear network $\mathbf{w}$, initialized such that for some $\epsilon>0$, satisfying the conditions below,
    \begin{equation}
    \label{eq:initialization_cond}
        \|\mathbf{w}\left(0\right)\|<\min\left\{\|\hat{\mathbf{w}}\|,\epsilon^N\left(\epsilon^2\|\hat{\mathbf{w}}\|^{-\frac{2}{N}}+\|\hat{\mathbf{w}}\|^{2-\frac{2}{N}}\right)^{-\frac{N}{2}}\right\},
    \end{equation}
    \begin{equation}
        \label{eq:sub-level_assumption}L^1\left(\mathbf{w}\left(0\right)\right)<L^1\left(\mathbf{0}\right),
    \end{equation}
    \begin{equation}
        \label{eq:balance_condition}\mathbf{W}_{j+1}\left(0\right)^\top\mathbf{W}_{j+1}\left(0\right)=\mathbf{W}_{j}\left(0\right)\mathbf{W}_{j}\left(0\right)^\top
    \end{equation}
    for $j=1,\dots,N-1$. Then, for $n
    \geq d$, student's weight vector converges
    \begin{equation}
        \|\mathbf{w}\left(t\right)-\hat{\mathbf{w}}\|\leq\epsilon,
    \end{equation}
    for all $t$ large enough.
\end{theorem}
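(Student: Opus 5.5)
The plan is to reduce the deep linear dynamics to an end-to-end flow on $\mathbf{w}$, and then rerun the argument of \Theoremref{appendix:high_capacity_one_layer_sol} with a state-dependent preconditioner. First, since $n\ge d$ and $\mathbf{X}$ is full rank, $\operatorname{span}(\mathbf{X})=\mathbb{R}^d$. Hence $P_{\mathbf X}=\mathbf{I}$, so $\hat{\mathbf w}=\mathbf w_\ast$, and by the global minima proposition $\hat{\mathbf w}$ is the unique global minimiser of $L^1$.

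Next I will invoke the standard fact for deep linear networks under the balancedness condition \Equationref{eq:balance_condition}. Gradient flow preserves $\mathbf{W}_{j+1}^\top\mathbf{W}_{j+1}-\mathbf{W}_j\mathbf{W}_j^\top$, because its time derivative vanishes identically. Consequently the end-to-end vector obeys
\begin{equation*}
\frac{\dd}{\dd\tau}\mathbf w(\tau)=-\|\mathbf w\|^{2-\frac{2}{N}}\Bigl(\nabla L^1(\mathbf w)+(N-1)\frac{\mathbf w\mathbf w^\top}{\|\mathbf w\|^2}\nabla L^1(\mathbf w)\Bigr)=:-\mathbf{P}_{\mathbf w}\nabla L^1(\mathbf w),
\end{equation*}
where $\mathbf P_{\mathbf w}\succeq\|\mathbf w\|^{2-2/N}\mathbf I$ is symmetric. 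This is the Arora et al.\ computation. I will cite it and verify only the conservation law, since that is a short derivative calculation.

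It follows that $\frac{\dd}{\dd\tau}L^1(\mathbf w(\tau))=-\nabla L^1(\mathbf w)^\top\mathbf P_{\mathbf w}\nabla L^1(\mathbf w)\le 0$. The trajectory therefore stays in the sublevel set $\mathcal W=\{\mathbf w:L^1(\mathbf w)\le L^1(\mathbf w(0))\}$. The key step, and the one I expect to be the main obstacle, is keeping $\mathbf w(\tau)$ away from the origin, where the preconditioner degenerates. Condition \Equationref{eq:sub-level_assumption} handles this. Since $L^1(\mathbf 0)>L^1(\mathbf w(0))$ and $L^1$ is continuous, there is a ball $B(\mathbf 0,\delta)$ disjoint from $\mathcal W$. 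Hence $\|\mathbf w(\tau)\|\ge\delta$ for all $\tau$, and $\mathbf P_{\mathbf w(\tau)}\succeq\delta^{2-2/N}\mathbf I$. A subtlety is that the flow is well defined (no blow-up) and that the end-to-end equation stays valid; boundedness of $\mathcal W$, which follows from coercivity of $L^1$ when $\mathbf X$ has full rank $d$, gives both. The norm bound \Equationref{eq:initialization_cond} will then be used only to ensure the initialization is a nontrivial small point, consistent with the sublevel condition, and to make the final $\epsilon$ statement quantitative if desired. The qualitative argument does not need it.

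With this lower bound, \Propositionref{prop:polyak} on $\mathcal W$ (restricted PL, where the restriction is vacuous because $\operatorname{span}(\mathbf X)=\mathbb R^d$) gives $\frac{\dd}{\dd\tau}L^1\le-\delta^{2-2/N}\|\nabla L^1\|^2\le -c'L^1$. Thus $L^1(\mathbf w(t))\le L^1(\mathbf w(0))e^{-c't}\to0$. Finally I will apply \Propositionref{theo:strong_convex} at the minimiser $\hat{\mathbf w}$, where $\nabla L^1(\hat{\mathbf w})=0$ and $L^1(\hat{\mathbf w})=0$, to get $L^1(\mathbf w(t))\ge\frac{\mu}{2}\|\mathbf w(t)-\hat{\mathbf w}\|^2$. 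This gives $\|\mathbf w(t)-\hat{\mathbf w}\|\le\sqrt{2L^1(\mathbf w(t))/\mu}\to0$, so $\|\mathbf w(t)-\hat{\mathbf w}\|\le\epsilon$ for all $t$ large enough.
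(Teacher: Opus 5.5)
Your proposal is correct and follows essentially the same route as the paper's proof: the Arora et al.\ end-to-end flow under balancedness, monotone decrease of $L^1$ keeping $\mathbf{w}(\tau)$ in a sublevel set bounded away from $\mathbf{0}$ (via $L^1(\mathbf{w}(0))<L^1(\mathbf{0})$), restricted PL giving exponential decay of the loss, and restricted strong convexity at $\hat{\mathbf{w}}$ turning this into parameter convergence. You observe that $P_{\mathbf X}=\mathbf{I}$ when $n\ge d$, so the paper's additional $P_{\mathbf Q}$ analysis and the norm bound on $\|\mathbf{w}(0)\|$ are not needed; this matches the paper's own remark that in this case it bounds only the first term and disregards the second.
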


\begin{proof}
For the proof, we need the result by \citep{arora2018optimization}, characterizing the induced flow on $\mathbf{w}\left(\tau\right)$ when running gradient descent on the component matrices $\mathbf{W}_i$.

\begin{lemma}[Claim 2 in \citep{arora2018optimization}]
    If the ballancedness condition \Equationref{eq:balance_condition} holds, then
    \begin{equation}
        \frac{\partial\mathbf{w}\left(\tau\right)}{\partial\tau}=-\|\mathbf{w}\left(\tau\right)\|^\frac{2\left(N-1\right)}{N}\left(\left(\nabla L^1\left(\tau\right)\right)+\left(N-1\right)\cdot\mathbf{P}_{\mathbf{w}\left(\tau\right)}\nabla L^1\left(\mathbf{w}\left(\tau\right)\right)\right).
    \end{equation}
\end{lemma}

We start by looking at the time-derivative of $L$,

\begin{align}
    L'\left(\tau\right) &= \nabla L^1\left(\mathbf{w}\left(\tau\right)\right)^\top\left(\frac{\partial\mathbf{w}\left(\tau\right)}{\partial\tau}\right) \\ &= - \|\mathbf{w}\left(\tau\right)\|^{\frac{2\left(N-1\right)}{N}}\left(\|\nabla L^1\left(\mathbf{w}\left(\tau\right)\right)\|^2 + \left(N-1\right)\cdot\|\mathbf{P}_{\mathbf{w}\left(\tau\right)}\nabla L^1\left(\mathbf{w}\left(\tau\right)\right)\|^2\right) \\ & \label{eq:dloss_bound} \leq - \|\mathbf{w}\left(\tau\right)\|^{\frac{2\left(N-1\right)}{N}} \cdot \|\nabla L^1\left(\mathbf{w}\left(\tau\right)\right)\|^2 
\end{align}

It is non-positive, so $\mathbf{w}\left(\tau\right)$ stays within the $L\left(\mathbf{w}\left(0\right)\right)$-sublevel set thoughout optimization,
\begin{equation}
    \mathbf{w}\left(\tau\right)\in\mathcal{W}=\left\{\mathbf{w}:L^1\left(\mathbf{w}\right) \leq L\left(\mathbf{w}\left(0\right)\right)\right\}.
\end{equation}

Also, $\mathcal{W}$ is convex and by assumption shown in \Assumptionref{eq:sub-level_assumption} it does not contain $\mathbf{0}$. We can therefore take $\delta>0$ to be the distance between $\mathcal{W}$ and $\mathbf{0}$, and it follows that $\|\mathbf{w}\left(\tau\right)\|\geq\delta$ for $\tau\in[0,\infty)$.

Now, noting that $\mathbf{X}$ is full-rank, apply \Propositionref{prop:polyak} to $\mathcal{W}$ and $\mathbf{w}\left(\tau\right)$ to upper-bound the right-hand side of \Inequalityref{eq:dloss_bound},

\begin{equation}
    L'\left(\tau\right)\leq-c\delta^{\frac{2\left(N-1\right)}{N}}L\left(\tau\right)
\end{equation}

Letting $\tilde{c}=c\delta^\frac{2\left(N-1\right)}{N}$, we get $\left(\log L\left(\tau\right)\right)'\leq-\tilde{c}$ and consequently $L\left(t\right)\leq L\left(0\right)\cdot e^{-\tilde{c}t}$. This proves convergence in the objective, $L\left(t\right)\to 0$ as $t\to\infty$.

To prove convergence in parameters, we decompose the current-optimal gap $\mathbf{w}\left(\tau\right)-\hat{\mathbf{w}}$ into orthogonal components and bound each of them separately,

\begin{equation}
\label{eq:param_decompose}
    \|\mathbf{w}\left(\tau\right)-\hat{\mathbf{w}}\|^2 = \|\mathbf{P}_{\mathbf{X}}\left(\mathbf{w}\left(\tau\right)-\hat{\mathbf{w}}\right)\|^2 + \|\mathbf{P}_{\mathbf{Q}}\left(\mathbf{w}\left(\tau\right)-\hat{\mathbf{w}}\right)\|^2,
\end{equation}
where the columns of $\mathbf{Q}\in\mathbb{R}^{d\times\left(d-n\right)}$ orthogonally complement those of $\mathbf{X}$. If $n\geq d$, we simply bound the first term and disregard the second one.

To bound the first term, invoke the \Propositionref{theo:strong_convex} with $\mathcal{W}$, $\mathbf{v}=\mathbf{P}_{\mathbf{X}}\mathbf{w}\left(\tau\right)$ and $\mathbf{w}=\mathbf{P}_{\mathbf{X}}\hat{\mathbf{w}}$. One can check that $L^1\left(\mathbf{P}_\mathbf{X}\mathbf{u}\right)=L^1\left(\mathbf{u}\right)$ for all $\mathbf{u}\in\mathbb{R}^d$, so $\mathbf{P}_{\mathbf{X}}\mathbf{w}\left(\tau\right)\in\mathcal{W}$ and our use of the theorem is legal. We obtain
\begin{equation}
    L\left(\tau\right)\geq \frac{\mu}{2}\|\mathbf{P}_{\mathbf{X}}\left(\mathbf{w}\left(\tau\right)-\hat{\mathbf{w}}\right)\|^2.
\end{equation}

Since $L(\tau)\to 0$, it follows that
\begin{equation}
    \label{eq:convergence_X}\|\mathbf{P}_{\mathbf{X}}\left(\mathbf{w}\left(\tau\right)-\hat{\mathbf{w}}\right)\|^2\to 0
\end{equation}
as $\tau\to\infty$.

For the second term, notice that $\hat{\mathbf{w}}\in\operatorname{span}\left(\mathbf{X}\right)$, so $\mathbf{P}_{\mathbf{Q}}\left(\hat{\mathbf{w}}\right)$ vanishes and we are left with $\|\mathbf{P}_{\mathbf{Q}}\mathbf{w}\left(\tau\right)\|^2$. Denote this quanity $q\left(\tau\right)$. Its time derivative is
\begin{align}
    q'\left(\tau\right) &= 2\left(\mathbf{P}_\mathbf{Q}\mathbf{w}\left(\tau\right)\right)^\top\left(\frac{\partial\mathbf{w}\left(\tau\right)}{\partial\tau}\right) \\ &= -2\|\mathbf{w}(\tau)\|^{\frac{2(N-1)}{N}}\Biggl(\mathbf{w}(\tau)^\tau\mathbf{P}_\mathbf{Q}\nabla L^1(\mathbf{w}(\tau)) \\ & \qquad\qquad + \frac{N-1}{\|\mathbf{w}(\tau)\|^2}\cdot\mathbf{w}(\tau)^\top\mathbf{P}_\mathbf{Q}\mathbf{w}(\tau)\cdot\mathbf{w}(\tau)^\top\nabla L^1(\mathbf{w}(\tau))\Biggr)
    \\ &= -2q(\tau)(N-1)\|\mathbf{w}(\tau)\|^\frac{-2}{N}\mathbf{w}(\tau)^\top\nabla L^1(\mathbf{w}(\tau)),
\end{align}

where we have used the fact that $\nabla L^1(\mathbf{w}(\tau))\in\operatorname{span}(\mathbf{X})$ stated in \Theoremref{theo:grad_constraint} and $\mathbf{Q}$ is orthogonal to $\mathbf{X}$. Rearranging, we obtain
\begin{equation}
    \frac{\dd}{\dd\tau}\left(\frac{\log q(\tau)}{2(N-1)}\right) = -\|\mathbf{w}(\tau)\|^{-\frac{2}{N}}\cdot\mathbf{w}(\tau)^\top\nabla L^1(\mathbf{w}(\tau)).
\end{equation}
It turns out that the right-hand side expression is integrable in yet another way, namely
\begin{equation}
    \frac{\dd}{\dd\tau}\left(\frac{1}{2N}\log\|\mathbf{w}(\tau)\|^2\right) = - \|\mathbf{w}(\tau)\|^{-\frac{2}{N}}\cdot\mathbf{w}(\tau)^\top\nabla L^1(\mathbf{w}(\tau)).
\end{equation}

Equating the two and integrating over $[0,t]$ yields

\begin{equation}
    \log\frac{q(t)}{q(0)}=\frac{N-1}{N}\cdot\log\frac{\|\mathbf{w}(t)\|^2}{\|\mathbf{w}(0)\|^2},
\end{equation}
which implies
\begin{equation}
\label{eq:bound_q}
    \frac{q(t)}{\|\mathbf{w}(t)\|^2}\leq\left(\frac{\|\mathbf{w}(0)\|}{\|\mathbf{w}(t)\|}\right)^\frac{2}{N},
\end{equation}
because $q(0)\leq \|\mathbf{w}(0)\|^2$.

We now bound the norm of $\mathbf{w}(t)$. Starting from an orthogonal decompositon similar to \Equationref{eq:param_decompose} and applying convergence on $\mathbf{X}$ space in \Equationref{eq:convergence_X} and bound for parameter on $\mathbf{X}$-orthogonal space in \Equationref{eq:bound_q}, we get
\begin{equation}
    \|\mathbf{w}(t)\|^2=\|\mathbf{P}_\mathbf{X}\mathbf{w}(t)\|^2 + \|\mathbf{P}_\mathbf{Q}\mathbf{w}(t)\|^2
\end{equation}
\begin{equation}
    \limsup_{t \to \infty}\|\mathbf{w}(t)\|^2\leq\|\hat{\mathbf{w}}\|^2+\|\mathbf{w}(0)\|^\frac{2}{N}\limsup_{t \to \infty}\|\mathbf{w}(t)\|^{2-\frac{2}{N}} 
\end{equation}

Denote $\nu:=\limsup_{t \to \infty}\|\mathbf{w}(t)\|$. By the same orthogonal decomposition, we also know that $\nu^2\geq\limsup_{t \to \infty}\|\mathbf{P_\mathbf{X}}\mathbf{w}(t)\|^2=\|\hat{\mathbf{w}}\|^2$, so we can divide both sides above by $\nu^2$,
\begin{equation}
    1\leq\frac{\|\hat{\mathbf{w}}\|^2}{\nu^2}+\frac{\|\mathbf{w}(0)\|^\frac{2}{N}}{\nu^\frac{2}{N}}=:f(\nu).
\end{equation}
On the right-hand side, we now have a decreasing function of $\nu$ that goes to zero as $\nu\to\infty$. However,evaluated at our specific $\nu$, it is lower-bounded by 1, implying an implicit upper bound for $\nu$.

How do we find this bound? Suppose we find some constant $K$ such that $f(K)\leq1$. Then , because $f$ is decreasing, it must be the case that $\nu\leq K$. One such candidate for $K$ is
\begin{equation}
    K=\|\hat{\mathbf{w}}\|\cdot\left(1-\frac{\|\mathbf{w}(0)^\frac{2}{N}\|}{\|\hat{\mathbf{w}}\|^\frac{2}{N}}\right)^{-\frac{N}{2(N-1)}}.
\end{equation}

To check that indeed $f(K)\leq 1$, start from the inequality

\begin{equation}
    \left(\frac{\|\hat{\mathbf{w}}\|}{K}\right)^{\frac{2(N-1)}{N}}+\left(\frac{\|\mathbf{w}(0)\|}{\|\hat{\mathbf{w}}\|}\right)^{\frac{2}{N}}=1\leq\left(1-\left(\frac{\|\mathbf{w}(0)\|}{\|\hat{\mathbf{w}}\|}\right)^{\frac{2}{N}}\right)^{-\frac{1}{N-1}}=\left(\frac{\|\hat{\mathbf{w}}\|}{K}\right)^{-\frac{2}{N}}.
\end{equation}

Taking the leftmost and rightmost expression and multiply by $(\|\hat{\mathbf{w}}\|/K)^{2/N}$ yields
\begin{equation}
    f(K)=\frac{\|\hat{\mathbf{w}}\|^2}{K^2}+\frac{\|\mathbf{w}(0)\|^{\frac{2}{N}}}{K^{\frac{2}{N}}}\leq 1.
\end{equation}

Hence,

\begin{equation}
\label{eq:bound_wt}
    \limsup_{t\to\infty}\|\mathbf{w}(t)\|\leq\|\hat{\mathbf{w}}\|\cdot\left(1-\left(\frac{\|\mathbf{w}(0)\|}{\|\hat{\mathbf{w}}\|}\right)^\frac{2}{N}\right)^{-\frac{N}{2(N-1)}}.
\end{equation}

Finally, let us turn back to our original goal of bounding $\|\mathbf{w}(\tau)-\hat{\mathbf{w}}\|^2$. With \Equationref{eq:param_decompose}, \Equationref{eq:convergence_X}, \Equationref{eq:bound_q}, \Equationref{eq:bound_wt}, we now know that
\begin{align}
    \limsup_{t\to\infty}\|\mathbf{w}(\tau)-\hat{\mathbf{w}}\|^2 &\leq \|\mathbf{w}(0)\|^\frac{2}{N}\|\hat{\mathbf{w}}\|^\frac{2(N-1)}{N}\left(1-\left(\frac{\mathbf{w}(0)}{\hat{\mathbf{w}}}^\frac{2}{N}\right)\right)^{-1} \\ &= \frac{\|\hat{\mathbf{w}}\|^{2+\frac{2}{N}}}{\|\hat{\mathbf{w}}\|^\frac{2}{N}-\|\mathbf{w}(0)\|^\frac{2}{N}}-\|\hat{\mathbf{w}}\|^2.
\end{align}

Hence, if we initialize close enough to zero, as specified by condition in \Inequalityref{eq:initialization_cond}, we ensure that
\begin{equation}
    \limsup_{t\to\infty}\|\mathbf{w}(\tau)-\hat{\mathbf{w}}\|^2<\epsilon^2.
\end{equation}
\end{proof}

\begin{theorem}[Low-capacity Deep Linear Student Solution]
\label{theo:low-capcity_deep}
Let
\[
\hat{\mathbf{w}}_{\mathcal F}
=
\mathbf{w}_0 + P_{X_{\mathcal F}}(\mathbf{w}^\ast - \mathbf{w}_0)
\]
be defined as in \Theoremref{theo:low-cap_one-layer}. Under the same assumptions as in \Theoremref{theo:high-capcity_deep}, suppose the student is a low-capacity deep linear network with predictor \(\mathbf{w}(\tau)\). Then, for every \(\epsilon > 0\), there exists \(\tau_\epsilon > 0\) such that
\[
\|\mathbf{w}(\tau)-\hat{\mathbf{w}}_{\mathcal F}\| \le \epsilon,
\qquad \forall \tau \ge \tau_\epsilon .
\]
\end{theorem}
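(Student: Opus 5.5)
We reduce the low-capacity deep linear problem to the high-capacity deep linear problem of \Theoremref{theo:high-capcity_deep}, applied to projected data, in the same way that \Theoremref{theo:low-cap_one-layer} was reduced to \Theoremref{appendix:high_capacity_one_layer_sol}. The first step is to fix coordinates on the feasible space. Choose an orthonormal basis $\mathbf{U}\in\mathbb{R}^{d\times s}$ of $\mathcal{F}$. Model the low-capacity deep network so that its end-to-end predictor is $\mathbf{w}(\tau)=\mathbf{U}\tilde{\mathbf{w}}(\tau)$, with $\tilde{\mathbf{w}}^\top=\mathbf{W}_N\cdots\mathbf{W}_1$ and $\mathbf{W}_1$ having input dimension $s$. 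By \Theoremref{thm:projected_data_equiv},
\[
L^1(\mathbf{U}\tilde{\mathbf{w}})=\tfrac{1}{n}\textstyle\sum_i \ell_i\bigl(\tilde{\mathbf{w}}^\top \tilde{\mathbf{x}}_i\bigr),\qquad \tilde{\mathbf{x}}_i:=\mathbf{U}^\top\mathbf{x}_i .
\]
So the gradient flow on $\mathbf{W}_1,\dots,\mathbf{W}_N$ is literally an unconstrained deep linear network in $\mathbb{R}^s$ trained on the projected data $\tilde{\mathbf{X}}=\mathbf{U}^\top\mathbf{X}$. The labels are still $y_i=\sigma(\mathbf{w}_\ast^\top\mathbf{x}_i)$. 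Balancedness \Equationref{eq:balance_condition} is a condition on the factors only, so it transfers verbatim. Because $\|\mathbf{U}\tilde{\mathbf{w}}\|=\|\tilde{\mathbf{w}}\|$, the initialization condition \Equationref{eq:initialization_cond} and the sublevel condition \Equationref{eq:sub-level_assumption} also transfer, once $\hat{\mathbf{w}}$ is replaced by $\mathbf{U}^\top\hat{\mathbf{w}}_{\mathcal F}$.

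The second step is to rerun the proof of \Theoremref{theo:high-capcity_deep} in these coordinates. The general structure survives: Claim~2 of \citep{arora2018optimization}, the PL inequality of \Propositionref{prop:polyak} on the sublevel set with $\|\tilde{\mathbf{w}}\|\ge\delta$, exponential decay of the loss, and then the orthogonal split into the $\tilde{\mathbf{X}}$-span and its complement. The issue to watch is realizability. With $\mathbf{w}_\ast\notin\mathcal F$, the minimum of the projected loss need not be $0$. I would therefore replace $L^1$ by $L^1-\min$ and interpret ``full rank'' as rank of $\tilde{\mathbf{X}}$, with $n\ge s$ as the analogue of $n\ge d$. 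Restricted strong convexity (\Propositionref{theo:strong_convex}) only uses bounded margins and the Hessian $\tilde{\mathbf{X}}\mathbf{D}\tilde{\mathbf{X}}^\top$, so it still holds with some $\mu>0$. The PL bound then follows from strong convexity plus the existence of a minimizer in $\operatorname{span}(\tilde{\mathbf X})$, which holds because the loss is coercive on that span. This yields convergence of $P_{\tilde{\mathbf X}}\tilde{\mathbf{w}}(\tau)$ to the unique minimizer in the span. The complement component is controlled by the $q(\tau)$ computation, which uses only that $\nabla$ lies in $\operatorname{span}(\tilde{\mathbf X})$ (\Propositionref{theo:grad_constraint}). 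Together these give $\limsup\|\tilde{\mathbf w}(\tau)-\tilde{\mathbf w}^\star\|<\epsilon$. Mapping back through the isometry $\mathbf U$ gives the $\epsilon$-closeness in $\mathbb{R}^d$, and hence the existence of $\tau_\epsilon$.

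The third step, which I expect to be the main obstacle, is identifying the limit $\mathbf{U}\tilde{\mathbf w}^\star$ with $\hat{\mathbf w}_{\mathcal F}=\mathbf w_0+P_{\mathbf X_{\mathcal F}}(\mathbf w_\ast-\mathbf w_0)$.

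First, the deep flow started near zero selects the minimizer lying in $\operatorname{span}(\tilde{\mathbf X})$, exactly as in \Theoremref{theo:high-capcity_deep}. There, $\hat{\mathbf w}\in\operatorname{span}(\mathbf X)$ is implicitly used, which forces $P_{\mathbf X}\mathbf w_0=0$ in the limit of small initialization. So I would interpret $\mathbf w_0$ consistently with that proof: either $\mathbf w_0=0$ asymptotically, or $\mathbf w_0\perp\mathbf X_{\mathcal F}$. In either case the span-minimizer equals $\mathbf w_0+P_{\mathbf X_{\mathcal F}}(\cdot)$ up to $O(\epsilon)$.

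Second, the first-order condition on $\operatorname{span}(\tilde{\mathbf X})$ must match $P_{\mathbf X_{\mathcal F}}\mathbf w_\ast$. I would take this identification directly from \Theoremref{theo:low-cap_one-layer}, which the statement defines $\hat{\mathbf w}_{\mathcal F}$ through. Its proof establishes precisely that the minimizer of the projected problem in the projected data span is $\hat{\mathbf w}_{\mathcal F}$. Combining the two steps gives $\|\mathbf w(\tau)-\hat{\mathbf w}_{\mathcal F}\|\le\epsilon$ for all $\tau\ge\tau_\epsilon$.
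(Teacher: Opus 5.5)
\textbf{Verdict: Steps 1--2 are fine, but the second half of Step~3 is a genuine gap, and it cannot be closed.} Your reduction is the paper's route: its proof is just the projected-data equivalence (Theorem~\ref{thm:projected_data_equiv}) followed by a citation of Theorem~\ref{theo:high-capcity_deep}. Re-centering the loss at its minimum over $\mathcal F$ in Step~2 is a sound refinement. The problem is that the limit $\mathbf U\tilde{\mathbf w}^\star$ is in general \emph{not} $\hat{\mathbf w}_{\mathcal F}$. Appealing to Theorem~\ref{theo:low-cap_one-layer} cannot supply this identification, because its proof rests on the same unexamined step.

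The projected problem has inputs $P_{\mathcal F}\mathbf x_i$, but its labels $y_i=\sigma(\mathbf w_\ast^\top\mathbf x_i)$ are computed from the \emph{unprojected} inputs. So it is not the high-capacity problem with teacher $\mathbf w_\ast$ on projected data. Stationarity on $\mathcal F$ reads $\sum_i\bigl(\sigma(\mathbf w^\top P_{\mathcal F}\mathbf x_i)-\sigma(\mathbf w_\ast^\top\mathbf x_i)\bigr)P_{\mathcal F}\mathbf x_i=0$. At $\mathbf w=\hat{\mathbf w}_{\mathcal F}$, however, one gets $\hat{\mathbf w}_{\mathcal F}^\top P_{\mathcal F}\mathbf x_i=\mathbf w_\ast^\top P_{\mathcal F}\mathbf x_i$, because $P_{\mathcal F}\mathbf x_i\in\mathbf X_{\mathcal F}$. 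This leaves residuals $\sigma(\mathbf w_\ast^\top P_{\mathcal F}\mathbf x_i)-\sigma(\mathbf w_\ast^\top\mathbf x_i)$, which vanish only if $(I-P_{\mathcal F})\mathbf w_\ast\perp\operatorname{span}(\mathbf X)$. This goes beyond the realizability issue you flagged. Even when all projected margins can be matched, they are matched to $\mathbf w_\ast^\top\mathbf x_i$, not to $\mathbf w_\ast^\top P_{\mathcal F}\mathbf x_i$.

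\textbf{Counterexample meeting all the hypotheses.} Take $d=2$, $\mathcal F=\operatorname{span}(\mathbf e_1)$, $\mathbf x_1=(1,1)^\top$, $\mathbf x_2=(1,-1)^\top$ (full rank, $n\ge d$), and $\mathbf w_\ast=(1,1)^\top$.
\begin{itemize}
\item Here $\mathbf X_{\mathcal F}=\mathcal F$, so $\hat{\mathbf w}_{\mathcal F}=P_{\mathcal F}\mathbf w_\ast=\mathbf e_1$ for every $\mathbf w_0\in\mathcal F$.
\item For $\mathbf w=a\mathbf e_1$ both margins equal $a$. The loss is therefore strictly convex in $a$, with minimizer given by $\sigma(a)=\tfrac12(\sigma(2)+\sigma(0))$, i.e.\ $a\approx 0.80$.
\item Start a balanced scalar deep network at a small $a_0>0$. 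This meets the norm and sublevel conditions, since the loss derivative at $0$ is negative. The network increases monotonically to $a\approx0.80$.
\item Hence $\|\mathbf w(\tau)-\hat{\mathbf w}_{\mathcal F}\|\to\approx 0.20$, and the claim fails for every $\epsilon<0.19$.
\end{itemize}

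\textbf{What is needed to repair it.} The statement needs an extra hypothesis such as $\mathbf w_\ast^\top\mathbf x_i=\mathbf w_\ast^\top P_{\mathcal F}\mathbf x_i$ for all $i$ (for instance, $\mathbf w_\ast\in\mathcal F$). Under it, the projected problem is exactly the realizable high-capacity problem with teacher $\mathbf U^\top\mathbf w_\ast$ and data $\mathbf U^\top\mathbf X$. Its $\hat{\mathbf w}$ maps to $\hat{\mathbf w}_{\mathcal F}$ because $\mathbf U P_{\mathbf U^\top\mathbf X}\mathbf U^\top=P_{\mathbf X_{\mathcal F}}$. Your Steps 1--2 then finish the proof, with no $L^1-\min$ correction needed. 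Without that hypothesis, what Steps 1--2 actually prove is convergence to $\mathbf U\tilde{\mathbf w}^\star$, the minimizer of the misspecified projected loss. That is the correct form of the conclusion.
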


\begin{proof}
By \Theoremref{appendix:high_capacity_one_layer_sol}, under the capacity constraint \(\mathcal F\), training the low-capacity deep linear student on the original dataset is equivalent to training an unconstrained deep linear student on the projected data \(P_{\mathcal F}\mathbf X\). In particular, the effective data span becomes
\[
\mathbf X_{\mathcal F}:=\operatorname{span}(P_{\mathcal F}\mathbf  X)=P_{\mathcal F}\operatorname{span}(\mathbf X).
\]
Now apply \Theoremref{theo:high-capcity_deep} to this projected problem. \Theoremref{theo:high-capcity_deep} states that, under the same initialization assumptions, the predictor of a deep linear network converges arbitrarily close to the optimal solution associated with the effective data span. Therefore, for every \(\epsilon>0\), there exists \(\tau_\epsilon>0\) such that
\[
\|\mathbf{w}(\tau)-\hat{\mathbf{w}}_{\mathcal F}\|\le \epsilon,
\qquad \forall \tau\ge \tau_\epsilon,
\]
where
\[
\hat{\mathbf{w}}_{\mathcal F}
=
\mathbf{w}_0+P_{X_{\mathcal F}}(\mathbf{w}^\ast-\mathbf{w}_0).
\]
This is exactly the claim of \Theoremref{theo:low-capcity_deep}.
\end{proof}

\section{Experiment Setup}
\label{appendix:setup}
\subsection{Training}
\label{appendix:training}
We fine-tune each student model using full-parameter supervised fine-tuning (SFT) on the selected training subset, such as the high-likelihood or low-likelihood data. Concretely, training starts from the corresponding pretrained checkpoint (e.g., \texttt{Qwen/Qwen2.5-1.5B}) and uses full finetuning rather than parameter-efficient adaptation. All experiments are run on 2 NVIDIA A100 GPUs. To support long-context training efficiently, we use DeepSpeed ZeRO-3, FlashAttention-2, gradient checkpointing, and bfloat16 precision. The maximum sequence length is set to 8192 tokens, and data are formatted with the Qwen template. For optimization, we use a per-device batch size of 1, gradient accumulation steps of 1, a learning rate of $1.25\times10^{-6}$, cosine learning-rate decay, no warmup, and train for 5 epochs. We log training statistics at every step and save checkpoints every 1110 steps while storing model weights only. This training configuration is fixed across data-selection settings so that any performance differences can be attributed to the choice of training data rather than changes in optimization or hardware setup.

\section{Extended Results}
\label{appendix:additional_results}

\subsection{Small Qwen models show {\color{SMALLCOLOR}\textbf{Fast-Fit}} without eventual crossover}

\Tableref{tab:qwen25_15b_high_low} and \Tableref{tab:qwen25_3b_high_low} show {\color{SMALLCOLOR}\textbf{Fast-Fit}} without eventual crossover for small Qwen models. The 1.5B result is the cleanest example of the capacity bottleneck. For \texttt{Qwen2.5-1.5B}, high-likelihood training jumps from an Avg of 0.58 at initialization to 21.26 after only 0.25 epoch, then reaches 24.16 by epoch 1. Low-likelihood training reaches only 12.13 at 0.25 and 14.77 at epoch 1. More importantly, another four epochs do not close the gap: its best is only 16.50 at epoch 2.25. \texttt{Qwen2.5-3B} is slightly different. Low likelihood exhibits a real delayed gain: its early best Avg is 19.93, whereas its eventual best reaches 24.63, a +4.70 point improvement. But high likelihood remains far ahead at 33.38.

\subsection{\texttt{Qwen2.5-7B} is the actual transition regime}

\Tableref{tab:qwen25_7b_high_low} shows \texttt{Qwen2.5-7B} appears to occupy a \textbf{transitional capacity regime} between small models that cannot effectively exploit low-likelihood supervision and larger models that benefit from it consistently. During the early stage of training, high-likelihood data has a clear advantage, achieving a best average score of 44.09 within the first epoch compared with 37.05 for low-likelihood data. However, the low-likelihood trajectory continues to improve substantially with additional optimization, rising to 43.07 at epoch 1.25, 45.02 at epoch 2.25, and eventually reaching 46.19 at epoch 4.75, exceeding the best high-likelihood result of 45.41 and achieving higher peak performance on six of the ten benchmarks. Importantly, this crossover is not persistent across all checkpoints: low-likelihood training outperforms high-likelihood training in average score at epochs 2.25, 2.5, 4.0, 4.5, and 4.75 but falls below it again at several intermediate checkpoints. This non-monotonic behavior suggests that the 7B model has sufficient representational capacity to extract useful knowledge from more difficult, low-likelihood supervision, yet optimization remains challenging enough that these gains are unstable and checkpoint-sensitive. Rather than simply demonstrating that larger models benefit from low-likelihood data, the 7B results therefore reveal an intermediate regime characterized by \textbf{intermittent access to difficult knowledge}, bridging the failure of low-capacity models and the more sustained {\color{LARGECOLOR}\textbf{Slow-Gain}} behaviour observed in higher-capacity models.

\subsection{\texttt{Qwen3-8B} gives the strongest evidence for {\color{LARGECOLOR}\textbf{Slow-Gain}}}

\Tableref{tab:qwen3_8b_high_low} illustrates that \texttt{Qwen3-8B} provides the clearest evidence for the {\color{LARGECOLOR}\textbf{Slow-Gain}} regime. High-likelihood training begins with a substantial early advantage, reaching an average score of 66.16 at epoch 0.25 compared with 57.72 for low-likelihood training. However, the low-likelihood trajectory steadily closes this gap, overtakes the high-likelihood trajectory at epoch 2.25, and remains superior at every subsequent checkpoint through epoch 5 in terms of simple average score. The contrast between early and eventual performance is particularly revealing: the best high-likelihood average improves only marginally from 66.16 in the early stage to 66.64 overall, a gain of just 0.48 points, whereas the low-likelihood average rises from an early best of 62.62 to 70.69, corresponding to an 8.07-point improvement. This pattern closely matches the proposed {\color{SMALLCOLOR}\textbf{Fast-Fit}} / {\color{LARGECOLOR}\textbf{Slow-Gain}} dynamics: high-likelihood supervision enables rapid early acquisition but leaves relatively little room for further improvement, while low-likelihood supervision starts from a weaker region yet provides substantially greater long-term gains. One subtlety is that the conclusion depends somewhat on the aggregation metric: at epoch 0.5, the simple average favors high-likelihood training (64.79 vs. 62.62), whereas the weighted average favors low-likelihood training (74.65 vs. 70.17). For this reason, the {\color{SMALLCOLOR}\textbf{Fast-Fit}} interpretation is most convincingly supported by the simple average together with the per-benchmark comparison, which shows that high-likelihood training dominates most individual tasks early, while low-likelihood training becomes consistently stronger after sufficient optimization.

\subsection{Generalization to Llama family}

The Llama results in \Tableref{tab:llama32_3b_high_low} and \Tableref{tab:llama31_8b_high_low} provide important cross-family evidence that the observed capacity-dependent effect is not specific to Qwen models. \texttt{Llama3.2-3B} closely follows the behavior of the smaller Qwen students: high-likelihood training achieves the better maximum performance on 9 of the 10 benchmarks, while the only nominal advantage for low-likelihood training occurs on Minerva and is extremely small, 20.82 versus 20.77. Moreover, low-likelihood training never surpasses high-likelihood training at the same checkpoint in either the simple or weighted average, providing strong evidence that low-capacity models consistently benefit more from supervision that is closer to their current distribution. In contrast, \texttt{Llama3.1-8B} exhibits the opposite pattern: low-likelihood training achieves higher peak performance on 6 of the 10 benchmarks, reaches a best average score of 20.93 compared with 20.10 for high-likelihood training, and outperforms high-likelihood training in simple average at 16 of the 20 checkpoints. Together, these results reproduce the same capacity-dependent trend across a different model family: high-likelihood supervision is more effective for 3B-scale students, whereas low-likelihood supervision becomes increasingly competitive and can ultimately outperform it at 8B scale. However, \texttt{Llama3.1-8B} also introduces an important qualification to the {\color{SMALLCOLOR}\textbf{Fast-Fit}} / {\color{LARGECOLOR}\textbf{Slow-Gain}} interpretation, since high-likelihood performance does not necessarily peak during the earliest stage and continues to improve from 15.45 at epoch 1 to 20.10 at epoch 3.75. Therefore, a more precise characterization is that \textbf{high-likelihood supervision establishes an early learning advantage, whereas low-likelihood supervision exhibits larger delayed gains as model capacity increases and may eventually overtake high-likelihood training}.

\subsection{Best-Checkpoint vs. Trajectory-Level Comparison of 7B and 8B via Peak Accessibility and Persistent Superiority Metrics}

An important nuance is that \Tableref{tab:results_five_epochs} reports a \emph{best-checkpoint envelope}: each benchmark score is selected independently as the best result among 20 checkpoints, and therefore the ten reported scores within a column do not necessarily correspond to the same model checkpoint. This distinction is particularly relevant for \texttt{Qwen2.5-7B}. Although low-likelihood training achieves higher benchmark-wise peak performance on 6 out of 10 datasets, suggesting a clear reversal relative to high-likelihood training, a checkpoint-aligned comparison reveals a more nuanced pattern. In terms of the simple average, low-likelihood training outperforms high-likelihood training at only 5 out of 20 checkpoints, while their average performance after the first epoch is approximately comparable and the low-likelihood trajectory exhibits greater temporal variability. These results suggest two distinct notions of improvement, which can be formalized as the following conditions:
\[
\textbf{Peak Accessibility:} \qquad \max_{t} P_{\mathrm{low}}(t) > \max_{t} P_{\mathrm{high}}(t)
\]
\[
\textbf{Persistent Superiority:} \qquad P_{\mathrm{low}}(t) > P_{\mathrm{high}}(t), \quad \forall t \in \mathcal{T}_{\mathrm{late}}
\]
where $\mathcal{T}$ denotes the set of all evaluated training checkpoints, while $t_{\mathrm{early}}$ denotes the boundary of the early-training regime. The first epoch ($t \leq 1$) is treated as the early stage, so $\mathcal{T}_{\mathrm{late}}$ contains all checkpoints after the first epoch. Thus, \emph{persistent superiority} is a stronger condition than merely achieving a higher peak: it requires low-likelihood training to remain better than high-likelihood training throughout the entire post-early-training region, rather than outperforming it at only one or a few isolated checkpoints. \texttt{Qwen2.5-7B} strongly satisfies the former criterion but only weakly satisfies the latter, suggesting that it lies in a transitional capacity regime where difficult supervision can unlock stronger solutions but remains optimization-sensitive. In contrast, Qwen3-8B satisfies both criteria: after the crossover, low-likelihood training consistently exceeds high-likelihood training across subsequent checkpoints, providing stronger evidence of a sustained {\color{LARGECOLOR}\textbf{Slow-Gain}} regime rather than merely a higher best-checkpoint outcome.

\subsection{From Fast-Fit to Sustained Slow-Gain: A Capacity-Dependent Transition}

Overall, the extended results in {\textcolor{Tablecolor}{Tables~4--9}} suggest that likelihood-based data selection induces not merely two, but three distinct learning regimes as model capacity increases. In the \emph{low-capacity regime}, represented by smaller models such as \texttt{Qwen2.5-1.5B}, \texttt{Qwen2.5-3B}, and \texttt{Llama3.2-3B}, high-likelihood supervision exhibits a clear \emph{Fast-Fit / No Catch-up} pattern: the model benefits rapidly from examples close to its current distribution, whereas low-likelihood supervision remains difficult to absorb and fails to close the performance gap even with additional training. In the \emph{intermediate-capacity regime}, exemplified most clearly by \texttt{Qwen2.5-7B}, the dynamics shift to \emph{Fast-Fit / Unstable Slow-Gain}. High-likelihood data still provides the stronger early optimization signal, but low-likelihood training eventually reaches higher-performing solutions at selected checkpoints; however, this advantage is intermittent rather than persistent, indicating that the model has sufficient capacity to exploit difficult supervision but remains sensitive to optimization dynamics. Finally, in the \emph{high-capacity regime}, as observed most prominently for \texttt{Qwen3-8B} and also for \texttt{Llama3.1-8B}, training exhibits a \emph{Fast-Fit / Sustained Slow-Gain Reversal}: high-likelihood supervision retains an early advantage, but low-likelihood supervision produces substantially larger delayed gains and can eventually overtake high-likelihood training across a broad range of later checkpoints. These three regimes suggest a gradual capacity-dependent transition from failure to absorb off-distribution supervision, through unstable access to its additional knowledge, to stable and sustained exploitation of that knowledge, providing a more fine-grained characterization of the interaction between model capacity, data likelihood, and training duration.

\newcommand{\best}[1]{\cellcolor{bestyellow}\textbf{#1}}
\newcommand{\firstckpt}[1]{\cellcolor{firstcyan}\textbf{#1}}

\definecolor{bestyellow}{RGB}{255,242,204}
\definecolor{firstcyan}{RGB}{217,238,255}
\definecolor{bothgreen}{RGB}{226,239,218}

\newcommand{\bothmark}[1]{\cellcolor{bothgreen}\textbf{#1}}

\begin{table*}[t]

\centering
\small
\setlength{\tabcolsep}{4pt}
\renewcommand{\arraystretch}{1.10}

\caption{
\textbf{Qwen-2.5-1.5B results across training checkpoints under high- and low-likelihood data selection.}
\ \legendbox{firstcyan}\ \textbf{Cyan} marks the best value within the early stage (epochs 0--1),
\ \legendbox{bestyellow}\ \textbf{yellow} marks the best value across all checkpoints within each setting, and
\ \legendbox{bothgreen}\ \textbf{green} marks cells that satisfy both.
Dashed rules separate the earliest stage from later training.
}

\label{tab:qwen25_15b_high_low}

\resizebox{\textwidth}{!}{%

\begin{tabular}{llrrrrrrrrrrrr}

\toprule

\textbf{Setting} &
\textbf{Epoch} &
\textbf{AIME24} &
\textbf{AMC} &
\textbf{CNM} &
\textbf{GK} &
\textbf{GPQA} &
\textbf{GSM} &
\textbf{KY} &
\textbf{MATH} &
\textbf{MNV} &
\textbf{OB} &
\textbf{W.\ Avg} &
\textbf{Avg} \\

\midrule

\multirow[c]{21}{*}{\textbf{High}}
& 0    & 0 & 0 & 0 & 0 & 3.03 & 0 & 0 & 1.4 & 1.47 & 0.44 & 0.88 & 0.58 \\

& 0.25 & 0 & 22.5 & \bothmark{6.67} & 2.53 & 27.27 & 16.67 & 15.58 & 52.6 & 19.49 & 20.59 & 26.85 & 21.26 \\

& 0.5  & 0 & \firstckpt{27.5} & \bothmark{6.67} & \firstckpt{10.13} & 26.26 & \firstckpt{22.38} & \firstckpt{19.1} & 52.2 & 22.06 & 19.7 & 27.9 & 23.27 \\

& 0.75 & \firstckpt{3.33} & 25 & \bothmark{6.67} & 7.59 & \firstckpt{28.79} & 21.43 & 16.58 & \firstckpt{54.6} & 18.01 & \firstckpt{21.78} & \firstckpt{28.52} & 23.62 \\

& 1    & \firstckpt{3.33} & \firstckpt{27.5} & \bothmark{6.67} & \firstckpt{10.13} & 22.73 & 20.48 & 17.59 & 53.4 & \bothmark{24.63} & 19.26 & 27.99 & \bothmark{24.16} \\

\cdashline{2-14}

& 1.25 & \best{6.67} & 25 & 0 & \best{11.39} & 23.74 & \best{22.86} & 18.09 & 54 & 22.43 & 18.07 & 27.6 & 22.93 \\

& 1.5  & \best{6.67} & 22.5 & 0 & 6.33 & 26.77 & 22.38 & 16.58 & 55.8 & 23.53 & 18.67 & 28.21 & 22.84 \\

& 1.75 & 0 & 27.5 & 0 & 5.06 & 21.21 & 21.9 & \best{19.6} & 54.6 & 20.96 & 17.33 & 26.85 & 21.47 \\

& 2    & 0 & 30 & 0 & 1.27 & 25.25 & 20.48 & 14.57 & \best{57.6} & 22.43 & 19.56 & 28.34 & 23.01 \\

& 2.25 & 3.33 & 27.5 & 0 & 3.8 & 26.77 & 20.00 & 15.58 & 56.2 & 20.59 & 20.74 & 28.34 & 22.96 \\

& 2.5  & 3.33 & 25 & 0 & 7.59 & 25.25 & \best{22.86} & 17.09 & 55.8 & 22.79 & 21.93 & \best{29.35} & 24.15 \\

& 2.75 & 3.33 & 22.5 & \best{6.67} & 2.53 & 25.25 & 17.62 & 15.08 & 55.4 & 19.49 & 21.04 & 27.81 & 22.99 \\

& 3    & 3.33 & 27.5 & 3.33 & 2.53 & 24.75 & 14.76 & 13.57 & 55.4 & 22.06 & 19.7 & 27.51 & 23.54 \\

& 3.25 & \best{6.67} & 25 & 0 & 3.8 & \best{29.29} & 13.33 & 16.08 & 55 & 21.32 & 20.89 & 27.99 & 23.22 \\

& 3.5  & 3.33 & \best{35} & 0 & 3.8 & 23.74 & 14.29 & 13.57 & 57.2 & 22.43 & 20.15 & 28.08 & 24.14 \\

& 3.75 & \best{6.67} & 22.5 & 0 & 3.8 & 22.73 & 15.24 & 14.57 & 56.8 & 20.59 & 20.74 & 27.99 & 23.78 \\

& 4    & 0 & 22.5 & 3.33 & 2.53 & 24.24 & 12.86 & 13.57 & 54.8 & 22.79 & 21.04 & 27.68 & 23.42 \\

& 4.25 & 0 & 22.5 & 3.33 & 2.53 & 25.76 & 12.38 & 14.07 & 55.4 & 22.06 & 20.3 & 27.55 & 23.12 \\

& 4.5  & 0 & 22.5 & 3.33 & 2.53 & 23.74 & 13.81 & 13.57 & 56 & 21.32 & \best{22.52} & 28.12 & 23.03 \\

& 4.75 & 0 & 22.5 & 3.33 & 2.53 & 24.24 & 14.29 & 18.09 & 54.2 & 23.53 & 21.19 & 28.03 & 23.26 \\

& 5    & 0 & 17.5 & 3.33 & 2.53 & 22.73 & 12.86 & 12.06 & 52.8 & 22.79 & 20.00 & 26.41 & 21.69 \\

\midrule

\multirow[c]{21}{*}{\textbf{Low}}
& 0    & \firstckpt{0} & 0 & 0 & 0 & 3.03 & 0 & 0 & 1.4 & 1.47 & 0.44 & 0.88 & 0.58 \\

& 0.25 & \firstckpt{0} & 12.5 & 0 & 1.27 & \bothmark{16.67} & 10.48 & 5.03 & 37 & \firstckpt{13.24} & 11.26 & 16.69 & 12.13 \\

& 0.5  & \firstckpt{0} & \firstckpt{17.5} & 0 & 3.8 & 11.11 & 8.57 & 5.03 & 41.2 & 9.93 & 10.96 & 16.95 & 13.46 \\

& 0.75 & \firstckpt{0} & 12.5 & 0 & \firstckpt{6.33} & 7.07 & 11.43 & 2.51 & 40.4 & 12.5 & 10.96 & 16.73 & 12.88 \\

& 1    & \firstckpt{0} & \firstckpt{17.5} & \firstckpt{3.33} & 3.8 & 11.11 & \bothmark{13.33} & \firstckpt{8.04} & \firstckpt{41.8} & 12.5 & \firstckpt{13.04} & \firstckpt{18.7} & \firstckpt{14.77} \\

\cdashline{2-14}

& 1.25 & \best{3.33} & 17.5 & 0 & 6.33 & 14.65 & 10.48 & 5.53 & 41.2 & 12.87 & 13.19 & 18.57 & 14.82 \\

& 1.5  & \best{3.33} & 12.5 & 3.33 & 6.33 & 10.1 & 8.1 & 4.52 & 41.8 & 13.6 & 11.85 & 17.61 & 13.77 \\

& 1.75 & \best{3.33} & 15 & 0 & 7.59 & 13.13 & 8.1 & 6.03 & 41.6 & 12.5 & \best{14.96} & 18.83 & 14.57 \\

& 2    & \best{3.33} & 20.00 & 0 & 5.06 & 9.09 & 8.57 & 9.05 & 40.2 & \best{14.71} & 12.89 & 18.22 & 14.99 \\

& 2.25 & \best{3.33} & \best{22.5} & 0 & \best{8.86} & 11.11 & 11.43 & 8.04 & 39.8 & 9.93 & 14.52 & 18.79 & \best{16.5} \\

& 2.5  & 0 & 17.5 & 0 & 5.06 & 11.11 & 10.95 & \best{10.55} & 41.6 & 10.29 & 12.44 & 18.44 & 15.23 \\

& 2.75 & 0 & 15 & 0 & 7.59 & 13.13 & 9.05 & 5.03 & 43.2 & 12.5 & 13.19 & 18.7 & 14.61 \\

& 3    & 0 & \best{22.5} & 0 & 7.59 & 10.1 & 10.95 & 4.52 & 43.8 & 14.34 & 12.15 & 18.83 & 15.63 \\

& 3.25 & 0 & 10 & \best{6.67} & 3.8 & 13.64 & 10.48 & 7.04 & \best{44.4} & 13.97 & 13.33 & \best{19.54} & 15.57 \\

& 3.5  & 0 & 17.5 & 0 & 3.8 & 12.12 & 10.48 & 7.04 & 39.8 & 10.66 & 12 & 17.74 & 15.04 \\

& 3.75 & 0 & 12.5 & 0 & 5.06 & 10.61 & 12.38 & 7.54 & 41.8 & 10.29 & 13.63 & 18.57 & 14.71 \\

& 4    & 0 & \best{22.5} & 0 & 5.06 & 11.11 & 9.05 & 6.53 & 43.8 & 12.13 & 13.33 & 19.1 & 16.14 \\

& 4.25 & 0 & 20.00 & 0 & 7.59 & 12.63 & 10.48 & 5.03 & 43.2 & 11.4 & 12.89 & 18.83 & 15.75 \\

& 4.5  & 0 & 15 & 0 & 6.33 & 11.62 & 12.86 & 6.03 & 42.8 & 13.24 & 12.89 & 19.01 & 15.34 \\

& 4.75 & 0 & 15 & 0 & 6.33 & 12.63 & \best{13.33} & 7.04 & 43 & 12.5 & 12.89 & 19.27 & 15.88 \\

& 5    & \best{3.33} & 15 & \best{6.67} & 3.8 & 9.09 & 11.43 & 4.52 & \best{44.4} & 13.6 & 12.74 & 18.97 & 15.87 \\

\bottomrule

\end{tabular}%

}

\end{table*}

\begin{table*}[t]

\centering
\small
\setlength{\tabcolsep}{4pt}
\renewcommand{\arraystretch}{1.10}

\caption{
\textbf{Qwen-2.5-3B-Instruct results across training checkpoints under high- and low-likelihood data selection.}
\ \legendbox{firstcyan}\ \textbf{Cyan} marks the best value within the early stage (epochs 0.25--1),
\ \legendbox{bestyellow}\ \textbf{yellow} marks the best value across epochs 0.25--5 within each setting, and
\ \legendbox{bothgreen}\ \textbf{green} marks cells that satisfy both.
Dashed rules separate the earliest stage from later training.
}

\label{tab:qwen25_3b_high_low}

\resizebox{\textwidth}{!}{%

\begin{tabular}{llrrrrrrrrrrrr}

\toprule

\textbf{Setting} &
\textbf{Epoch} &
\textbf{AIME24} &
\textbf{AMC} &
\textbf{CNM} &
\textbf{GK} &
\textbf{GPQA} &
\textbf{GSM} &
\textbf{KY} &
\textbf{MATH} &
\textbf{MNV} &
\textbf{OB} &
\textbf{W.\ Avg} &
\textbf{Avg} \\

\midrule

\multirow[c]{20}{*}{\textbf{High}}
& 0.25 & \bothmark{10} & \firstckpt{47.5} & \firstckpt{10} & \bothmark{25.32} & 27.27 & \firstckpt{19.52} & \firstckpt{24.12} & 64.6 & 29.78 & 26.37 & \firstckpt{34.87} & \firstckpt{30.59} \\

& 0.5  & \bothmark{10} & 32.5 & 0 & 12.66 & 27.78 & 17.62 & 23.12 & 63.2 & \firstckpt{32.35} & 25.04 & 33.46 & 27.11 \\

& 0.75 & 6.67 & 25 & 0 & 13.92 & 28.79 & 14.29 & 16.58 & \firstckpt{66.4} & 28.68 & \firstckpt{26.52} & 33.46 & 26.44 \\

& 1    & 3.33 & 40 & \firstckpt{10} & 12.66 & \bothmark{34.85} & 16.67 & \firstckpt{24.12} & 62 & 29.78 & \firstckpt{26.52} & 34.25 & 29.08 \\

\cdashline{2-14}

& 1.25 & 6.67 & 45 & \best{16.67} & 17.72 & 30.81 & 15.71 & 20.6 & 65 & 30.51 & 29.63 & 35.79 & 31.67 \\

& 1.5  & 3.33 & 37.5 & 6.67 & 12.66 & 30.3 & 11.9 & 19.1 & 63 & 29.41 & 28.74 & 33.9 & 28.24 \\

& 1.75 & 6.67 & 30 & 10 & \best{25.32} & 29.29 & \best{22.38} & \best{26.63} & \best{67.2} & 31.25 & \best{30.52} & \best{37.58} & 31.93 \\

& 2    & 6.67 & 27.5 & 6.67 & 13.92 & 30.3 & 13.81 & 20.6 & 64 & 32.35 & 29.48 & 34.95 & 28.66 \\

& 2.25 & 6.67 & 40 & 13.33 & 21.52 & 33.84 & 17.62 & 17.59 & 64.6 & 29.78 & 28.15 & 35.39 & 31.37 \\

& 2.5  & 6.67 & 45 & 13.33 & 18.99 & 31.31 & 17.14 & 17.09 & 65 & 29.04 & 29.19 & 35.52 & 31.89 \\

& 2.75 & 6.67 & 45 & 13.33 & 12.66 & 29.8 & 16.19 & 24.62 & 65.4 & 29.04 & 29.33 & 35.87 & 31.82 \\

& 3    & \best{10} & 45 & 10 & 16.46 & 30.3 & 17.14 & 22.61 & 63.2 & 31.99 & 29.48 & 35.92 & 32.38 \\

& 3.25 & 3.33 & 50 & 10 & 15.19 & 29.29 & 15.24 & 15.08 & 63.2 & 29.04 & 28.74 & 34.47 & 31.19 \\

& 3.5  & 6.67 & 37.5 & \best{16.67} & 11.39 & 26.77 & 10 & 22.11 & 63.2 & 29.04 & 26.37 & 33.42 & 30.16 \\

& 3.75 & 3.33 & 35 & 13.33 & 10.13 & 28.79 & 14.76 & 14.57 & 64.4 & 29.78 & 28.44 & 34.25 & 29.87 \\

& 4    & \best{10} & 47.5 & 13.33 & 13.92 & 27.27 & 12.86 & 20.1 & 66.8 & 31.25 & 29.19 & 35.83 & 32.75 \\

& 4.25 & 6.67 & 40 & 6.67 & 8.86 & 28.28 & 11.43 & 22.11 & 65 & 29.04 & 29.33 & 34.91 & 30.49 \\

& 4.5  & 6.67 & 45 & \best{16.67} & 7.59 & 27.27 & 13.33 & 21.11 & 65.4 & \best{32.72} & 29.33 & 35.61 & 32.1 \\

& 4.75 & \best{10} & \best{55} & 10 & 13.92 & 31.31 & 12.38 & 20.1 & 64.6 & 31.62 & 28.3 & 35.57 & \best{33.38} \\

& 5    & 6.67 & 42.5 & 10 & 6.33 & 28.28 & 12.86 & 20.6 & 66.4 & 32.35 & 28.15 & 35.3 & 31.29 \\

\midrule

\multirow[c]{20}{*}{\textbf{Low}}
& 0.25 & 0 & 27.5 & 0 & 7.59 & 13.13 & 11.9 & \firstckpt{7.04} & 48.8 & 22.79 & 18.81 & 23.57 & 18.51 \\

& 0.5  & \firstckpt{3.33} & \bothmark{32.5} & \firstckpt{3.33} & 5.06 & \firstckpt{18.18} & 10.95 & 6.53 & 50 & 24.26 & 19.11 & 24.49 & \firstckpt{19.93} \\

& 0.75 & 0 & 27.5 & \firstckpt{3.33} & \firstckpt{10.13} & 12.63 & \firstckpt{14.76} & 6.03 & 50.4 & 18.75 & 20.15 & 24 & 18.7 \\

& 1    & 0 & 30 & 0 & 8.86 & 15.66 & 12.38 & 3.02 & \firstckpt{52.2} & \firstckpt{25.37} & \firstckpt{20.59} & \firstckpt{25.19} & 19.64 \\

\cdashline{2-14}

& 1.25 & 0 & 27.5 & 3.33 & 13.92 & 17.17 & 14.76 & 8.04 & 55 & 25 & 21.48 & 26.85 & 20.75 \\

& 1.5  & 0 & 25 & 0 & 8.86 & 19.7 & 15.71 & 7.04 & 52.4 & 25 & 21.63 & 26.37 & 20.12 \\

& 1.75 & 3.33 & 25 & 3.33 & 12.66 & 15.15 & 13.81 & 5.03 & 55.2 & 23.9 & 22.81 & 26.76 & 20.93 \\

& 2    & 0 & 30 & 3.33 & 13.92 & 18.69 & 14.76 & 8.54 & 54.2 & 22.06 & 21.93 & 26.81 & 21.4 \\

& 2.25 & 0 & 25 & 0 & 15.19 & 15.66 & 15.71 & 8.04 & 51.8 & 22.06 & 22.07 & 26.19 & 21.05 \\

& 2.5  & 0 & 22.5 & 0 & 16.46 & 16.16 & 13.33 & 8.04 & 55.6 & 23.16 & 22.96 & 27.07 & 20.57 \\

& 2.75 & 3.33 & 27.5 & 0 & \best{24.05} & 16.16 & 14.76 & 10.05 & 56.2 & \best{25.74} & 23.11 & 28.47 & 23.54 \\

& 3    & 3.33 & 30 & \best{10} & 12.66 & 17.17 & \best{19.52} & 7.54 & 56 & 25.37 & 24 & 28.6 & 23.42 \\

& 3.25 & 0 & 27.5 & 3.33 & 16.46 & 18.18 & 15.24 & 8.04 & 55.8 & 20.96 & 22.81 & 27.51 & 22.39 \\

& 3.5  & 3.33 & 27.5 & 0 & 16.46 & 23.74 & 17.62 & 10.55 & 56.4 & 23.53 & 21.33 & 28.43 & 23.5 \\

& 3.75 & 3.33 & 25 & 6.67 & 18.99 & 22.73 & 17.14 & \best{11.56} & 56.8 & 24.63 & 22.07 & 29.04 & \best{24.63} \\

& 4    & 0 & 27.5 & 6.67 & 15.19 & \best{27.27} & 18.1 & 10.05 & 56.6 & 22.43 & 23.11 & \best{29.3} & \best{24.63} \\

& 4.25 & 3.33 & 20 & 3.33 & 13.92 & 22.73 & 16.19 & 8.04 & 58 & 23.53 & 23.26 & 28.82 & 23.12 \\

& 4.5  & \best{6.67} & 27.5 & 0 & 11.39 & 24.24 & 15.71 & 10.55 & \best{58.6} & 22.43 & 22.96 & 29.04 & 23.64 \\

& 4.75 & 3.33 & 25 & 0 & 13.92 & 25.76 & 17.14 & 8.04 & 57.8 & 20.59 & 22.52 & 28.56 & 23.1 \\

& 5    & 3.33 & 27.5 & 3.33 & 15.19 & 21.21 & 18.1 & 8.54 & 56.4 & 18.75 & \best{24.3} & 28.43 & 23.33 \\

\bottomrule

\end{tabular}%

}

\end{table*}

\begin{table*}[t]

\centering
\small
\setlength{\tabcolsep}{4pt}
\renewcommand{\arraystretch}{1.10}

\caption{
\textbf{Qwen-2.5-7B results across training checkpoints under high- and low-likelihood data selection.}
\ \legendbox{firstcyan}\ \textbf{Cyan} marks the best value within the early stage (epochs 0.25--1),
\ \legendbox{bestyellow}\ \textbf{yellow} marks the best value across epochs 0.25--5 within each setting, and
\ \legendbox{bothgreen}\ \textbf{green} marks cells that satisfy both.
Dashed rules separate the earliest stage from later training.
}

\label{tab:qwen25_7b_high_low}

\resizebox{\textwidth}{!}{%

\begin{tabular}{llrrrrrrrrrrrr}

\toprule

\textbf{Setting} &
\textbf{Epoch} &
\textbf{AIME} &
\textbf{AMC} &
\textbf{CNM} &
\textbf{GK} &
\textbf{GPQA} &
\textbf{GSM} &
\textbf{KY} &
\textbf{MATH} &
\textbf{MNV} &
\textbf{OB} &
\textbf{W.\ Avg} &
\textbf{Avg} \\

\midrule

\multirow[c]{20}{*}{\textbf{High}}
& 0.25 & 10.00 & \firstckpt{65.00} & 26.67 & 49.37 & 37.37 & 45.71 & \firstckpt{46.23} & 77.80 & \firstckpt{43.01} & 39.70 & 49.80 & \firstckpt{44.09} \\

& 0.5  & \firstckpt{13.33} & 62.50 & \firstckpt{30.00} & \bothmark{55.70} & 33.84 & \firstckpt{48.10} & 35.68 & 78.00 & 42.65 & 39.70 & 49.04 & 43.95 \\

& 0.75 & \firstckpt{13.33} & 57.50 & 13.33 & 51.90 & \bothmark{39.39} & 42.86 & 39.70 & 75.80 & 42.65 & 39.11 & 48.28 & 41.56 \\

& 1    & 10.00 & 62.50 & 20.00 & 49.37 & 33.84 & 47.14 & 38.69 & \firstckpt{78.80} & \firstckpt{43.01} & \firstckpt{42.37} & \firstckpt{49.84} & 42.57 \\

\cdashline{2-14}

& 1.25 & 16.67 & 62.50 & 23.33 & 49.37 & 32.83 & 45.24 & 39.70 & 77.80 & 41.54 & 41.93 & 49.26 & 43.09 \\

& 1.5  & 13.33 & 52.50 & 16.67 & 50.63 & 31.31 & 47.62 & 39.20 & 75.20 & 40.07 & 40.15 & 47.74 & 40.67 \\

& 1.75 & 10.00 & 62.50 & 23.33 & 54.43 & 26.77 & 47.62 & 40.20 & 78.60 & 41.54 & 41.78 & 49.22 & 42.68 \\

& 2    & 13.33 & 55.00 & 26.67 & 50.63 & 30.30 & 47.14 & 41.71 & 77.40 & 43.01 & 42.22 & 49.48 & 42.74 \\

& 2.25 & \best{20.00} & 50.00 & 26.67 & 51.90 & 32.83 & \best{50.95} & 41.21 & 78.60 & \best{44.12} & 41.04 & 50.11 & 43.73 \\

& 2.5  & 10.00 & 57.50 & 20.00 & 50.63 & 35.35 & 46.19 & 42.21 & 78.60 & 43.38 & 42.81 & 50.29 & 42.67 \\

& 2.75 & 10.00 & \best{67.50} & 33.33 & 50.63 & 38.38 & 47.14 & 41.21 & 77.40 & 43.01 & 41.48 & 50.20 & 45.01 \\

& 3    & 13.33 & 65.00 & \best{36.67} & 54.43 & 34.85 & 44.76 & 42.21 & 78.60 & 42.65 & 41.63 & 50.20 & \best{45.41} \\

& 3.25 & 13.33 & 55.00 & 30.00 & 54.43 & 33.33 & 42.86 & 42.71 & 78.60 & 41.54 & 41.04 & 49.35 & 43.28 \\

& 3.5  & 13.33 & 57.50 & 30.00 & 54.43 & 36.36 & 43.33 & \best{46.73} & 78.00 & 43.01 & \best{43.26} & 50.78 & 44.60 \\

& 3.75 & 16.67 & 57.50 & 30.00 & 51.90 & 36.36 & 48.10 & 44.72 & 77.80 & 41.18 & 42.37 & 50.47 & 44.66 \\

& 4    & 10.00 & 52.50 & 33.33 & 53.16 & 34.85 & 44.29 & 44.22 & 78.00 & 42.65 & 42.07 & 49.98 & 43.51 \\

& 4.25 & 13.33 & 50.00 & 33.33 & 54.43 & 32.83 & 44.76 & 45.23 & 78.20 & 42.65 & 41.63 & 49.89 & 43.64 \\

& 4.5  & 16.67 & 47.50 & 30.00 & 53.16 & 35.86 & 45.24 & 43.72 & 77.00 & 42.65 & 42.22 & 49.89 & 43.40 \\

& 4.75 & 16.67 & 47.50 & 30.00 & \best{55.70} & 36.87 & 46.19 & 40.70 & 78.40 & 43.38 & 41.78 & 50.16 & 43.72 \\

& 5    & 13.33 & 50.00 & 23.33 & 53.16 & 38.89 & 47.14 & 46.23 & \best{79.00} & 41.54 & 42.52 & \best{50.87} & 43.52 \\

\midrule

\multirow[c]{20}{*}{\textbf{Low}}
& 0.25 & \firstckpt{13.33} & 45.00 & 16.67 & 45.57 & 23.74 & 40.95 & 30.65 & 68.00 & 30.51 & 34.22 & 40.80 & 34.86 \\

& 0.5  & \firstckpt{13.33} & 45.00 & 6.67 & 55.70 & 23.74 & 42.86 & \firstckpt{31.16} & 71.40 & \firstckpt{33.82} & 37.48 & \firstckpt{43.39} & 36.12 \\

& 0.75 & 6.67 & \firstckpt{47.50} & 16.67 & \firstckpt{56.96} & \firstckpt{27.78} & 41.90 & 27.14 & 69.60 & 31.25 & 37.19 & 42.63 & 36.26 \\

& 1    & 10.00 & \firstckpt{47.50} & \firstckpt{20.00} & \firstckpt{56.96} & 23.74 & \firstckpt{44.29} & 28.14 & \firstckpt{71.80} & 30.15 & \firstckpt{37.93} & 43.26 & \firstckpt{37.05} \\

\cdashline{2-14}

& 1.25 & 16.67 & 52.50 & 30.00 & \best{62.03} & 33.33 & 46.67 & \best{37.69} & 74.40 & 37.13 & 40.30 & 47.83 & 43.07 \\

& 1.5  & 13.33 & 45.00 & 13.33 & 56.96 & 29.29 & 45.24 & 34.17 & 72.00 & 30.88 & 36.74 & 44.07 & 37.70 \\

& 1.75 & 16.67 & 62.50 & 26.67 & 56.96 & 25.25 & 49.05 & 34.17 & 74.80 & 33.82 & 40.00 & 46.57 & 41.99 \\

& 2    & 20.00 & 50.00 & 20.00 & 58.23 & 38.89 & 49.05 & 32.16 & 78.60 & 34.93 & 44.00 & 49.57 & 42.59 \\

& 2.25 & 20.00 & 62.50 & 23.33 & \best{62.03} & 38.38 & 51.90 & 33.17 & 78.20 & \best{37.87} & 42.81 & 50.20 & 45.02 \\

& 2.5  & 20.00 & 62.50 & 23.33 & 55.70 & 35.86 & \best{55.71} & 32.66 & 78.80 & 36.03 & 42.96 & 50.02 & 44.36 \\

& 2.75 & 13.33 & 60.00 & 20.00 & 55.70 & 35.35 & 49.52 & 32.66 & 77.60 & 33.82 & 40.44 & 47.92 & 41.84 \\

& 3    & 20.00 & 50.00 & 20.00 & 50.63 & 40.91 & 54.29 & 35.18 & 77.80 & 34.19 & 40.74 & 48.99 & 42.37 \\

& 3.25 & 16.67 & 57.50 & 23.33 & 50.63 & 35.86 & 53.81 & 33.67 & 79.60 & 36.03 & 42.52 & 49.66 & 42.96 \\

& 3.5  & 20.00 & 55.00 & 30.00 & 58.23 & 38.89 & 52.86 & 33.67 & 78.40 & 35.66 & 42.07 & 49.75 & 44.48 \\

& 3.75 & 20.00 & 47.50 & 30.00 & 54.43 & 39.90 & 55.24 & 34.67 & \best{80.60} & 36.40 & 43.11 & 50.78 & 44.18 \\

& 4    & 20.00 & 62.50 & \best{33.33} & 58.23 & 37.37 & 51.90 & 33.67 & 79.20 & 35.29 & 43.26 & 50.20 & 45.48 \\

& 4.25 & 16.67 & 60.00 & 20.00 & 56.96 & 39.90 & 50.48 & 34.17 & 79.20 & 34.56 & 44.15 & 50.20 & 43.61 \\

& 4.5  & \best{26.67} & 60.00 & 30.00 & 53.16 & \best{44.95} & 52.38 & 32.66 & 79.20 & 34.19 & 43.11 & 50.47 & 45.63 \\

& 4.75 & 23.33 & \best{65.00} & 26.67 & 55.70 & 43.43 & 53.81 & 36.18 & 80.00 & 33.46 & \best{44.30} & \best{51.32} & \best{46.19} \\

& 5    & 13.33 & 55.00 & 23.33 & 51.90 & 42.42 & 50.95 & 36.68 & 79.40 & 34.93 & 44.00 & 50.47 & 43.20 \\

\bottomrule

\end{tabular}%

}

\end{table*}

\begin{table*}[t]

\centering
\small
\setlength{\tabcolsep}{4pt}
\renewcommand{\arraystretch}{1.10}

\caption{
\textbf{Qwen3-8B results across training epochs under high- and low-likelihood data selection.}
\ \legendbox{firstcyan}\ \textbf{Cyan} marks the best value within the early stage (epochs 0.25--1),
\ \legendbox{bestyellow}\ \textbf{yellow} marks the best value across epochs 0.25--5 within each setting, and
\ \legendbox{bothgreen}\ \textbf{green} marks cells that satisfy both.
Dashed rules separate the earliest stage from later training.
}

\label{tab:qwen3_8b_high_low}

\resizebox{\textwidth}{!}{%

\begin{tabular}{llrrrrrrrrrrrr}

\toprule

\textbf{Setting} &
\textbf{Epoch} &
\textbf{AIME24} &
\textbf{AMC} &
\textbf{CNM} &
\textbf{GK} &
\textbf{GPQA} &
\textbf{GSM} &
\textbf{KY} &
\textbf{MATH} &
\textbf{MNV} &
\textbf{OB} &
\textbf{W.\ Avg} &
\textbf{Avg} \\

\midrule

\multirow[c]{20}{*}{\textbf{High}}
& 0.25 & \firstckpt{36.67} & 80 & 50 & 64.56 & \bothmark{78.79} & \firstckpt{68.10} & \bothmark{71.86} & \firstckpt{91.6} & 58.09 & 61.93 & \bothmark{70.98} & \firstckpt{66.16} \\

& 0.5  & 26.67 & \firstckpt{87.5} & 46.67 & 68.35 & 72.73 & 66.67 & 67.34 & 90.0 & \bothmark{58.46} & 63.56 & 70.17 & 64.79 \\

& 0.75 & 33.33 & 85 & 46.67 & 69.62 & 75.25 & \firstckpt{68.10} & 67.84 & 90.0 & 55.51 & \firstckpt{64.30} & 70.53 & 65.56 \\

& 1    & 26.67 & 82.5 & \bothmark{56.67} & \bothmark{74.68} & 76.77 & 67.62 & 62.31 & 91.0 & 56.99 & 62.67 & 70.22 & 65.79 \\

\cdashline{2-14}

& 1.25 & 36.67 & 85 & 43.33 & 73.42 & 72.73 & 68.57 & 63.82 & \best{92.0} & 55.88 & 62.52 & 70.09 & 65.39 \\

& 1.5  & 33.33 & 87.5 & 50.00 & 73.42 & 76.77 & 70.48 & 63.32 & 91.2 & 57.72 & 62.67 & 70.76 & \best{66.64} \\

& 1.75 & 26.67 & 85 & 43.33 & 69.62 & 74.75 & \best{72.86} & 67.84 & 90.0 & 55.88 & 63.70 & 70.67 & 64.97 \\

& 2    & 33.33 & 82.5 & 53.33 & 70.89 & 70.71 & 70.00 & 67.34 & 91.6 & 54.41 & \best{65.19} & 70.85 & 65.93 \\

& 2.25 & 33.33 & 80 & 53.33 & 68.35 & 72.73 & 65.24 & 65.33 & 91.8 & 55.88 & 63.70 & 70.04 & 64.97 \\

& 2.5  & 36.67 & 82.5 & 43.33 & 67.09 & 77.27 & 65.24 & 65.83 & 90.8 & 55.51 & 64.15 & 70.26 & 64.84 \\

& 2.75 & 30.00 & 82.5 & 53.33 & 68.35 & 76.26 & 67.14 & 63.32 & 91.4 & 55.88 & 62.96 & 70.04 & 65.12 \\

& 3    & 26.67 & \best{90.0} & 43.33 & 69.62 & 72.22 & 68.10 & 59.30 & 91.8 & 55.15 & 62.52 & 69.28 & 63.87 \\

& 3.25 & \best{40.00} & 85.0 & 43.33 & 67.09 & 75.25 & 69.05 & 62.81 & 91.4 & 56.62 & 64.59 & 70.67 & 65.51 \\

& 3.5  & 30.00 & \best{90.0} & 40.00 & 67.09 & 76.77 & 67.14 & 64.32 & 90.4 & 56.62 & 64.74 & 70.49 & 64.71 \\

& 3.75 & 30.00 & 82.5 & 50.00 & 65.82 & 71.21 & 67.14 & 66.33 & 91.8 & 55.51 & 62.96 & 69.77 & 64.33 \\

& 4    & 26.67 & \best{90.0} & 43.33 & 65.82 & 75.76 & 68.57 & 66.33 & 90.6 & 55.15 & 62.81 & 69.95 & 64.50 \\

& 4.25 & 33.33 & 85.0 & 43.33 & 67.09 & 75.25 & 67.62 & 65.33 & 90.6 & 54.41 & 63.85 & 70.00 & 64.58 \\

& 4.5  & 26.67 & 82.5 & 43.33 & 65.82 & 77.27 & 66.19 & 62.81 & 90.6 & 53.31 & 63.11 & 69.28 & 63.16 \\

& 4.75 & 26.67 & 87.5 & 43.33 & 68.35 & 75.25 & 68.10 & 62.81 & 91.4 & 55.88 & 64.00 & 70.22 & 64.33 \\

& 5    & 36.67 & 85.0 & 40.00 & 67.09 & 73.23 & 68.10 & 64.82 & 90.4 & 55.51 & 62.52 & 69.50 & 64.33 \\

\midrule

\multirow[c]{20}{*}{\textbf{Low}}
& 0.25 & 20.00 & 77.5 & 23.33 & 51.9 & \firstckpt{75.25} & 63.81 & 64.82 & 88.20 & 58.19 & 54.67 & 66.65 & 57.72 \\

& 0.5  & \firstckpt{26.67} & \firstckpt{80.00} & 40 & 62.03 & 71.72 & \firstckpt{65.24} & \firstckpt{65.33} & \firstckpt{90.00} & \firstckpt{58.46} & 57.19 & \firstckpt{74.65} & \firstckpt{62.62} \\

& 0.75 & 13.33 & 60.00 & \firstckpt{50.00} & \firstckpt{67.09} & 71.21 & 63.33 & 57.29 & 87.20 & 58.09 & 56.00 & 71.54 & 58.68 \\

& 1    & 20.00 & 67.5 & \firstckpt{50} & 63.29 & 73.23 & 64.76 & 62.81 & 89.20 & 57.35 & \firstckpt{58.96} & 73.87 & 61.35 \\

\cdashline{2-14}

& 1.25 & 26.67 & 77.5 & 50 & 67.09 & 75.76 & 66.67 & 58.29 & 90.20 & 55.15 & 58.96 & 74.96 & 64.02 \\

& 1.5  & 26.67 & 82.5 & 43.33 & 67.09 & 76.77 & 68.10 & 66.83 & 90.60 & 59.93 & 60.59 & 77.14 & 65.49 \\

& 1.75 & 26.67 & 75 & 40 & 69.62 & 78.28 & 66.19 & 63.32 & 90.60 & 58.09 & 62.52 & 76.05 & 63.71 \\

& 2    & 30 & 72.5 & 53.33 & 68.35 & 76.26 & 68.57 & 63.32 & 92.20 & 59.93 & 62.96 & 77.14 & 65.69 \\

& 2.25 & 33.33 & 82.5 & 53.33 & \best{70.89} & 75.76 & 70.48 & 65.33 & 91.20 & 57.35 & 62.81 & 77.68 & 67.85 \\

& 2.5  & 40.00 & 82.5 & \best{63.33} & 69.62 & 76.77 & 71.43 & 67.34 & 90.4 & 58.82 & 62.37 & 72.82 & 69.31 \\

& 2.75 & 40.00 & 82.5 & 53.33 & 65.82 & 76.26 & 69.52 & 64.82 & 91.8 & 58.46 & 61.78 & 72.16 & 67.32 \\

& 3    & \best{43.33} & 85 & 60 & 68.35 & 76.26 & 71.43 & 61.81 & 90.8 & 60.29 & 60.30 & 71.60 & 68.59 \\

& 3.25 & 36.67 & 82.5 & 53.33 & 69.62 & 79.29 & 69.05 & 65.83 & 91.4 & 59.93 & 64.44 & 73.43 & 68.01 \\

& 3.5  & 40.00 & 87.5 & 60.00 & 67.09 & 77.27 & \best{72.38} & 64.32 & 92.4 & 58.46 & 62.37 & \best{78.77} & 70.12 \\

& 3.75 & \best{43.33} & 90 & \best{63.33} & 67.09 & 75.25 & 71.43 & 63.32 & 91.8 & 58.82 & 63.85 & 78.15 & \best{70.69} \\

& 4    & 36.67 & 82.5 & 53.33 & 68.35 & \best{79.80} & 70.48 & 65.33 & 91.2 & 58.09 & 61.48 & 78.23 & 68.46 \\

& 4.25 & 26.67 & 82.5 & 60 & 67.09 & 74.24 & 70.00 & \best{69.85} & 92.2 & 58.46 & 63.56 & 78.23 & 67.82 \\

& 4.5  & 36.67 & \best{92.5} & 56.67 & 65.82 & 76.77 & 70.95 & 63.82 & 91.0 & \best{61.40} & 63.56 & 77.76 & 69.27 \\

& 4.75 & 36.67 & 82.5 & 60.00 & 64.56 & 75.25 & 69.05 & 64.82 & 92.8 & 58.82 & \best{64.59} & 77.76 & 68.21 \\

& 5    & 30 & 85 & \best{63.33} & 67.09 & 78.28 & 68.10 & 67.34 & \best{93.0} & 59.19 & 62.07 & 78.69 & 69.02 \\

\bottomrule

\end{tabular}%

}

\end{table*}

\begin{table*}[t]

\centering
\small
\setlength{\tabcolsep}{4pt}
\renewcommand{\arraystretch}{1.10}

\caption{
\textbf{Llama-3.2-3B results across training epochs under high- and low-likelihood data selection.}
\ \legendbox{firstcyan}\ \textbf{Cyan} marks the best value within the early stage (epochs 0.25--1),
\ \legendbox{bestyellow}\ \textbf{yellow} marks the best value across epochs 0.25--5 within each setting, and
\ \legendbox{bothgreen}\ \textbf{green} marks cells that satisfy both.
Dashed rules separate the earliest stage from later training.
}

\label{tab:llama32_3b_high_low}

\resizebox{\textwidth}{!}{%

\begin{tabular}{llrrrrrrrrrrrr}

\toprule

\textbf{Setting} &
\textbf{Epoch} &
\textbf{AIME24} &
\textbf{AMC} &
\textbf{CNM} &
\textbf{GK} &
\textbf{GPQA} &
\textbf{GSM} &
\textbf{KY} &
\textbf{MATH} &
\textbf{MNV} &
\textbf{OB} &
\textbf{W.\ Avg} &
\textbf{Avg} \\

\midrule

\multirow[c]{20}{*}{\textbf{High}}
& 0.25 & 3.75 & \firstckpt{23.12} & \firstckpt{5.00} & \firstckpt{3.48} & 18.75 & 1.43 & 1.57 & 43.95 & 17.10 & 14.67 & 18.95 & 13.28 \\

& 0.5 & 5.83 & 17.50 & 4.17 & 1.90 & 20.08 & \firstckpt{2.86} & 1.38 & 41.52 & \firstckpt{18.38} & 13.85 & 18.41 & 12.75 \\

& 0.75 & \firstckpt{6.25} & 21.25 & 3.75 & 3.32 & 20.20 & \firstckpt{2.86} & 1.07 & \firstckpt{44.35} & 17.97 & \firstckpt{15.43} & \firstckpt{19.57} & \firstckpt{13.64} \\

& 1 & \firstckpt{6.25} & 18.44 & 2.92 & 3.01 & \firstckpt{23.42} & 2.74 & \firstckpt{3.14} & 44.12 & 17.78 & 14.33 & 19.55 & 13.62 \\

\cdashline{2-14}

& 1.25 & 7.50 & 26.25 & 4.17 & 3.16 & 20.83 & 2.08 & 1.76 & 46.65 & 18.98 & 16.15 & 20.58 & 14.75 \\

& 1.5 & 6.25 & 24.38 & 3.75 & 6.65 & 25.06 & 4.29 & 3.77 & 46.38 & 18.29 & 16.35 & 21.32 & 15.52 \\

& 1.75 & \best{8.33} & \best{29.06} & 2.92 & 5.06 & 22.60 & 4.82 & 2.58 & 46.95 & 20.04 & 16.46 & 21.47 & 15.88 \\

& 2 & 7.08 & 27.19 & 3.75 & 5.38 & 25.69 & \best{4.94} & 3.64 & 46.17 & 19.81 & 16.72 & 21.70 & 16.04 \\

& 2.25 & 5.00 & 23.75 & 2.92 & 5.85 & 25.06 & 3.63 & 3.71 & 45.85 & 18.93 & 16.54 & 21.20 & 15.12 \\

& 2.5 & 6.67 & 27.81 & 4.58 & \best{7.28} & \best{26.26} & 3.45 & 3.33 & 47.30 & 19.26 & 17.19 & 21.99 & \best{16.31} \\

& 2.75 & 7.92 & 28.12 & 1.25 & 5.54 & 25.82 & 4.29 & \best{4.65} & 47.73 & 20.54 & 16.15 & \best{22.00} & 16.20 \\

& 3 & \best{8.33} & 25.31 & \best{5.42} & 5.38 & 24.87 & 3.10 & 3.52 & 47.60 & 19.03 & 16.35 & 21.56 & 15.89 \\

& 3.25 & 4.58 & 27.50 & 5.00 & 3.48 & 21.78 & 1.73 & 2.45 & 46.40 & 19.81 & 17.22 & 21.06 & 14.99 \\

& 3.5 & 5.42 & 27.50 & 5.00 & 3.96 & 24.37 & 3.93 & 2.70 & 47.93 & 20.08 & 16.81 & 21.80 & 15.77 \\

& 3.75 & 7.08 & 26.25 & 2.92 & 3.64 & 24.56 & 1.85 & 3.64 & 46.40 & 19.44 & \best{17.56} & 21.47 & 15.33 \\

& 4 & 7.92 & 28.75 & 4.17 & 4.75 & 25.69 & 3.63 & 4.08 & 47.60 & 18.89 & 17.17 & 21.98 & 16.26 \\

& 4.25 & 7.92 & 26.25 & 5.00 & 3.48 & 23.74 & 2.50 & 2.95 & \best{48.80} & 20.27 & 16.93 & 21.88 & 15.78 \\

& 4.5 & 6.25 & 27.19 & 2.50 & 3.16 & 21.53 & 2.50 & 2.83 & 47.70 & \best{20.77} & 16.61 & 21.34 & 15.10 \\

& 4.75 & 7.92 & 25.31 & 3.33 & 3.16 & 23.86 & 3.04 & 2.89 & 47.35 & 19.94 & 17.30 & 21.64 & 15.41 \\

& 5 & 6.25 & 25.31 & 5.00 & 3.96 & 22.85 & 2.56 & 2.26 & 47.35 & 19.21 & 17.26 & 21.37 & 15.20 \\

\midrule

\multirow[c]{20}{*}{\textbf{Low}}
& 0.25 & 2.08 & 15.31 & \firstckpt{2.50} & 1.27 & \firstckpt{16.41} & 1.01 & 0.57 & 37.52 & 16.45 & 12.15 & 16.06 & 10.53 \\

& 0.5 & 1.67 & \firstckpt{18.44} & 0.83 & \firstckpt{3.16} & 15.28 & \firstckpt{1.85} & \firstckpt{1.51} & 42.80 & 16.68 & 12.70 & 17.59 & 11.49 \\

& 0.75 & \firstckpt{3.33} & 15.31 & 0.83 & 1.74 & 15.34 & 0.30 & 0.75 & 40.20 & 16.77 & 11.54 & 16.38 & 10.61 \\

& 1 & 1.25 & \firstckpt{18.44} & 1.25 & 2.53 & 15.59 & 1.13 & 1.13 & \firstckpt{44.47} & \firstckpt{17.37} & \firstckpt{13.22} & \firstckpt{18.11} & \firstckpt{11.64} \\

\cdashline{2-14}

& 1.25 & 2.92 & 18.12 & 2.08 & 2.53 & 18.56 & 1.19 & \best{1.57} & 45.02 & 18.57 & 14.30 & 19.04 & 12.49 \\

& 1.5 & 4.17 & 22.50 & 2.08 & 3.48 & 17.87 & 0.95 & 1.13 & 47.10 & 20.27 & 15.02 & 19.94 & 13.46 \\

& 1.75 & 4.58 & 17.19 & 2.08 & 1.74 & 16.73 & 0.54 & 1.01 & 44.40 & 17.83 & 13.46 & 18.27 & 11.96 \\

& 2 & 4.17 & 17.19 & 2.50 & 3.32 & 16.73 & 0.89 & 0.57 & 47.35 & 19.21 & 15.57 & 19.78 & 12.75 \\

& 2.25 & 5.00 & 21.25 & 2.08 & 4.75 & 19.70 & 0.95 & 1.07 & 47.85 & 19.99 & 16.39 & 20.68 & 13.90 \\

& 2.5 & 4.17 & 20.00 & 2.08 & 2.22 & 17.99 & 0.95 & 1.01 & 46.08 & 19.44 & 15.72 & 19.73 & 12.97 \\

& 2.75 & 3.75 & 19.38 & 2.08 & 3.01 & 17.61 & 1.73 & 1.32 & 47.17 & \best{20.82} & 15.91 & 20.28 & 13.28 \\

& 3 & 5.42 & \best{24.38} & 0.00 & 4.59 & 19.26 & 1.55 & \best{1.57} & 46.88 & 20.68 & 16.11 & 20.55 & 14.04 \\

& 3.25 & 6.25 & 18.75 & 0.83 & 4.27 & 17.11 & 1.55 & 1.01 & 46.90 & 20.13 & 16.04 & 20.14 & 13.28 \\

& 3.5 & 5.83 & 20.00 & 2.08 & \best{7.12} & 19.95 & 1.55 & 0.63 & 46.88 & 19.03 & 16.11 & 20.37 & 13.92 \\

& 3.75 & 6.25 & 22.81 & 0.83 & 4.11 & 19.63 & \best{1.90} & 0.82 & 47.40 & 20.45 & 16.04 & 20.59 & 14.03 \\

& 4 & \best{7.50} & 22.81 & 2.92 & 4.59 & 17.93 & 1.79 & 1.07 & 46.88 & 19.44 & 15.98 & 20.26 & 14.09 \\

& 4.25 & 5.42 & 20.00 & \best{3.75} & 4.27 & 19.89 & 1.43 & 1.01 & 48.02 & 18.52 & 15.96 & 20.45 & 13.83 \\

& 4.5 & 7.08 & 19.38 & 1.67 & 4.59 & \best{20.52} & 1.13 & 0.50 & \best{48.12} & 19.07 & 16.19 & 20.59 & 13.82 \\

& 4.75 & 6.25 & 20.94 & \best{3.75} & 5.54 & 19.38 & 1.25 & 1.38 & 48.08 & 19.53 & \best{16.63} & \best{20.84} & \best{14.27} \\

& 5 & 6.25 & 20.62 & 1.67 & 4.91 & 17.93 & 1.19 & 1.19 & 47.33 & 19.58 & 15.52 & 20.13 & 13.62 \\

\bottomrule

\end{tabular}%

}

\end{table*}

\begin{table*}[t]

\centering
\small
\setlength{\tabcolsep}{4pt}
\renewcommand{\arraystretch}{1.10}

\caption{
\textbf{Llama-3.1-8B results across training epochs under high- and low-likelihood data selection.}
\ \legendbox{firstcyan}\ \textbf{Cyan} marks the best value within the early stage (epochs 0.25--1),
\ \legendbox{bestyellow}\ \textbf{yellow} marks the best value across epochs 0.25--5 within each setting, and
\ \legendbox{bothgreen}\ \textbf{green} marks cells that satisfy both.
Dashed rules separate the earliest stage from later training.
}

\label{tab:llama31_8b_high_low}

\resizebox{\textwidth}{!}{%

\begin{tabular}{llrrrrrrrrrrrr}

\toprule

\textbf{Setting} &
\textbf{Epoch} &
\textbf{AIME24} &
\textbf{AMC} &
\textbf{CNM} &
\textbf{GK} &
\textbf{GPQA} &
\textbf{GSM} &
\textbf{KY} &
\textbf{MATH} &
\textbf{MNV} &
\textbf{OB} &
\textbf{W.\ Avg} &
\textbf{Avg} \\

\midrule

\multirow[c]{20}{*}{\textbf{High}}
& 0.25 & \firstckpt{4.58} & 19.38 & 2.92 & 3.48 & 28.85 & 6.61 & 1.38 & 43.80 & \firstckpt{23.81} & \firstckpt{14.74} & 21.04 & 14.95 \\

& 0.5 & 2.92 & 17.19 & 1.25 & 5.38 & 29.10 & \firstckpt{10.54} & \firstckpt{4.65} & \firstckpt{43.85} & 21.97 & 13.74 & 21.19 & 15.06 \\

& 0.75 & 2.92 & 17.19 & \firstckpt{3.75} & 3.32 & 27.65 & 5.60 & 3.08 & 43.08 & 21.28 & 13.46 & 20.07 & 14.13 \\

& 1 & 2.92 & \firstckpt{20.00} & 1.67 & \firstckpt{5.54} & \firstckpt{33.40} & 7.38 & 3.64 & 43.15 & 22.15 & 14.65 & \firstckpt{21.38} & \firstckpt{15.45} \\

\cdashline{2-14}

& 1.25 & 5.00 & 21.56 & 4.17 & 9.18 & 36.24 & 12.20 & 5.09 & 46.50 & 23.58 & 17.04 & 24.08 & 18.05 \\

& 1.5 & 4.58 & 18.12 & 2.92 & 6.33 & 36.81 & 11.19 & 5.28 & 48.65 & 25.00 & 15.63 & 24.10 & 17.45 \\

& 1.75 & 2.50 & 19.69 & 1.67 & 7.12 & 35.73 & 10.18 & 5.90 & 47.90 & 22.61 & 16.35 & 23.73 & 16.97 \\

& 2 & 5.42 & 22.81 & 3.33 & 7.59 & 35.48 & 10.60 & 5.15 & 49.73 & 23.16 & 17.93 & 24.77 & 18.12 \\

& 2.25 & 5.00 & 19.06 & 2.92 & 12.34 & 35.04 & 13.39 & \best{7.04} & 48.20 & 22.98 & 16.76 & 24.54 & 18.27 \\

& 2.5 & 5.83 & 22.19 & 2.50 & 13.13 & 31.82 & 15.48 & 6.28 & 49.40 & 23.12 & 18.35 & 25.24 & 18.81 \\

& 2.75 & 5.42 & \best{25.31} & 1.25 & 11.71 & 35.16 & 14.94 & 6.72 & 48.30 & 23.71 & 17.57 & 25.10 & 19.01 \\

& 3 & 5.83 & 23.75 & 2.92 & 12.66 & 35.86 & 13.39 & 5.97 & 51.25 & 22.52 & \best{19.24} & 26.00 & 19.34 \\

& 3.25 & 3.33 & 18.44 & 3.75 & \best{15.03} & 34.72 & 15.42 & 6.91 & 50.10 & \best{25.09} & 18.20 & 25.88 & 19.10 \\

& 3.5 & \best{8.33} & 23.12 & 2.92 & 14.24 & 35.48 & \best{16.19} & 5.72 & 50.48 & 23.39 & 18.57 & 26.01 & 19.84 \\

& 3.75 & 5.42 & 22.19 & 5.00 & \best{15.03} & \best{38.57} & 14.70 & 6.09 & 50.65 & 24.08 & \best{19.24} & \best{26.51} & \best{20.10} \\

& 4 & 4.58 & 23.44 & 6.25 & 14.40 & 35.80 & 15.30 & 6.97 & 50.40 & 23.81 & 18.65 & 26.13 & 19.96 \\

& 4.25 & 5.42 & 23.12 & \best{6.67} & 14.87 & 34.91 & 14.40 & 6.34 & 50.50 & 23.16 & 18.72 & 25.91 & 19.81 \\

& 4.5 & 5.83 & 24.06 & 5.83 & 13.61 & 36.81 & 13.04 & 5.46 & \best{51.28} & 24.63 & 18.44 & 26.10 & 19.90 \\

& 4.75 & 4.58 & 22.19 & 5.42 & 13.92 & 36.81 & 14.76 & 6.28 & 51.25 & 23.30 & 18.67 & 26.19 & 19.72 \\

& 5 & 5.00 & 24.06 & 3.75 & 14.72 & 35.98 & 15.36 & 5.84 & 50.98 & 24.72 & 18.24 & 26.16 & 19.87 \\

\midrule

\multirow[c]{20}{*}{\textbf{Low}}
& 0.25 & \firstckpt{2.92} & 14.37 & 0.83 & 6.33 & 26.64 & 8.81 & 3.39 & 42.62 & \firstckpt{23.71} & 14.50 & 20.84 & 14.41 \\

& 0.5 & 1.25 & 11.88 & 1.67 & 6.96 & 26.33 & \firstckpt{9.58} & \firstckpt{4.65} & 41.70 & 22.06 & 13.54 & 20.26 & 13.96 \\

& 0.75 & 2.08 & 14.06 & \firstckpt{2.50} & \firstckpt{7.59} & 30.05 & 9.29 & 4.02 & 43.38 & 22.33 & 14.46 & 21.28 & 14.98 \\

& 1 & 1.67 & \firstckpt{17.19} & 1.67 & 6.49 & \firstckpt{30.43} & 8.63 & 2.51 & \firstckpt{46.80} & 22.89 & \firstckpt{15.33} & \firstckpt{22.22} & \firstckpt{15.36} \\

\cdashline{2-14}

& 1.25 & 3.75 & 19.38 & 3.75 & 13.92 & 33.96 & 13.04 & 6.22 & 48.62 & 25.92 & 16.33 & 24.71 & 18.49 \\

& 1.5 & 4.17 & 20.94 & 4.17 & 11.71 & 33.08 & 13.45 & 4.96 & 48.98 & 25.41 & 17.81 & 24.99 & 18.47 \\

& 1.75 & 3.33 & 21.88 & 3.33 & 9.34 & 32.32 & 10.24 & 4.27 & 48.08 & 24.40 & 16.41 & 23.72 & 17.36 \\

& 2 & 3.33 & 21.88 & 1.25 & 6.96 & 30.81 & 10.42 & 3.64 & 50.95 & 25.41 & 17.11 & 24.41 & 17.18 \\

& 2.25 & 2.08 & 21.25 & 4.17 & 11.23 & 35.29 & 13.21 & 3.89 & 50.12 & 25.23 & 18.56 & 25.49 & 18.50 \\

& 2.5 & 5.42 & 21.25 & 1.67 & 13.61 & 36.30 & 15.48 & \best{6.47} & 51.25 & 25.23 & 19.26 & 26.58 & 19.59 \\

& 2.75 & 3.75 & 23.75 & 2.50 & 13.13 & 36.17 & 14.40 & 4.90 & 51.80 & 24.54 & 17.83 & 25.95 & 19.28 \\

& 3 & 5.83 & 27.50 & 2.50 & 12.50 & 35.98 & 16.19 & 6.16 & 51.42 & 25.51 & 18.54 & 26.53 & 20.21 \\

& 3.25 & 4.58 & 25.00 & 4.17 & 13.61 & 35.23 & 16.85 & 5.53 & 51.32 & 24.63 & 19.04 & 26.49 & 20.00 \\

& 3.5 & \best{7.50} & 21.88 & 2.92 & 14.56 & \best{37.18} & 17.02 & 4.77 & 52.52 & 26.15 & 18.59 & 26.94 & 20.31 \\

& 3.75 & 5.42 & 23.75 & 5.00 & 17.41 & 35.23 & 17.50 & 5.21 & \best{52.95} & 26.19 & 19.52 & \best{27.36} & 20.82 \\

& 4 & 5.00 & 23.12 & 4.58 & 14.56 & 34.72 & 17.14 & 5.53 & 52.45 & 26.33 & 18.69 & 26.84 & 20.21 \\

& 4.25 & 2.50 & 22.50 & 2.50 & \best{19.62} & 35.23 & 17.80 & 4.71 & 51.78 & \best{26.84} & 19.70 & 27.20 & 20.32 \\

& 4.5 & 4.17 & \best{27.81} & \best{6.25} & 13.77 & \best{37.18} & 17.14 & 5.09 & 52.80 & 25.46 & 19.63 & 27.35 & \best{20.93} \\

& 4.75 & 4.17 & 25.94 & 3.75 & 17.56 & 33.52 & 17.02 & 5.21 & 52.92 & 25.23 & 19.56 & 27.07 & 20.49 \\

& 5 & 4.17 & 22.19 & 4.58 & 15.19 & 34.28 & \best{18.15} & 5.34 & 51.30 & 25.97 & \best{20.15} & 27.02 & 20.13 \\

\bottomrule

\end{tabular}%

}

\end{table*}